\documentclass[11pt]{article}

\usepackage[letterpaper,margin=1in]{geometry}
\usepackage{amsmath,amssymb,amsthm,mathtools,bm}
\usepackage{algorithm}
\usepackage{algorithmic}
\usepackage{booktabs}
\usepackage{xcolor}
\usepackage{kotex}
\usepackage{graphicx}
\usepackage[numbers,sort&compress]{natbib}
\usepackage[colorlinks=true,linkcolor=blue,citecolor=blue,urlcolor=blue]{hyperref}
\usepackage[nameinlink,noabbrev]{cleveref}

\graphicspath{{figures/}}
\DeclareGraphicsExtensions{.eps}

\newcommand{\R}{\mathbb{R}}
\newcommand{\E}{\mathbb{E}}
\newcommand{\Prob}{\mathbb{P}}
\newcommand{\Sset}{\mathcal{S}}
\newcommand{\Aset}{\mathcal{A}}
\newcommand{\Vsub}{\mathcal{V}}
\newcommand{\Asub}{\mathcal{A}_0}
\newcommand{\Hfam}{\mathcal{H}}
\newcommand{\one}{\mathbf{1}}
\newcommand{\co}{\operatorname{co}}
\newcommand{\range}{\operatorname{range}}
\newcommand{\diag}{\operatorname{diag}}
\newcommand{\col}{\operatorname{col}}
\newcommand{\argmax}{\operatorname*{arg\,max}}
\newcommand{\norm}[1]{\left\lVert #1 \right\rVert}

\newtheorem{definition}{Definition}[section]
\newtheorem{lemma}{Lemma}[section]
\newtheorem{proposition}{Proposition}[section]
\newtheorem{theorem}{Theorem}[section]
\newtheorem{remark}{Remark}[section]
\newtheorem{example}{Example}[section]

\crefname{algorithm}{Algorithm}{Algorithms}
\Crefname{algorithm}{Algorithm}{Algorithms}
\crefname{appendix}{Appendix}{Appendices}
\Crefname{appendix}{Appendix}{Appendices}

\title{Spectral Analysis of Dueling Q-Learning}
\author{Donghwan Lee\\
\small Department of Electrical Engineering, Korea Advanced Institute of Science and Technology (KAIST)\\
\small Daejeon 34141, South Korea (email: \texttt{donghwan@kaist.ac.kr})}
\date{}

\begin{document}
\maketitle

\begin{abstract}
Q-learning is a fundamental algorithm in reinforcement learning (RL) for solving discounted Markov decision processes (MDPs) when the transition kernel is unknown. The deep Q-network (DQN) extends Q-learning by using a deep neural network for Q-function approximation, which makes Q-learning applicable to more practical high-dimensional problems. Dueling Q-learning decomposes the Q-function into a value function and an advantage function and learns the two components jointly, which can improve learning efficiency. However, the theoretical understanding of dueling Q-learning is still limited. Recent work has initiated an analysis of tabular dueling Q-learning, but existing guarantees focus on a regularized formulation and leave the pure tabular update less completely understood. This paper strengthens that line of analysis by adding a direct interpretation of the centered tabular decomposition and by establishing convergence guarantees for the unregularized, unprojected constant step-size recursion. In particular, we derive an exact switching linear system representation for deterministic dueling Q-learning and a finite-time error bound in expectation for the sampled stochastic version. The analysis clarifies how the value and advantage updates act as different gains on the action-common (value function) and action-differential (advantage function) components of the Q-function.
\end{abstract}

\section{Introduction}
Q-learning~\cite{watkins1992q} is a foundational algorithm in reinforcement learning (RL)~\cite{sutton1998reinforcement} for solving discounted Markov decision processes (MDPs)~\cite{puterman1994markov} with unknown transition kernels. The deep Q-network (DQN)~\cite{mnih2015humanlevel} extends Q-learning by using a deep neural network for Q-function approximation, which makes value-based RL applicable to high-dimensional problems in which a tabular representation is not practical. The dueling network architecture~\cite{wang2016dueling} further modifies DQN by separating the Q-function approximation into a value stream and an advantage stream. This decomposition can improve learning efficiency because the value component can share state-wise information across actions while the advantage component captures action-dependent deviations. Despite its empirical usefulness, the theoretical understanding of dueling Q-learning is still much less complete than that of standard tabular Q-learning.

A recent study of action-value temporal-difference methods~\cite{daley2025analysis} that learn state values formalized tabular versions of dueling methods and introduced regularized dueling Q-learning. That work clarified important aspects of dueling Q-learning (also called AV-learning) and provided a theoretical solution analysis for a regularized formulation of dueling Q-learning. However, the pure tabular dueling Q-learning recursion, without a regularization term or projection step, still leaves room for a more direct convergence analysis. The present paper addresses this gap by analyzing the unregularized dueling Q-learning recursion with constant step-sizes and by deriving a finite-time error bound in expectation for its sampled stochastic version.

The analysis begins by interpreting dueling Q-learning through an orthogonal common and differential decomposition of the tabular Q-function. For a tabular Q-function $Q$, define the state-wise mean
\begin{align*}
 \bar V_Q(s):=\frac{1}{|\Aset|}\sum_{b\in\Aset}Q(s,b),
\end{align*}
and identify this state vector with its action-repeated lift
\begin{align*}
 (\Pi Q)(s,a):=\bar V_Q(s),\qquad s\in\Sset,\ a\in\Aset.
\end{align*}
Thus $\Pi Q$ is the projection of $Q$ onto the subspace of vectors that are constant across actions in each state. The action-differential component is the complementary centered projection
\begin{align*}
 A(s,a)= Q(s,a)- (\Pi Q)(s,a)=Q(s,a)-\frac{1}{|\Aset|}\sum_{b\in\Aset}Q(s,b).
\end{align*}
Therefore, $Q$ is written as the sum of two components, denoted by $V$ and $A$, where $V=\Pi Q$ is the projected action-common component and $A$ is the centered action-differential component. In this paper, $V$ and $A$ are component coordinates induced by $Q$; they are motivated by, but are not identified with, the conventional policy-dependent value and advantage functions unless explicitly stated. This interpretation leads to a deterministic update in which the Bellman residual is multiplied by different gains on the two subspaces: the value update uses the action-common gain $|\Aset|\alpha$, whereas the advantage update uses the action-differential gain $\beta$. Based on this decomposition, we develop a centered AV-learning update (also dueling Q-learning) and prove a convergence condition for it.

The algorithm studied here is close to, but not identical in presentation to, the AV-learning update discussed by Daley et al.~\cite{daley2025analysis}. Their formulation uses the same scalar gain for the value and advantage updates and maintains an advantage variable whose state-wise mean does not affect the reconstructed Q-function. In contrast, we work with the centered representative $A=\Pi_\perp Q$, allow two gains $\alpha$ and $\beta$, and keep the identity $Q=V+A$ throughout the analysis. We also introduce a Q-only version of the same recursion. The centered AV-learning update and the Q-only update are shown to generate exactly the same Q-iterates under the centered identification.

We first study a deterministic version of the recursion to isolate the main mechanism, and then we treat the sampled stochastic RL version. The analysis uses switching linear system (SLS) theory from control~\cite{liberzon2003switching,lin2009stability,shorten2007stability}. An SLS is a dynamical model in which one matrix from a prescribed family is applied at each iteration, and the active matrix may change over time according to a switching signal. Its worst-case exponential growth rate is characterized by the joint spectral radius (JSR)~\cite{rota1960note,jungers2009joint}, which is the asymptotic maximal growth rate of finite products of matrices from the switching family. If the JSR is less than one, the SLS is exponentially stable under arbitrary switching. We express the deterministic dueling Q-learning (or centered AV-learning) error recursion as an SLS and obtain convergence from a JSR condition. For the sampled recursion with constant step-sizes, the same construction yields a finite-time error bound in expectation under a JSR condition. The conclusion is convergence to a first-moment neighborhood of the optimal Q-function, and the size of this neighborhood goes to zero as the common scalar gain goes to zero. This paper treats only the independent and identically distributed (i.i.d.) sampling case to keep the formulas transparent. The same conditional-mean and noise decomposition can be combined with Markovian-observation stochastic-approximation arguments, as in \cite{lee2026lyapunovcertified}, to extend the setting beyond i.i.d. samples.

\section{Related Work}\label{sec:related_works}
This section positions the present paper relative to existing value--advantage and value-augmented action-value methods. Baird's advantage updating~\cite{baird1994advantage} is an early approach that stores state values and state-action advantages separately. It is conceptually related to the decomposition used here because it separates a state-common quantity from action-dependent deviations. However, advantage updating is not identical to the centered AV-learning (dueling Q-learning) analyzed in this paper, and it did not establish a theoretical convergence analysis for the centered recursion studied here.

Wang et al.~\cite{wang2016dueling} introduced the dueling network architecture for deep Q-networks (DQNs), in which separate value and advantage streams are combined to produce Q-values. That work motivates the value--advantage parameterization and demonstrates empirical benefits in deep reinforcement learning, but it does not analyze convergence of the corresponding tabular recursion.

A related line of work learns state values together with action values. QV($\lambda$)-learning~\cite{wiering2005qvlambda} and the QV family~\cite{wiering2009qvfamily} maintain Q-values together with a separate state-value function and include both on-policy and off-policy variants. These algorithms show that augmenting action-value learning with state-value estimates can improve empirical behavior. Their mechanism is nevertheless different from the present one: QV methods use separate estimates for the Q-function and the value function in their bootstrap targets, whereas the recursion studied here reconstructs $Q=V+A$ through an orthogonal common/differential decomposition. Moreover, these QV studies are primarily empirical and do not provide the centered projection-based convergence analysis pursued here.

More recent work has made the connection between value--advantage learning and dueling architectures explicit. Tang et al.~\cite{tang2023valearning} proposed VA-learning, which directly learns a value function and an advantage function and relates the resulting update to dueling architectures. Daley et al.~\cite{daley2025analysis} analyzed action-value temporal-difference methods that learn state values, formalized AV-learning and QV-learning families, derived a tabular dueling Q-learning update, and introduced regularized dueling Q-learning. These papers are the closest to the present algorithmic setting. However, the algorithmic structure in~\cite{tang2023valearning} differs from the centered recursion considered here. The formulation in~\cite{daley2025analysis} is closer, but it uses regularization to handle the dueling decomposition, whereas the present analysis establishes convergence for the unregularized and unprojected centered recursion. In addition, we provide a finite-time error bound in expectation for the sampled recursion and give a complementary SLS and joint spectral radius (JSR) interpretation of the value--advantage dynamics. The analysis of Daley et al.~\cite{daley2025analysis} does not establish the same SLS/JSR interpretation or a convergence guarantee for the unregularized centered recursion considered here.

\section{Preliminaries}\label{sec:preliminaries}
\subsection{Notation}\label{sec:notation}
The set of real numbers is denoted by $\R$; $\R^m$ is the $m$-dimensional Euclidean space; and $\R^{m\times n}$ is the set of all $m\times n$ real matrices. For a matrix $A$, $A^\top$ denotes its transpose. The identity matrix is denoted by $I$. For a finite state set $\Sset$ and a finite action set $\Aset$, $e_s$ and $e_a$ denote the standard basis vectors associated with $s\in\Sset$ and $a\in\Aset$, respectively. For the state-action index $i=(s,a)$, write $e_i=e_a\otimes e_s$ in the action-block ordering. More generally, standard basis vectors in other Euclidean spaces use the same notation when the dimension is clear from context, and $\otimes$ denotes the Kronecker product. For a finite set $\Sset$, $|\Sset|$ denotes its cardinality. For finite tabular state and action sets, set $n:=|\Sset||\Aset|$. Moreover, we write $\Delta_m:=\left\{q\in\R^m:q_i\ge 0,\ \sum_{i=1}^m q_i=1\right\}$ for the probability simplex in $\R^m$. For a finite matrix family $\Hfam=\{A_1,\ldots,A_N\}$, $\co(\Hfam):=\left\{\sum_{i=1}^N \lambda_i A_i:\lambda_i\ge 0,\ \sum_{i=1}^N \lambda_i=1\right\}$ denotes its convex hull. We also use standard matrix notation that appears repeatedly below. For a vector $x$, $\norm{x}_2$ is the Euclidean norm. For a square matrix $A$, $\rho(A)$ denotes its standard spectral radius. After the matrix-family radius is introduced below, the notation $\rho(\Hfam)$ is used for that quantity when the argument is a switching family. For a matrix $B$, $\range(B)$ denotes its column space, $B\succ0$ means that $B$ is symmetric positive definite, and $\lambda_{\max}(B)$ denotes the largest eigenvalue when $B$ is symmetric. For square matrices $A$ and $B$ of the same size, we write $A\sim B$ when they are similar, that is, when $B=S^{-1}AS$ for some nonsingular matrix $S$. Expectations are denoted by $\E[\cdot]$.

\subsection{Switched linear systems}\label{sec:switching_systems}
The stability certificates used later are stated in the language of switching systems \cite{liberzon2003switching,lin2009stability,shorten2007stability}. Therefore, we first recall the basic switching system model before specializing it to the Bellman-induced switching system model. Consider the discrete-time switching affine system (SAS)
\[
    x_{k+1}=A_{\sigma_k}x_k+b_{\sigma_k},
\]
where each index $i\in\{1,2,\ldots,M\}$, equivalently each affine pair $(A_i,b_i)$, is called a mode, and $\sigma_k$ is the switching signal that selects the active mode at time $k$. The matrix $A_{\sigma_k}$ is selected from the prescribed family $\Hfam:=\{A_1,A_2,\ldots,A_M\}$, which is called a switching family; $b_{\sigma_k}$ is a mode-dependent affine term. When $b_{\sigma_k}=0$, the deterministic part reduces to an SLS, $x_{k+1}=A_{\sigma_k}x_k$. An SLS is exponentially stable under arbitrary switching if there exist constants $C\ge 1$ and $\eta\in(0,1)$ such that $\norm{A_{\sigma_{k-1}}\cdots A_{\sigma_0}x}_2\le C\eta^k\norm{x}_2$ for every horizon $k\ge0$, every initial state $x\in\R^m$, and every switching sequence. The worst-case exponential rate of the SLS family is characterized by the joint spectral radius (JSR) \cite{rota1960note,jungers2009joint}, defined as follows.

\begin{definition}\label{def:jsr}
For a bounded set of matrices $\Hfam\subset\R^{m\times m}$, its JSR is
\[
    \rho(\Hfam):=\lim_{k\to\infty}\sup_{A_1,\ldots,A_k\in\Hfam}\norm{A_k\cdots A_1}^{1/k}.
\]
\end{definition}

The JSR is independent of the chosen submultiplicative norm \cite{rota1960note,jungers2009joint}. When $\Hfam$ is finite, the supremum for each fixed product length is a maximum over products generated by matrices in $\Hfam$. For a finite family $\Hfam$, the notation $\rho(\co(\Hfam))$ means the JSR computed when each factor in a product is allowed to be any convex combination of matrices in $\Hfam$.

\subsection{JSR and Lyapunov certificates}\label{sec:joint_spectral_radius}
This subsection introduces the Lyapunov viewpoint used to analyze convergence of the algorithms below. A common Lyapunov function for $\Hfam$ is a positive definite function that decreases along every mode. The following finite-family piecewise-quadratic construction \cite{lee2026lyapunovcertified,hushenzhang2010generating} is the Lyapunov certificate used in the deterministic arguments.

\begin{lemma}\label{lem:common_lyapunov_construction}
Let $\Hfam=\{A_1,A_2,\ldots,A_M\}\subset\R^{m\times m}$ and fix $\varepsilon>0$ such that $\beta_\varepsilon:=\rho(\Hfam)+\varepsilon\in(0,1)$. For a word $\sigma=(\sigma_1,\ldots,\sigma_k)\in\{1,\ldots,M\}^k$, write
\[
    A_\sigma:=A_{\sigma_k}\cdots A_{\sigma_1},
\]
with the convention that the empty word gives $A_\sigma=I$. Define
\[
    V_\varepsilon^\infty(x):=\sum_{k=0}^{\infty}\beta_\varepsilon^{-2k}\max_{\sigma\in\{1,\ldots,M\}^k}\norm{A_\sigma x}_2^2,
    \qquad x\in\R^m.
\]
Then $V_\varepsilon^\infty$ is finite for every $x$, and there exists $C_\varepsilon>0$ such that
\[
    \norm{x}_2^2\le V_\varepsilon^\infty(x)\le C_\varepsilon\norm{x}_2^2,\qquad \forall x\in\R^m.
\]
The function $p_\varepsilon(x):=\sqrt{V_\varepsilon^\infty(x)}$ is a norm on $\R^m$, and every mode satisfies
\[
    p_\varepsilon(A_i x)\le \beta_\varepsilon p_\varepsilon(x),\qquad \forall x\in\R^m,
    \qquad i=1,
    \ldots,M.
\]
\end{lemma}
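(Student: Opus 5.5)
The plan is to use the definition of the JSR to obtain a uniform geometric bound on all products of length $k$. Then $V_\varepsilon^\infty$ becomes a convergent series dominated by a geometric series. The mode-wise contraction follows from a shift-of-index argument.

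\textbf{Finiteness and the upper bound.} Set $\hat\beta:=\rho(\Hfam)+\varepsilon/2$, so that $\rho(\Hfam)<\hat\beta<\beta_\varepsilon$. By \Cref{def:jsr}, which uses the spectral norm (submultiplicative), there is $K$ such that $\max_{|\sigma|=k}\norm{A_\sigma}_2\le\hat\beta^k$ for all $k\ge K$. Setting
\[
C_0:=\max_{0\le k<K}\max_{|\sigma|=k}\norm{A_\sigma}_2\,\hat\beta^{-k}\ge1,
\]
we obtain $\max_{|\sigma|=k}\norm{A_\sigma}_2\le C_0\hat\beta^k$ for every $k\ge0$. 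Hence
\[
V_\varepsilon^\infty(x)\le\sum_{k\ge0}\beta_\varepsilon^{-2k}C_0^2\hat\beta^{2k}\norm{x}_2^2
=\frac{C_0^2}{1-(\hat\beta/\beta_\varepsilon)^2}\norm{x}_2^2=:C_\varepsilon\norm{x}_2^2 .
\]
This shows both finiteness and the upper bound. The lower bound $\norm{x}_2^2\le V_\varepsilon^\infty(x)$ is the $k=0$ term, since the empty word gives $A_\sigma=I$ and all terms are nonnegative. The main obstacle in this step is obtaining a uniform bound over all $k$ from a limit statement. The two-regime constant $C_0$ handles this, and finiteness of $\Hfam$ guarantees that the maxima exist.

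\textbf{Norm property.} Each map $x\mapsto\max_{|\sigma|=k}\norm{A_\sigma x}_2=:m_k(x)$ is a seminorm, because a maximum of seminorms is a seminorm. Then
\[
p_\varepsilon(x)=\Bigl(\sum_k\beta_\varepsilon^{-2k}m_k(x)^2\Bigr)^{1/2}
\]
is the $\ell^2$-norm of the sequence $(\beta_\varepsilon^{-k}m_k(x))_k$, which lies in $\ell^2$ by the bound above. Absolute homogeneity is immediate. For the triangle inequality, $m_k(x+y)\le m_k(x)+m_k(y)$ holds termwise, and Minkowski's inequality in $\ell^2$ then gives $p_\varepsilon(x+y)\le p_\varepsilon(x)+p_\varepsilon(y)$. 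Definiteness follows from $p_\varepsilon(x)\ge\norm{x}_2$.

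\textbf{Contraction.} For a mode $i$, the set $\{A_\sigma A_i:|\sigma|=k\}$ is a subset of $\{A_\tau:|\tau|=k+1\}$, so $m_k(A_ix)\le m_{k+1}(x)$. Therefore
\[
V_\varepsilon^\infty(A_ix)\le\sum_{k\ge0}\beta_\varepsilon^{-2k}m_{k+1}(x)^2
=\beta_\varepsilon^2\sum_{j\ge1}\beta_\varepsilon^{-2j}m_j(x)^2\le\beta_\varepsilon^2V_\varepsilon^\infty(x),
\]
where the last step drops the nonnegative $j=0$ term. Taking square roots gives $p_\varepsilon(A_ix)\le\beta_\varepsilon p_\varepsilon(x)$.
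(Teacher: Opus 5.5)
Your proof is correct and is exactly the standard finite-family JSR Lyapunov construction that the paper invokes; the paper omits the details and cites \cite{lee2026lyapunovcertified,hushenzhang2010generating}. Your three steps are the standard ones: the uniform bound $\max_{|\sigma|=k}\norm{A_\sigma}_2\le C_0\hat\beta^k$ from the JSR limit gives finiteness and the upper bound, the termwise-seminorm plus Minkowski argument gives the norm property, and the index shift $m_k(A_ix)\le m_{k+1}(x)$ gives the contraction. One cosmetic point: take $K\ge1$, so that $C_0$ is a maximum over a nonempty range that includes $k=0$, which is what guarantees $C_0\ge1$.
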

\begin{proof}
The proof is the standard finite-family JSR Lyapunov construction; see \cite{lee2026lyapunovcertified} and the construction in \cite{hushenzhang2010generating}. We omit the details.
\end{proof}

Throughout the sequel, whenever this construction is applied to a switching family with JSR less than one, we call the resulting $V_\varepsilon^\infty$ a JSR Lyapunov function for that family, and we call the associated norm $p_\varepsilon$ a JSR Lyapunov norm. The following lemma is the common bridge from a JSR bound to convergence of the corresponding error recursion.

\begin{lemma}\label{lem:standard_jsr_convergence}
Let $\Hfam=\{A_1,\ldots,A_M\}\subset\R^{n\times n}$ be finite and suppose $\rho(\Hfam)<1$. Consider any recursion with $x_0\in\R^n$ of the form
\[
    x_{k+1}=A_kx_k,\qquad A_k\in\co(\Hfam),\qquad k\in\{0,1,\ldots\}.
\]
Then, for every $\varepsilon>0$ such that $\beta_\varepsilon:=\rho(\Hfam)+\varepsilon<1$, the Lyapunov function $V_\varepsilon^\infty$ and norm $p_\varepsilon$ from~\Cref{lem:common_lyapunov_construction}, applied to $\Hfam$, satisfy
\[
    V_\varepsilon^\infty(x_{k+1})\le \beta_\varepsilon^2 V_\varepsilon^\infty(x_k),
    \qquad
    p_\varepsilon(x_{k+1})\le \beta_\varepsilon p_\varepsilon(x_k).
\]
Consequently, if $C_\varepsilon$ is the constant from~\Cref{lem:common_lyapunov_construction}, then
\[
    p_\varepsilon(x_k)\le \beta_\varepsilon^k p_\varepsilon(x_0),
    \qquad
    \norm{x_k}_2\le \beta_\varepsilon^k p_\varepsilon(x_0)\le \sqrt{C_\varepsilon}\,\beta_\varepsilon^k\norm{x_0}_2,
\]
and hence $x_k\to0$.
\end{lemma}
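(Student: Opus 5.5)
The plan is to reduce everything to the single-mode contraction $p_\varepsilon(A_i x)\le\beta_\varepsilon p_\varepsilon(x)$ from \Cref{lem:common_lyapunov_construction}, extend it to convex combinations using the fact that $p_\varepsilon$ is a norm, and then iterate. Fix $\varepsilon>0$ with $\beta_\varepsilon<1$ and let $V_\varepsilon^\infty$, $p_\varepsilon$, $C_\varepsilon$ be as in \Cref{lem:common_lyapunov_construction}, applied to the finite family $\Hfam$. That lemma gives two facts: $p_\varepsilon$ is a norm on $\R^n$, and $p_\varepsilon(A_ix)\le\beta_\varepsilon p_\varepsilon(x)$ for every mode $A_i\in\Hfam$.

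The key step handles convex combinations. Take an arbitrary $A_k=\sum_{i=1}^M\lambda_iA_i\in\co(\Hfam)$ with $\lambda_i\ge0$ and $\sum_i\lambda_i=1$. By the triangle inequality and absolute homogeneity of the norm $p_\varepsilon$,
\[
p_\varepsilon(A_kx_k)\le\sum_{i=1}^M\lambda_i\,p_\varepsilon(A_ix_k)\le\sum_{i=1}^M\lambda_i\,\beta_\varepsilon p_\varepsilon(x_k)=\beta_\varepsilon p_\varepsilon(x_k).
\]
Since $V_\varepsilon^\infty=p_\varepsilon^2$ and both sides are nonnegative, squaring gives $V_\varepsilon^\infty(x_{k+1})\le\beta_\varepsilon^2V_\varepsilon^\infty(x_k)$. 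This step is the only place where it matters that $A_k$ may lie outside $\Hfam$. The main point to be careful about is that we use the norm property (convexity) of $p_\varepsilon$, not merely that each mode decreases it. If $p_\varepsilon$ were only the square root of some non-convex Lyapunov function, the argument would fail. The lemma, however, asserts the norm property outright, so no extra work is needed.

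To finish, induct on $k$ to obtain $p_\varepsilon(x_k)\le\beta_\varepsilon^kp_\varepsilon(x_0)$. Then use the sandwich bound $\norm{x}_2^2\le V_\varepsilon^\infty(x)\le C_\varepsilon\norm{x}_2^2$. Taking square roots gives $\norm{x}_2\le p_\varepsilon(x)\le\sqrt{C_\varepsilon}\norm{x}_2$. Applying the left inequality at $x_k$ and the right inequality at $x_0$ yields
\[
\norm{x_k}_2\le\beta_\varepsilon^kp_\varepsilon(x_0)\le\sqrt{C_\varepsilon}\,\beta_\varepsilon^k\norm{x_0}_2.
\]
Since $\beta_\varepsilon<1$, the right-hand side tends to $0$, so $x_k\to0$.
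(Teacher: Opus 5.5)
Your proposal is correct and matches the paper's proof essentially step for step. You extend the per-mode contraction $p_\varepsilon(A_ix)\le\beta_\varepsilon p_\varepsilon(x)$ to all of $\co(\Hfam)$ using the triangle inequality and homogeneity of the norm $p_\varepsilon$, square to obtain the $V_\varepsilon^\infty$ inequality, iterate, and then use the sandwich bound $\norm{x}_2\le p_\varepsilon(x)\le\sqrt{C_\varepsilon}\norm{x}_2$ — exactly as the paper does, including relying on the norm property of $p_\varepsilon$ for the convex-hull step.
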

\begin{proof}
Let $p_\varepsilon$ be the norm from~\Cref{lem:common_lyapunov_construction}. If $A\in\co(\Hfam)$, then $A=\sum_{i=1}^M\lambda_iA_i$ for some $\lambda_i\ge0$ with $\sum_i\lambda_i=1$. Since $p_\varepsilon$ is a norm and $p_\varepsilon(A_i x)\le \beta_\varepsilon p_\varepsilon(x)$ for every mode $A_i$, we have
\[
    p_\varepsilon(Ax)
    =p_\varepsilon\left(\sum_{i=1}^M\lambda_iA_ix\right)
    \le \sum_{i=1}^M\lambda_i p_\varepsilon(A_ix)
    \le \beta_\varepsilon p_\varepsilon(x).
\]
Applying this with $A=A_k$ gives the one-step inequalities for $p_\varepsilon$ and $V_\varepsilon^\infty=p_\varepsilon^2$. Iteration gives $p_\varepsilon(x_k)\le \beta_\varepsilon^k p_\varepsilon(x_0)$. The Euclidean bound follows from the norm-equivalence estimate in~\Cref{lem:common_lyapunov_construction}.
\end{proof}

The rest of the paper uses this JSR implication as the main convergence certificate. Once an error recursion has been written as an SLS whose mode family has JSR below one, convergence and an exponential rate bound follow from~\Cref{lem:standard_jsr_convergence}. The same matrices identify the drift term in sampled stochastic RL recursions. For the sampled version, a JSR bound below one for the associated conditional-mean switching family gives the drift part needed for stochastic convergence and finite-time error bounds, as in \cite{lee2026lyapunovcertified}. We note that this paper treats only the independent and identically distributed (i.i.d.) sampling case to keep the formulas transparent. The same conditional-mean and noise decomposition can be combined with Markovian-observation stochastic-approximation arguments, as in \cite{lee2026lyapunovcertified}, to extend the setting beyond i.i.d. samples.

\subsection{Discounted Markov decision processes}\label{sec:discounted_mdps}
We consider a finite discounted Markov decision process (MDP), the standard model for control in reinforcement learning (RL) \cite{puterman1994markov,bertsekas1996neuro,sutton1998reinforcement}, with state-space $\Sset=\{1,\ldots,|\Sset|\}$, action space $\Aset=\{1,\ldots,|\Aset|\}$, transition probability $P(s'\mid s,a)$, real-valued one-step reward $r(s,a,s')$, and discount factor $\gamma\in(0,1)$. The finite state and action sets and the real-valued reward function imply the uniform reward bound
\begin{equation}\label{eq:reward_bound}
    R_{\max}:=\max_{s\in\Sset,\ a\in\Aset,\ s'\in\Sset}|r(s,a,s')|<\infty.
\end{equation}
The expected reward is $R(s,a):=\sum_{s'\in\Sset}P(s'\mid s,a)r(s,a,s')$. State-action functions are viewed as vectors in $\R^n$ using the action-block ordering $(1,1),(2,1),\ldots,(|\Sset|,1),(1,2),(2,2),\ldots,(|\Sset|,|\Aset|)$. All matrices and vectors indexed by state-action pairs use this ordering. Define
\[
    P:=\begin{bmatrix} P_1\\ \vdots\\ P_{|\Aset|}\end{bmatrix}\in\R^{n\times |\Sset|},
    \qquad
    R:=\begin{bmatrix} R(\cdot,1)\\ \vdots\\ R(\cdot,|\Aset|)\end{bmatrix}\in\R^n,
\]
where $P_a=P(\cdot\mid\cdot,a)\in\R^{|\Sset|\times |\Sset|}$. Let $\Theta$ denote the set of deterministic stationary policies $\pi:\Sset\to\Aset$. For any stochastic policy $\mu:\Sset\to\Delta_{|\Aset|}$, define
\[
    \Gamma^\mu:=
    \begin{bmatrix}
        \mu(1)^\top\otimes e_1^\top\\
        \mu(2)^\top\otimes e_2^\top\\
        \vdots\\
        \mu(|\Sset|)^\top\otimes e_{|\Sset|}^\top
    \end{bmatrix}\in\R^{|\Sset|\times n}.
\]
Here $\Delta_{|\Aset|}$ is the probability simplex in $\R^{|\Aset|}$, so $\mu(s)$ is a probability vector over actions at state $s$. For a deterministic policy $\pi\in\Theta$, the same notation $\Gamma^\pi$ is used by identifying $\pi(s)$ with its one-hot encoding.
\begin{lemma}\label{lem:policy_selector_convex_hull}
For every stochastic policy $\mu:\Sset\to\Delta_{|\Aset|}$,
\[
    P\Gamma^\mu\in\co\{P\Gamma^\pi:\pi\in\Theta\}.
\]
Moreover, $P\Gamma^\mu$ is row-stochastic and satisfies $P\Gamma^\mu\one=\one$.
\end{lemma}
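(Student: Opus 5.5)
The plan is to view $\Gamma^\mu$ as a product-form convex combination of the deterministic selectors $\Gamma^\pi$, and then multiply on the left by $P$.

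\textbf{Step 1: convex representation of $\Gamma^\mu$.} For each deterministic policy $\pi\in\Theta$, set the weight
\[
    \lambda_\pi:=\prod_{s\in\Sset}\mu(s)_{\pi(s)}.
\]
These weights are nonnegative. Expanding the product $\prod_{s}\bigl(\sum_{a\in\Aset}\mu(s)_a\bigr)=1$ gives $\sum_{\pi\in\Theta}\lambda_\pi=1$, so $\{\lambda_\pi\}$ is a product probability distribution on $\Theta$. I will then show row by row that $\Gamma^\mu=\sum_\pi\lambda_\pi\Gamma^\pi$. Row $s$ of $\Gamma^\pi$ is $e_{\pi(s)}^\top\otimes e_s^\top$, which depends on $\pi$ only through $\pi(s)$. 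Therefore
\[
    \sum_\pi\lambda_\pi\bigl(e_{\pi(s)}^\top\otimes e_s^\top\bigr)
    =\sum_{a\in\Aset}\Bigl(\sum_{\pi:\pi(s)=a}\lambda_\pi\Bigr)\bigl(e_a^\top\otimes e_s^\top\bigr).
\]
The inner sum is the marginal of the product distribution at coordinate $s$. It factorizes as $\mu(s)_a\prod_{s'\neq s}\sum_{b}\mu(s')_b=\mu(s)_a$. Since $\sum_a\mu(s)_a e_a^\top=\mu(s)^\top$, the row equals $\mu(s)^\top\otimes e_s^\top$, as required.

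\textbf{Step 2: transfer to $P\Gamma^\mu$.} Multiplying by $P$ is linear, so $P\Gamma^\mu=\sum_\pi\lambda_\pi P\Gamma^\pi$. This is exactly membership in $\co\{P\Gamma^\pi:\pi\in\Theta\}$.

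\textbf{Step 3: stochasticity.} Both factors have nonnegative entries, so $P\Gamma^\mu\ge0$. The row of $\Gamma^\mu$ indexed by $s$ has entries summing to $\mu(s)^\top\one\cdot e_s^\top\one=1$, so $\Gamma^\mu\one_n=\one_{|\Sset|}$. Each $P_a$ is row-stochastic, hence $P\one_{|\Sset|}=\one_n$. Combining these gives $P\Gamma^\mu\one=P\one=\one$. Note that $P\Gamma^\mu$ is an $n\times n$ matrix, so the two $\one$ vectors in the statement have the same dimension.

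\textbf{Main obstacle.} The only step that needs care is the marginal computation in Step 1. I will handle it by factorizing the sum over $\pi$ coordinatewise as a product over states. The rest is bookkeeping with the Kronecker ordering: I must check that $e_a\otimes e_s$ is the coordinate of the pair $(s,a)$ in the action-block ordering, which matches the convention in Section~\ref{sec:notation}.
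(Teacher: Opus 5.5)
Your proposal is correct and follows essentially the same argument as the paper: the product weights $\lambda_\pi=\prod_{s}\mu(\pi(s)\mid s)$, the marginal identity $\sum_{\pi:\pi(s)=a}\lambda_\pi=\mu(a\mid s)$ to match $\Gamma^\mu$ row by row, left multiplication by $P$, and nonnegativity together with $\Gamma^\mu\one=\one$ and $P\one=\one$ for row-stochasticity. Your explicit dimension bookkeeping in Step 3 is a small addition the paper leaves implicit.
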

\begin{proof}
For each deterministic policy $\pi\in\Theta$, define the convex weight
\[
    \lambda_\pi:=\prod_{s\in\Sset}\mu(\pi(s)\mid s).
\]
Then $\lambda_\pi\ge0$ and
\[
    \sum_{\pi\in\Theta}\lambda_\pi
    =\prod_{s\in\Sset}\sum_{a\in\Aset}\mu(a\mid s)=1.
\]
For fixed $s\in\Sset$ and $a\in\Aset$,
\[
    \sum_{\pi\in\Theta:\,\pi(s)=a}\lambda_\pi
    =\mu(a\mid s)\prod_{\bar s\in\Sset\setminus\{s\}}\sum_{b\in\Aset}\mu(b\mid \bar s)
    =\mu(a\mid s).
\]
Therefore the $s$th row of $\sum_{\pi\in\Theta}\lambda_\pi\Gamma^\pi$ equals the $s$th row of $\Gamma^\mu$, and hence
\[
    \Gamma^\mu=\sum_{\pi\in\Theta}\lambda_\pi\Gamma^\pi.
\]
Multiplying by $P$ gives the stated convex-hull inclusion. Since $P$ and $\Gamma^\mu$ are entrywise nonnegative and map the all-ones vector to the all-ones vector of the appropriate dimension, $P\Gamma^\mu$ is row-stochastic and $P\Gamma^\mu\one=\one$.
\end{proof}
For $Q\in\R^n$, let us define
\[
    V_Q(s):=\max_{a\in\Aset}Q(s,a),
    \qquad
    V_Q:=(V_Q(1),\ldots,V_Q(|\Sset|))^\top.
\]
The Bellman optimality operator is $F(Q):=R+\gamma P V_Q$. Its unique fixed point is the optimal tabular action-value function $Q^\star$, which satisfies
\begin{equation}\label{eq:Qstar_tabular_optimal}
    Q^\star=F(Q^\star)=R+\gamma P V_{Q^\star}.
\end{equation}
For a tie-broken greedy policy $\pi_Q$ satisfying
\[
    \pi_Q(s)\in\argmax_{a\in\Aset} Q(s,a),\qquad s\in\Sset,
\]
one has $V_Q=\Gamma^{\pi_Q}Q$ and
\[
    F(Q)=R+\gamma P\Gamma^{\pi_Q}Q.
\]
The following lemma is used repeatedly to write Bellman differences as linear maps depending on stochastic policies; see, e.g.,~\cite{lee2026lyapunovcertified}.
\begin{lemma}\label{lem:bellman_difference_selector}
For any two vectors $Q,\overline Q\in\R^n$, there exists a stochastic policy $\mu_{Q,\overline Q}:\Sset\to\Delta_{|\Aset|}$ such that
\[
    V_Q-V_{\overline Q}=\Gamma^{\mu_{Q,\overline Q}}(Q-\overline Q).
\]
Moreover, the selector can be chosen as a Borel function of $(Q,\overline Q)$. Consequently,
\[
    F(Q)-F(\overline Q)=\gamma P\Gamma^{\mu_{Q,\overline Q}}(Q-\overline Q).
\]
In particular, if $Q$ and $\overline Q$ are $\mathcal G$-measurable random vectors for a sigma-field $\mathcal G$, then $\mu_{Q,\overline Q}$ can be chosen $\mathcal G$-measurable.
\end{lemma}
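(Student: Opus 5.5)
The plan is to construct the selector state by state and then push the identity through $F$. Fix $s\in\Sset$. Pick greedy actions $a_s\in\argmax_{a}Q(s,a)$ and $\bar a_s\in\argmax_a\overline Q(s,a)$, chosen by a fixed tie-breaking rule (for instance, the smallest maximizing index). Optimality of each choice gives the sandwich
\[
 Q(s,\bar a_s)-\overline Q(s,\bar a_s)\le V_Q(s)-V_{\overline Q}(s)\le Q(s,a_s)-\overline Q(s,a_s).
\]
Write $d:=Q-\overline Q$. The state difference $V_Q(s)-V_{\overline Q}(s)$ then lies in the interval $[d(s,\bar a_s),d(s,a_s)]$. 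It can therefore be written as a convex combination $\theta_s d(s,a_s)+(1-\theta_s)d(s,\bar a_s)$ with $\theta_s\in[0,1]$.

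Concretely, if $d(s,a_s)\neq d(s,\bar a_s)$, set
\[
\theta_s=\frac{V_Q(s)-V_{\overline Q}(s)-d(s,\bar a_s)}{d(s,a_s)-d(s,\bar a_s)}.
\]
Otherwise set $\theta_s=1$. Define $\mu(\cdot\mid s):=\theta_s e_{a_s}+(1-\theta_s)e_{\bar a_s}$, which is a probability vector. By the definition of $\Gamma^\mu$, its $s$th row applied to $d$ gives $\sum_a\mu(a\mid s)d(s,a)$, which equals $V_Q(s)-V_{\overline Q}(s)$. Stacking over $s$ yields $V_Q-V_{\overline Q}=\Gamma^{\mu}(Q-\overline Q)$. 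Multiplying by $\gamma P$ and using $F(Q)=R+\gamma PV_Q$ gives the Bellman difference formula, since the $R$ terms cancel.

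For measurability, each ingredient is a Borel function of $(Q,\overline Q)$:
\begin{itemize}
\item $V_Q$ and $V_{\overline Q}$ are continuous.
\item The smallest-index argmax is Borel, because the event $\{a_s=a\}$ is defined by finitely many strict and non-strict inequalities among coordinates.
\item $d(s,a_s)=\sum_a \one\{a_s=a\}d(s,a)$ is Borel.
\item $\theta_s$ is a ratio of Borel functions on the Borel set where the denominator is nonzero, and the constant $1$ on its complement.
\end{itemize}
Hence $(Q,\overline Q)\mapsto\mu_{Q,\overline Q}$ is Borel into the product of simplices. Composing with $\mathcal G$-measurable random vectors $Q,\overline Q$ gives a $\mathcal G$-measurable selector.

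The only delicate step is the measurability. It requires a deterministic tie-breaking rule and the piecewise definition of $\theta_s$ in the degenerate case, both of which are explicitly Borel. Once these are in place, the algebra is routine.
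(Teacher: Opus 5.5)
Your proof is correct. It follows the same template as the paper: a state-by-state interpolation between two actions, lexicographic tie-breaking, and Borel measurability from a piecewise definition. The key inequality and the choice of endpoints differ, however.

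The paper works only with the difference vector $\Delta_a(s)=Q(s,a)-\overline Q(s,a)$. It uses $\min_a\Delta_a(s)\le V_Q(s)-V_{\overline Q}(s)\le\max_a\Delta_a(s)$, justified only by appeal to the variational characterization of the maximum, and interpolates between a lexicographic argmin and argmax of $\Delta_\cdot(s)$.

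You instead interpolate between the greedy actions $a_s$ of $Q$ and $\bar a_s$ of $\overline Q$. You derive the tighter sandwich $d(s,\bar a_s)\le V_Q(s)-V_{\overline Q}(s)\le d(s,a_s)$, with your $d=Q-\overline Q$, directly from optimality of those actions. Your interval sits inside $[\min_a\Delta_a(s),\max_a\Delta_a(s)]$, so your argument also proves the inequality the paper takes for granted.

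\textbf{What your route buys is structure.} You get $\Gamma^{\mu}=\Lambda\Gamma^{\pi_Q}+(I-\Lambda)\Gamma^{\pi_{\overline Q}}$ with $\Lambda=\diag(\theta_s)$. The selected mode therefore only ever mixes, state by state, the greedy policies of $Q$ and $\overline Q$; for $\overline Q=Q^\star$, these are the current greedy policy and an optimal one. This is the two-sided bound behind comparison-system arguments for Q-learning. The paper's selector, by contrast, can put all its mass on actions that are greedy for neither vector. For example, with $Q(s,\cdot)=(0,0,10)$ and $\overline Q(s,\cdot)=(5,-5,10)$, it mixes actions $1$ and $2$, while yours returns action $3$.

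\textbf{What the paper's route buys is generality.} It never uses that the maximum is attained at an action, only the min--max sandwich. That sandwich holds for any monotone, constant-translation-equivariant aggregation, such as a log-sum-exp backup, so the argument transfers verbatim to such operators.

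The paper's JSR arguments range over all of $\Theta$ anyway, so either selector suffices there. Your measurability argument is more explicit than the paper's one-line ``piecewise continuous on finitely many regions'' remark.
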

\begin{proof}
Fix a state $s$ and write
\[
    \Delta_a(s):=Q(s,a)-\overline Q(s,a),
    \qquad
    h(s):=V_Q(s)-V_{\overline Q}(s).
\]
The elementary inequality
\[
    \min_{a\in\Aset}\Delta_a(s)\le h(s)\le \max_{a\in\Aset}\Delta_a(s)
\]
follows from the variational characterization of the maximum. Choose lexicographic minimizers and maximizers
\[
    a_{\min}(s)\in\arg\min_{a\in\Aset}\Delta_a(s),
    \qquad
    a_{\max}(s)\in\arg\max_{a\in\Aset}\Delta_a(s),
\]
and set $\Delta_{\min}(s):=\Delta_{a_{\min}(s)}(s)$ and $\Delta_{\max}(s):=\Delta_{a_{\max}(s)}(s)$. If $\Delta_{\min}(s)=\Delta_{\max}(s)$, assign probability one to $a_{\min}(s)$. Otherwise, define
\[
    \mu_{Q,\overline Q}(a_{\min}(s)\mid s)
    :=\frac{\Delta_{\max}(s)-h(s)}{\Delta_{\max}(s)-\Delta_{\min}(s)},
\]
\[
    \mu_{Q,\overline Q}(a_{\max}(s)\mid s)
    :=\frac{h(s)-\Delta_{\min}(s)}{\Delta_{\max}(s)-\Delta_{\min}(s)},
\]
and assign zero probability to all other actions. These weights are nonnegative and sum to one, and they satisfy
\[
    \sum_{a\in\Aset}\mu_{Q,\overline Q}(a\mid s)\Delta_a(s)=h(s).
\]
Since this holds for every state $s$, we get $V_Q-V_{\overline Q}=\Gamma^{\mu_{Q,\overline Q}}(Q-\overline Q)$. The displayed identity for $F(Q)-F(\overline Q)$ follows from the definition of $F$. The lexicographic tie-breaking construction is piecewise continuous on finitely many regions, hence Borel measurable as a function of $(Q,\overline Q)$; the final measurability statement follows by composition.
\end{proof}

We use $d$ to denote a state-action sampling distribution on $\Sset\times\Aset$. In the i.i.d. observation model, $d$ is the sampling distribution of $(s_k,a_k)$; in the Markovian observation model, $d$ is the stationary state-action distribution of the behavior-induced chain. Throughout the paper, we assume that the sampling distribution satisfies $d(s,a)>0$ for every $(s,a)\in\Sset\times\Aset$. This distribution forms the full-support state-action weight vector $d\in\R^n$ with $d^\top\one=1$, and we define
\begin{equation}\label{eq:sampling_weight_extrema}
    D:=\diag(d),\qquad d_{\min}:=\min_{(s,a)\in\Sset\times\Aset}d(s,a),\qquad d_{\max}:=\max_{(s,a)\in\Sset\times\Aset}d(s,a).
\end{equation}

\section{Standard deterministic Q-learning}
This section first reviews the deterministic form of standard tabular Q-learning \cite{watkins1992q}. This baseline is the reference recursion for the dueling Q-learning analysis below. Standard deterministic Q-learning with step-size $\eta$ is
\begin{equation}\label{eq:standard_tabular_Q}
    Q_{k+1}=Q_k+\eta D\bigl(F(Q_k)-Q_k\bigr).
\end{equation}
Let $Q^\star$ be the optimal tabular fixed point defined in~\Cref{eq:Qstar_tabular_optimal}. By~\Cref{lem:bellman_difference_selector}, the nonlinear Bellman difference can be written as a linear mode selected by a stochastic policy. The resulting SLS model associated with standard Q-learning is summarized next.

\begin{lemma}\label{lem:standard_q_modes}
For the standard tabular deterministic Q-learning recursion \eqref{eq:standard_tabular_Q} with $Q_k\in\R^n$, there exists a stochastic policy $\mu_k:\Sset\to\Delta_{|\Aset|}$ such that
\begin{equation}\label{eq:standard_q_error_sls}
    Q_{k+1}-Q^\star=A^Q_{\mu_k}(Q_k-Q^\star),
\end{equation}
where
\[
    A^Q_{\mu}=I-\eta D(I-\gamma P\Gamma^\mu)\in\R^{n\times n}.
\]
In particular, the corresponding deterministic-policy mode matrices are
\[
    A^Q_{\pi}=I-\eta D(I-\gamma P\Gamma^\pi),\qquad \pi\in\Theta.
\]
\end{lemma}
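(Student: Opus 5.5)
The plan is to subtract the fixed-point identity from the recursion and then linearize the Bellman difference with \Cref{lem:bellman_difference_selector}. Since $Q^\star=F(Q^\star)$ by \eqref{eq:Qstar_tabular_optimal}, the term $\eta D\bigl(F(Q^\star)-Q^\star\bigr)$ vanishes. Subtracting $Q^\star$ from both sides of \eqref{eq:standard_tabular_Q} and inserting this zero term gives
\[
    Q_{k+1}-Q^\star=(Q_k-Q^\star)+\eta D\bigl(F(Q_k)-F(Q^\star)\bigr)-\eta D(Q_k-Q^\star).
\]

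Next, apply \Cref{lem:bellman_difference_selector} with $Q=Q_k$ and $\overline Q=Q^\star$. It supplies a stochastic policy $\mu_k:=\mu_{Q_k,Q^\star}$ such that
\[
    F(Q_k)-F(Q^\star)=\gamma P\Gamma^{\mu_k}(Q_k-Q^\star).
\]
Substituting this and collecting terms yields
\[
    Q_{k+1}-Q^\star=\bigl(I-\eta D+\eta\gamma DP\Gamma^{\mu_k}\bigr)(Q_k-Q^\star)=\bigl(I-\eta D(I-\gamma P\Gamma^{\mu_k})\bigr)(Q_k-Q^\star),
\]
which is exactly \eqref{eq:standard_q_error_sls} with $A^Q_{\mu_k}$.

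The deterministic-policy mode matrices $A^Q_\pi$ are obtained by specializing $\mu$ to one-hot policies $\pi\in\Theta$, using the identification of $\Gamma^\pi$ made earlier. By \Cref{lem:policy_selector_convex_hull}, $P\Gamma^{\mu}\in\co\{P\Gamma^\pi\}$. Since $\mu\mapsto A^Q_\mu$ is affine in $P\Gamma^\mu$, the same convex weights give $A^Q_{\mu_k}\in\co\{A^Q_\pi:\pi\in\Theta\}$. This remark is not required by the statement, but it is what connects the lemma to \Cref{lem:standard_jsr_convergence}.

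There is no real obstacle here. The only care needed is to invoke the selector lemma with the pair $(Q_k,Q^\star)$ in that order, so that the sign convention matches the error $Q_k-Q^\star$.
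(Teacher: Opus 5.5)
Your proof is correct and is essentially the paper's argument: the paper defers this lemma to \cite{lee2026lyapunovcertified}, but its proof of \Cref{lem:va_modes} is exactly your computation (subtract $Q^\star=F(Q^\star)$, apply \Cref{lem:bellman_difference_selector} to $(Q_k,Q^\star)$, collect terms) with $C^{VA}$ in place of $\eta I$. As a minor aside, the order of the pair is not delicate, since applying the selector lemma to $(Q^\star,Q_k)$ and multiplying by $-1$ also gives $V_{Q_k}-V_{Q^\star}=\Gamma^{\mu'}(Q_k-Q^\star)$ for some stochastic policy $\mu'$.
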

\begin{proof}
The proof can be found in \cite{lee2026lyapunovcertified}. We omit the details.
\end{proof}

We now formally introduce the switching family corresponding to the SLS model in \eqref{eq:standard_q_error_sls}.

\begin{definition}\label{def:AQ_switching_family}
The switching family associated with standard tabular Q-learning and deterministic policies is defined as
\[
    \mathcal A^Q:=\{A^Q_\pi:\pi\in\Theta\}\subset\R^{n\times n}.
\]
\end{definition}
For every stochastic policy $\mu:\Sset\to\Delta_{|\Aset|}$, the corresponding stochastic mode satisfies $A^Q_\mu\in\co(\mathcal A^Q)$. Thus the standard Q-learning error recursion in \eqref{eq:standard_q_error_sls} switches over $\co(\mathcal A^Q)$.
The SLS model in \eqref{eq:standard_q_error_sls} and the switching family in~\Cref{def:AQ_switching_family} give a JSR-based convergence certificate for standard deterministic Q-learning.
\begin{lemma}\label{lem:standard_q_deterministic_convergence}
Let $Q_k\in\R^n$ be generated by the standard deterministic Q-learning recursion \eqref{eq:standard_tabular_Q}, and suppose that
\[
    \rho(\mathcal A^Q)<1,
\]
where $\mathcal A^Q$ is defined in~\Cref{def:AQ_switching_family}. Then $Q_k\to Q^\star$. More precisely, for every $\varepsilon>0$ such that
\[
    \beta_\varepsilon:=\rho(\mathcal A^Q)+\varepsilon<1,
\]
the JSR Lyapunov norm $p_\varepsilon$ from~\Cref{lem:common_lyapunov_construction}, applied to the family $\mathcal A^Q$, satisfies
\[
    p_\varepsilon(Q_k-Q^\star)\le \beta_\varepsilon^k p_\varepsilon(Q_0-Q^\star),\qquad k\ge0.
\]
If $C_\varepsilon$ is the corresponding norm-equivalence constant, then
\[
    \norm{Q_k-Q^\star}_2\le \sqrt{C_\varepsilon}\,\beta_\varepsilon^k\norm{Q_0-Q^\star}_2,
    \qquad k\ge0.
\]
\end{lemma}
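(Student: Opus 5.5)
The plan is to reduce the statement to \Cref{lem:standard_jsr_convergence} by writing the error $x_k:=Q_k-Q^\star$ as a recursion driven by matrices from $\co(\mathcal A^Q)$.

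First, subtract $Q^\star$ from both sides of \eqref{eq:standard_tabular_Q}. Since $F(Q^\star)=Q^\star$ by \eqref{eq:Qstar_tabular_optimal}, we have
\[
    Q_{k+1}-Q^\star=(Q_k-Q^\star)+\eta D\bigl(F(Q_k)-F(Q^\star)-(Q_k-Q^\star)\bigr).
\]
Then apply \Cref{lem:bellman_difference_selector} with $Q=Q_k$ and $\overline Q=Q^\star$. This gives a stochastic policy $\mu_k$ with
\[
    F(Q_k)-F(Q^\star)=\gamma P\Gamma^{\mu_k}(Q_k-Q^\star).
\]
Substituting yields $x_{k+1}=A^Q_{\mu_k}x_k$ with $A^Q_{\mu}=I-\eta D(I-\gamma P\Gamma^{\mu})$, which is exactly \Cref{lem:standard_q_modes}. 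That lemma can simply be cited here.

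Second, show $A^Q_{\mu_k}\in\co(\mathcal A^Q)$. By \Cref{lem:policy_selector_convex_hull}, more precisely by its proof, $\Gamma^{\mu}=\sum_{\pi\in\Theta}\lambda_\pi\Gamma^\pi$ with convex weights $\lambda_\pi$. The map $\Gamma\mapsto I-\eta D(I-\gamma P\Gamma)$ is affine, and the weights sum to one. Hence
\[
    A^Q_\mu=\sum_{\pi\in\Theta}\lambda_\pi A^Q_\pi\in\co(\mathcal A^Q).
\]
The key point is that the constant term $I-\eta D$ passes through the convex combination because $\sum_\pi\lambda_\pi=1$. $\Theta$ is finite, so $\mathcal A^Q$ is a finite family.

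Third, invoke \Cref{lem:standard_jsr_convergence} with $\Hfam=\mathcal A^Q$, $n$-dimensional state $x_k$ and $A_k=A^Q_{\mu_k}$. Under $\rho(\mathcal A^Q)<1$ and for any $\varepsilon$ with $\beta_\varepsilon<1$, it gives $p_\varepsilon(x_{k+1})\le\beta_\varepsilon p_\varepsilon(x_k)$. Iterating gives $p_\varepsilon(x_k)\le\beta_\varepsilon^kp_\varepsilon(x_0)$. The Euclidean bound $\norm{x_k}_2\le\sqrt{C_\varepsilon}\,\beta_\varepsilon^k\norm{x_0}_2$ then follows from the norm equivalence in \Cref{lem:common_lyapunov_construction}, and this bound implies $Q_k\to Q^\star$.

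No step is a real obstacle. The only point needing care is the convex-hull membership of the stochastic mode: one must check that the affine (not linear) dependence on $\Gamma^\mu$ is compatible with convex combinations. Measurability of $\mu_k$ is irrelevant here because the recursion is deterministic.
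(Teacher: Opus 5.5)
Your proposal is correct and follows the paper's route. The paper writes the error recursion as $Q_{k+1}-Q^\star=A^Q_{\mu_k}(Q_k-Q^\star)$ with $A^Q_{\mu_k}\in\co(\mathcal A^Q)$ (via \Cref{lem:standard_q_modes} and the remark after \Cref{def:AQ_switching_family}), then applies \Cref{lem:standard_jsr_convergence} and omits the details that you supply, namely the affine convex-hull check and the norm-equivalence step.
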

\begin{proof}
The result follows by applying the JSR convergence implication in~\Cref{lem:standard_jsr_convergence} to the standard Q-learning switching family; see \cite{lee2026lyapunovcertified} for the same direct-switching argument. We omit the details.
\end{proof}

\section{Deterministic dueling Q-learning}\label{sec:tabular_preconditioner}
The dueling architecture was introduced by Wang et al. \cite{wang2016dueling} for deep Q-networks (DQNs) \cite{mnih2015humanlevel}, and a tabular analysis of related dueling Q-learning methods was recently given by Daley et al. \cite{daley2025analysis}. The algorithm studied here is therefore not proposed as a new algorithmic template; rather, the goal is to give a systematic SLS analysis of its centered tabular form.

\subsection{Common/differential decomposition}\label{sec:common_differential_decomposition}
We first collect the common/differential decomposition needed for the analysis of tabular dueling Q-learning. For a fixed state $s$, write the action block as
\[
    Q_k(s,\cdot)=\begin{bmatrix}Q_k(s,a_1)\\ \vdots\\ Q_k(s,a_{|\Aset|})\end{bmatrix}\in\R^{|\Aset|}.
\]
Define the state-wise action-mean lift operator $\Pi$ by
\begin{equation}\label{eq:mean_projection}
    (\Pi Q)(s,a)=\frac{1}{|\Aset|}\sum_{b\in\Aset}Q(s,b),
    \qquad s\in\Sset,\ a\in\Aset.
\end{equation}
Equivalently, $\Pi Q$ is the action-repeated lift of the state mean $\bar V_Q(s):=|\Aset|^{-1}\sum_{b\in\Aset}Q(s,b)$. The next lemma establishes that $\Pi$ is an orthogonal projection in the stated coordinates.

\begin{lemma}\label{lem:mean_projection_matrix_form}
The state-wise action-mean operator $\Pi\in\R^{n\times n}$ can be written as the following matrix form:
\begin{equation}\label{eq:mean_projection_matrix_form}
    \Pi=\left(\frac{1}{|\Aset|}\one_{|\Aset|}\one_{|\Aset|}^\top\right)\otimes I_{|\Sset|}.
\end{equation}
Moreover, $\Pi$ is an orthogonal projection: $\Pi^2=\Pi$ and $\Pi^\top=\Pi$.
\end{lemma}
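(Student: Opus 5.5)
The plan is a direct verification in the action-block ordering fixed in \Cref{sec:discounted_mdps}. Index $i=(s,a)$ corresponds to $e_i=e_a\otimes e_s$. First, compute the action of the candidate matrix $M:=\bigl(\tfrac{1}{|\Aset|}\one_{|\Aset|}\one_{|\Aset|}^\top\bigr)\otimes I_{|\Sset|}$ on a basis vector using the mixed-product rule:
\[
M(e_b\otimes e_{s'})=\Bigl(\tfrac{1}{|\Aset|}\one_{|\Aset|}\Bigr)\otimes e_{s'}.
\]
The right-hand side is the vector with entry $1/|\Aset|$ at every pair $(s',a)$, $a\in\Aset$, and zero elsewhere. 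Equivalently, the $((s,a),(s',b))$ entry of $M$ equals $\tfrac{1}{|\Aset|}\mathbf{1}\{s=s'\}$. Then $(MQ)(s,a)=\sum_{s',b}\tfrac{1}{|\Aset|}\mathbf{1}\{s=s'\}Q(s',b)=\tfrac{1}{|\Aset|}\sum_{b}Q(s,b)$, which is exactly \eqref{eq:mean_projection}. Since this holds for every $Q\in\R^n$, the linear operator $\Pi$ has matrix $M$, which gives \eqref{eq:mean_projection_matrix_form}.

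For the projection properties, set $J:=\tfrac{1}{|\Aset|}\one_{|\Aset|}\one_{|\Aset|}^\top$. Then $J^\top=J$, and $J^2=\tfrac{1}{|\Aset|^2}\one(\one^\top\one)\one^\top=J$ because $\one^\top\one=|\Aset|$. By the mixed-product rule $(J\otimes I)(J\otimes I)=J^2\otimes I$, and by $(J\otimes I)^\top=J^\top\otimes I$, we obtain $\Pi^2=\Pi$ and $\Pi^\top=\Pi$. A symmetric idempotent matrix is an orthogonal projection: its range and kernel are orthogonal, since $\langle \Pi x,(I-\Pi)y\rangle=\langle x,(\Pi-\Pi^2)y\rangle=0$. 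As a sanity check, the range is the set of vectors constant across actions in each state, namely $\one_{|\Aset|}\otimes\R^{|\Sset|}$.

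No step is a genuine obstacle. The only point requiring care is the Kronecker-ordering convention: the factor acting on actions must come first, because $e_{(s,a)}=e_a\otimes e_s$. I will check this explicitly with the entrywise formula above, so that the identity is not accidentally proved in the state-block ordering.
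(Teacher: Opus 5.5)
Your proposal is correct and follows essentially the same route as the paper: the paper checks the matrix form by computing each action block of $\bigl(\tfrac{1}{|\Aset|}\one\one^\top\otimes I_{|\Sset|}\bigr)Q$ rather than via your entrywise/basis-vector formula, and both then conclude with $J^2=J$, $J^\top=J$ and the Kronecker mixed-product and transpose rules. Your explicit attention to the action-first ordering $e_{(s,a)}=e_a\otimes e_s$ is the right point of care and is handled correctly.
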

\begin{proof}
Write $Q$ in action blocks as $Q=\col\bigl(Q(\cdot,a_1),\ldots,Q(\cdot,a_{|\Aset|})\bigr)$ with $Q(\cdot,a)\in\R^{|\Sset|}$.
Let
\[
    M_{|\Aset|}:=\frac{1}{|\Aset|}\one_{|\Aset|}\one_{|\Aset|}^\top.
\]
The $a$th action block of $(M_{|\Aset|}\otimes I_{|\Sset|})Q$ is
\[
    \frac{1}{|\Aset|}\sum_{b\in\Aset}Q(\cdot,b).
\]
Therefore the coordinate indexed by $(s,a)$ is
\[
    \bigl((M_{|\Aset|}\otimes I_{|\Sset|})Q\bigr)(s,a)=\frac{1}{|\Aset|}\sum_{b\in\Aset}Q(s,b),
\]
which is exactly the definition of $\Pi Q$ in \eqref{eq:mean_projection}. This proves the matrix representation \eqref{eq:mean_projection_matrix_form}.

Next, we have
\[
    M_{|\Aset|}^2=\frac{1}{|\Aset|^2}\one\one^\top\one\one^\top
    =\frac{1}{|\Aset|}\one\one^\top=M_{|\Aset|},
    \qquad M_{|\Aset|}^\top=M_{|\Aset|}.
\]
Using $(B\otimes C)(D\otimes E)=BD\otimes CE$ and $(B\otimes C)^\top=B^\top\otimes C^\top$, we get
\[
    \Pi^2=(M_{|\Aset|}\otimes I_{|\Sset|})(M_{|\Aset|}\otimes I_{|\Sset|})=M_{|\Aset|}^2\otimes I_{|\Sset|}^2=M_{|\Aset|}\otimes I_{|\Sset|}=\Pi,
\]
\[
    \Pi^\top=(M_{|\Aset|}\otimes I_{|\Sset|})^\top=M_{|\Aset|}^\top\otimes I_{|\Sset|}^\top=M_{|\Aset|}\otimes I_{|\Sset|}=\Pi.
\]
Thus $\Pi$ is symmetric and idempotent, hence an orthogonal projection.
\end{proof}

Since $\Pi$ is an orthogonal projection, the orthogonal projection onto the orthogonal complement of the state-wise action-common subspace is
\[
    \Pi_\perp:=I-\Pi.
\]
The two associated subspaces are defined as
\[
    \Vsub:=\range(\Pi)=\{Q\in\R^{|\Sset||\Aset|}:Q(s,a)=Q(s,b)\text{ for every }s\text{ and all }a,b\},
\]
\[
    \Asub:=\range(\Pi_\perp)=\left\{Q\in\R^{|\Sset||\Aset|}:\sum_{a\in\Aset}Q(s,a)=0\text{ for every }s\right\},
\]
where $\Vsub$ will be called the V-space and $\Asub$ will be called the A-space.
Therefore, $\Pi Q$ is the state-wise action-common component, and $\Pi_\perp Q$ is the action-differential component. The corresponding product space is
\[
\Vsub\times\Asub=\range(\Pi)\times\range(\Pi_\perp)=\{(V,A)\in\R^n\times\R^n:V=\Pi V,\ A=\Pi_\perp A\},
\]
which will be called AV-space. This form makes the coupling between the two components explicit and allows the same convergence analysis to be stated on $\Vsub\times\Asub$. Consequently, every $Q$-vector has the unique decomposition
\begin{equation}\label{eq:tab_VA_statewise_decomposition}
    Q=\Pi Q+\Pi_\perp Q.
\end{equation}
Using this decomposition, each Q-function iterate $Q_k$ can be written as
\begin{equation}\label{eq:tab_a_centered}
    Q_k=V_k+A_k,
    \qquad V_k=\Pi Q_k,
    \qquad A_k=\Pi_\perp Q_k.
\end{equation}
Equivalently, $V_k$ is constant across actions within each state block and
\begin{equation}\label{eq:CQ}
    \sum_{a\in\Aset}A_k(s,a)=0,
    \qquad s\in\Sset.
\end{equation}
Similarly, the optimal action-value vector can be obtained as
\[
    Q^\star=V^\star+A^\star,
    \qquad V^\star:=\Pi Q^\star,
    \qquad A^\star:=\Pi_\perp Q^\star.
\]
In these coordinates, standard deterministic Q-learning with step-size $\eta$ applies the same gain to both subspaces. Indeed, using~\eqref{eq:tab_VA_statewise_decomposition}, the standard Q-learning recursion
\[
    Q_{k+1}=Q_k+\eta D\bigl(F(Q_k)-Q_k\bigr)
\]
can be written as
\[
    Q_{k+1}=Q_k+C^Q D\bigl(F(Q_k)-Q_k\bigr),
\]
where $C^Q$ is the preconditioning matrix defined as
\begin{equation}\label{eq:centered_va_product_space}
    C^Q=\eta I=\eta\Pi+\eta\Pi_\perp.
\end{equation}
Therefore, the standard update does not distinguish the state-wise action-common component from the action-differential component.

\subsection{Deterministic dueling Q-learning in centered components}\label{sec:va_space_mean_updates}
Before studying sample-based stochastic dueling Q-learning, we first analyze its deterministic counterpart in the centered components from \eqref{eq:tab_a_centered}. This deterministic setting focuses on the mean Bellman-residual dynamics and makes the coupling between the state-wise action-common component in V-space and the action-differential component in A-space explicit.

For the dueling Q-learning method, define $Q_k=V_k+A_k$ and use the Bellman residual
\[
    F(Q_k)-Q_k=R+\gamma P V_{Q_k}-Q_k.
\]
In this paper, we consider the following deterministic tabular dueling Q-learning update
\begin{align}
    V_{k+1}&=V_k+|\Aset|\alpha\Pi D\bigl(F(Q_k)-Q_k\bigr),\label{eq:tabular_value_update}\\
    A_{k+1}&=A_k+\beta\Pi_\perp D\bigl(F(Q_k)-Q_k\bigr).\label{eq:tabular_advantage_update}
\end{align}
The update preserves the centeredness condition in~\eqref{eq:CQ}. The factor $|\Aset|$ in \eqref{eq:tabular_value_update} appears because $\Pi$ averages over the actions in each state block; hence $|\Aset|\Pi$ converts the action average into the state-wise sum used by the value component. We can easily show that if $A_k\in\range(\Pi_\perp)$, then $A_{k+1}\in\range(\Pi_\perp)$. The vector-form statement of~\eqref{eq:tabular_value_update}--\eqref{eq:tabular_advantage_update} is summarized in~\Cref{alg:centered_va_vector}. For reference, a coordinate-by-coordinate version of~\Cref{alg:centered_va_vector} is also provided in~\Cref{alg:centered_va_elementwise} in Appendix.
\begin{algorithm}[t]
\caption{Deterministic dueling Q-learning: vector form}\label{alg:centered_va_vector}
\begin{algorithmic}[1]
\STATE Initialize $Q_0\in\R^n$, $V_0=\Pi Q_0$, and $A_0=\Pi_\perp Q_0$.
\FOR{$k=0,1,\ldots$}
\STATE Form $Q_k\leftarrow V_k+A_k$.
\STATE $V_{k+1}\leftarrow V_k+|\Aset|\alpha\Pi D\bigl(F(Q_k)-Q_k\bigr)$.
\STATE $A_{k+1}\leftarrow A_k+\beta\Pi_\perp D\bigl(F(Q_k)-Q_k\bigr)$.
\STATE $Q_{k+1}\leftarrow V_{k+1}+A_{k+1}$.
\ENDFOR
\end{algorithmic}
\end{algorithm}
To analyze~\Cref{alg:centered_va_vector}, we use the SLS framework introduced in~\Cref{sec:switching_systems}. The next lemma gives the SLS of dueling Q-learning obtained after subtracting the Bellman fixed point and introduces the associated switching family.

\begin{lemma}\label{lem:va_block_exact_sls}
For a stochastic policy $\mu:\Sset\to\Delta_{|\Aset|}$, define the block operator $B_\mu^{VA}:\Vsub\times\Asub\to\Vsub\times\Asub$, represented by a matrix in $\R^{2n\times 2n}$, by
\begin{equation}\label{eq:VA_block_coupled_sls}
    B_\mu^{VA}:=\begin{bmatrix}
    \Pi-|\Aset|\alpha\Pi D(I-\gamma P\Gamma^\mu)\Pi
    & -|\Aset|\alpha\Pi D(I-\gamma P\Gamma^\mu)\Pi_\perp\\
    -\beta\Pi_\perp D(I-\gamma P\Gamma^\mu)\Pi
    & \Pi_\perp-\beta\Pi_\perp D(I-\gamma P\Gamma^\mu)\Pi_\perp
    \end{bmatrix}.
\end{equation}
For a deterministic policy $\pi\in\Theta$, write $B_\pi^{VA}$ for this operator with $\mu=\pi$, and define the deterministic-policy block switching family for dueling Q-learning by
\begin{equation}\label{eq:VA_block_mode}
    \mathcal B^{VA}(\alpha,\beta):=\{B_\pi^{VA}:\pi\in\Theta\}\subset\R^{2n\times 2n}.
\end{equation}
Let $Q_k=V_k+A_k$ be generated by \eqref{eq:tabular_value_update}--\eqref{eq:tabular_advantage_update}, and let $V^\star:=\Pi Q^\star$ and $A^\star:=\Pi_\perp Q^\star$. Then, for each $k\ge0$, there exists a stochastic policy $\mu_k:\Sset\to\Delta_{|\Aset|}$ such that
\begin{equation}\label{eq:BVA_switching_family}
    \begin{bmatrix}V_{k+1}-V^\star\\ A_{k+1}-A^\star\end{bmatrix}
    =B_{\mu_k}^{VA}\begin{bmatrix}V_k-V^\star\\ A_k-A^\star\end{bmatrix}.
\end{equation}
\end{lemma}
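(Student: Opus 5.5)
The proof is a direct computation combining the Bellman difference selector of \Cref{lem:bellman_difference_selector} with the projection identities of \Cref{lem:mean_projection_matrix_form}.

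\textbf{Step 1: fix the mode.} Apply \Cref{lem:bellman_difference_selector} with $Q=Q_k$ and $\overline Q=Q^\star$, and let $\mu_k:=\mu_{Q_k,Q^\star}$. Since $F(Q^\star)=Q^\star$ by \eqref{eq:Qstar_tabular_optimal}, the Bellman residual becomes
\[
    F(Q_k)-Q_k=\bigl(F(Q_k)-F(Q^\star)\bigr)-(Q_k-Q^\star)=-(I-\gamma P\Gamma^{\mu_k})(Q_k-Q^\star).
\]

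\textbf{Step 2: split the Q-error.} Write $Q_k-Q^\star=(V_k-V^\star)+(A_k-A^\star)$. This uses $Q_k=V_k+A_k$, which holds because the iterates stay in $\Vsub\times\Asub$: $V_0=\Pi Q_0$ and $A_0=\Pi_\perp Q_0$, and the updates \eqref{eq:tabular_value_update}--\eqref{eq:tabular_advantage_update} add vectors in $\range(\Pi)$ and $\range(\Pi_\perp)$ respectively. Hence $V_k-V^\star\in\range(\Pi)$ and $A_k-A^\star\in\range(\Pi_\perp)$. Since $\Pi$ and $\Pi_\perp$ are orthogonal projections (\Cref{lem:mean_projection_matrix_form}),
\[
    V_k-V^\star=\Pi(V_k-V^\star),\qquad A_k-A^\star=\Pi_\perp(A_k-A^\star).
\]
Therefore
\[
    Q_k-Q^\star=\Pi(V_k-V^\star)+\Pi_\perp(A_k-A^\star).
\]

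\textbf{Step 3: substitute into each update.} Subtract $V^\star$ from \eqref{eq:tabular_value_update} and write $V_k-V^\star=\Pi(V_k-V^\star)$. Then
\[
    V_{k+1}-V^\star=\bigl[\Pi-|\Aset|\alpha\Pi D(I-\gamma P\Gamma^{\mu_k})\Pi\bigr](V_k-V^\star)-|\Aset|\alpha\Pi D(I-\gamma P\Gamma^{\mu_k})\Pi_\perp(A_k-A^\star),
\]
which is the first block row of \eqref{eq:VA_block_coupled_sls}. Similarly, subtract $A^\star$ from \eqref{eq:tabular_advantage_update} and write $A_k-A^\star=\Pi_\perp(A_k-A^\star)$. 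This gives the second block row.

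\textbf{Step 4: well-posedness of the block operator.} Note that $B_\mu^{VA}$ maps $\Vsub\times\Asub$ into itself. Every first-row block has $\Pi$ as its left factor, and every second-row block has $\Pi_\perp$ as its left factor. This confirms that $B_\mu^{VA}$ is a legitimate operator on AV-space.

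\textbf{Expected obstacle.} There is no substantive obstacle. The only point needing care is Step 2: the identity $A_k=\Pi_\perp A_k$ relies on the invariance of centeredness, and it is what lets the right-hand projections $\Pi$ and $\Pi_\perp$ be inserted in the off-diagonal blocks. Measurability of $\mu_k$ is not needed in this deterministic setting; it is enough that $\mu_k$ exists for each $k$.
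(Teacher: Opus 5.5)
Your proof is correct and follows essentially the paper's argument: use $Q^\star=F(Q^\star)$, apply \Cref{lem:bellman_difference_selector} with $\overline Q=Q^\star$, and use the centered identities ($V_k-V^\star\in\range(\Pi)$, $A_k-A^\star\in\range(\Pi_\perp)$) to insert the right-hand projections in each block. Your explicit invariance argument and the range check in Step 4 make precise what the paper leaves implicit via \eqref{eq:tab_a_centered}; note that the invariance is what justifies $V_k=\Pi Q_k$ and $A_k=\Pi_\perp Q_k$, whereas $Q_k=V_k+A_k$ already holds by definition.
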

\begin{proof}
Since $Q^\star=F(Q^\star)$, subtracting the fixed-point decomposition $Q^\star=V^\star+A^\star$ from \eqref{eq:tabular_value_update}--\eqref{eq:tabular_advantage_update} gives
\[
    V_{k+1}-V^\star=(V_k-V^\star)+|\Aset|\alpha\Pi D\bigl(F(Q_k)-F(Q^\star)-(Q_k-Q^\star)\bigr),
\]
\[
    A_{k+1}-A^\star=(A_k-A^\star)+\beta\Pi_\perp D\bigl(F(Q_k)-F(Q^\star)-(Q_k-Q^\star)\bigr).
\]
By~\Cref{lem:bellman_difference_selector}, there exists a stochastic policy $\mu_k:\Sset\to\Delta_{|\Aset|}$ such that
\[
    F(Q_k)-F(Q^\star)=\gamma P\Gamma^{\mu_k}(Q_k-Q^\star).
\]
Using
\[
    V_k-V^\star=\Pi(Q_k-Q^\star),\qquad A_k-A^\star=\Pi_\perp(Q_k-Q^\star),
\]
and $Q_k-Q^\star=(V_k-V^\star)+(A_k-A^\star)$, the two preceding equations give \eqref{eq:BVA_switching_family}.
\end{proof}
The following elementary identity explains why the action-common input column of the block system does not depend on the selected policy. Once a vector is constant across actions in each state, every policy selector returns the same state-wise vector.
\begin{lemma}\label{lem:selector_common_projection}
For any stochastic policies $\mu,\nu:\Sset\to\Delta_{|\Aset|}$,
\[
    \Gamma^\mu\Pi=\Gamma^\nu\Pi.
\]
\end{lemma}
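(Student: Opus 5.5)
The plan is a direct matrix computation using the Kronecker structure of $\Gamma^\mu$ and $\Pi$ in the action-block ordering. First, I will show that for every stochastic policy $\mu$ the product $\Gamma^\mu\Pi$ equals a fixed matrix that does not depend on $\mu$, namely
\[
    \Gamma^\mu\Pi=\frac{1}{|\Aset|}\one_{|\Aset|}^\top\otimes I_{|\Sset|}.
\]
The claim then follows by applying this to both $\mu$ and $\nu$.

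To verify this identity, I would rather not manipulate the row-wise definition of $\Gamma^\mu$ as a Kronecker product; I will argue on vectors instead. Fix $Q\in\R^n$. By \Cref{lem:mean_projection_matrix_form}, $(\Pi Q)(s,a)=\bar V_Q(s)$ for every action $a$. The $s$th row of $\Gamma^\mu$ is $\mu(s)^\top\otimes e_s^\top$. Applied to a vector $W$ in the action-block ordering, this row gives $\sum_{a\in\Aset}\mu(a\mid s)W(s,a)$, since $e_a\otimes e_s$ indexes the coordinate $(s,a)$. Taking $W=\Pi Q$ yields
\[
    (\Gamma^\mu\Pi Q)(s)=\sum_{a\in\Aset}\mu(a\mid s)\,\bar V_Q(s)=\bar V_Q(s).
\]
The last equality uses $\mu(s)\in\Delta_{|\Aset|}$, so that $\sum_a\mu(a\mid s)=1$.

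The right-hand side $\bar V_Q(s)$ does not depend on $\mu$. Since $Q$ is arbitrary, $\Gamma^\mu\Pi=\Gamma^\nu\Pi$ as matrices. Both products coincide with the state-mean map $Q\mapsto\bar V_Q$, which is exactly the matrix $\frac{1}{|\Aset|}\one_{|\Aset|}^\top\otimes I_{|\Sset|}$ displayed above.

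There is no substantive obstacle here. The only point requiring care is to confirm the indexing convention: the row $\mu(s)^\top\otimes e_s^\top$ must pick out the coordinates $(s,a)$, $a\in\Aset$, under the action-block ordering $e_{(s,a)}=e_a\otimes e_s$. This follows from $(\mu(s)^\top\otimes e_s^\top)(e_a\otimes e_{s'})=\mu(a\mid s)\,\delta_{ss'}$ by the mixed-product property.
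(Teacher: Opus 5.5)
Your proposal is correct and follows essentially the same route as the paper. Both arguments fix a vector, use that $(\Pi x)(s,a)$ does not depend on $a$, and collapse $\sum_{a}\mu(a\mid s)$ to one, so that $(\Gamma^\mu\Pi x)(s)$ equals the state-wise action mean for every $\mu$. Your explicit identification of the common matrix as $\frac{1}{|\Aset|}\one_{|\Aset|}^\top\otimes I_{|\Sset|}$ and your indexing check are correct additions.
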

\begin{proof}
Fix $x\in\R^n$. By the definition of the action-common projection,
\[
    (\Pi x)(s,a)=\frac{1}{|\Aset|}\sum_{b\in\Aset}x(s,b),
    \qquad s\in\Sset,
    \quad a\in\Aset.
\]
Thus the quantity $(\Pi x)(s,a)$ is independent of $a$ for each fixed state $s$. Hence, for any stochastic policy $\mu$,
\[
\begin{aligned}
    (\Gamma^\mu\Pi x)(s)
    &=\sum_{a\in\Aset}\mu(a\mid s)(\Pi x)(s,a) \\
    &=\sum_{a\in\Aset}\mu(a\mid s)\frac{1}{|\Aset|}\sum_{b\in\Aset}x(s,b) \\
    &=\left(\sum_{a\in\Aset}\mu(a\mid s)\right)\frac{1}{|\Aset|}\sum_{b\in\Aset}x(s,b) \\
    &=\frac{1}{|\Aset|}\sum_{b\in\Aset}x(s,b).
\end{aligned}
\]
The same calculation with $\nu$ gives
\[
    (\Gamma^\nu\Pi x)(s)=\frac{1}{|\Aset|}\sum_{b\in\Aset}x(s,b).
\]
Therefore $(\Gamma^\mu\Pi x)(s)=(\Gamma^\nu\Pi x)(s)$ for every $s\in\Sset$ and every $x\in\R^n$, which proves $\Gamma^\mu\Pi=\Gamma^\nu\Pi$.
\end{proof}
The following lemma isolates the policy-dependence of the block operator. It shows that the first block column
\[
    \begin{bmatrix}
    \Pi-|\Aset|\alpha\Pi D(I-\gamma P\Gamma^\mu)\Pi\\[0.2em]
    -\beta\Pi_\perp D(I-\gamma P\Gamma^\mu)\Pi
    \end{bmatrix}
\]
is fixed, while the second block column
\[
    \begin{bmatrix}
    -|\Aset|\alpha\Pi D(I-\gamma P\Gamma^\mu)\Pi_\perp\\[0.2em]
    \Pi_\perp-\beta\Pi_\perp D(I-\gamma P\Gamma^\mu)\Pi_\perp
    \end{bmatrix}
\]
can switch with the selected policy.
\begin{lemma}\label{lem:va_block_constant_switching_parts}
Write the block operator in \eqref{eq:VA_block_coupled_sls} as
\[
    B_\mu^{VA}=\begin{bmatrix}B_{\mu,11}^{VA}&B_{\mu,12}^{VA}\\ B_{\mu,21}^{VA}&B_{\mu,22}^{VA}\end{bmatrix}.
\]
For stochastic policies $\mu,\nu:\Sset\to\Delta_{|\Aset|}$, the first-column blocks in~\eqref{eq:VA_block_mode} are policy-independent:
\[
    B_{\mu,11}^{VA}=B_{\nu,11}^{VA},\qquad B_{\mu,21}^{VA}=B_{\nu,21}^{VA}
\]
for all stochastic policies $\mu,\nu:\Sset\to\Delta_{|\Aset|}$. The second-column blocks
\[
    B_{\mu,12}^{VA}=-|\Aset|\alpha\Pi D(I-\gamma P\Gamma^\mu)\Pi_\perp,
    \qquad
    B_{\mu,22}^{VA}=\Pi_\perp-\beta\Pi_\perp D(I-\gamma P\Gamma^\mu)\Pi_\perp
\]
can depend on the selected policy, and their differences are
\[
    B_{\mu,12}^{VA}-B_{\nu,12}^{VA}=|\Aset|\alpha\gamma\Pi D P(\Gamma^\mu-\Gamma^\nu)\Pi_\perp,
\]
\[
    B_{\mu,22}^{VA}-B_{\nu,22}^{VA}=\beta\gamma\Pi_\perp D P(\Gamma^\mu-\Gamma^\nu)\Pi_\perp.
\]
Thus the block SLS has a fixed action-common input column and, in general, switching action-differential input blocks.
\end{lemma}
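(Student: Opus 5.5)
The plan is to expand each block of $B_\mu^{VA}$ in \eqref{eq:VA_block_coupled_sls} and isolate the only policy-dependent factor, $\gamma P\Gamma^\mu$. For the first block column, both blocks contain $\Gamma^\mu$ only through the product $\Gamma^\mu\Pi$. Writing them out,
\[
B_{\mu,11}^{VA}=\Pi-|\Aset|\alpha\Pi D\Pi+|\Aset|\alpha\gamma\Pi DP\,\Gamma^\mu\Pi,
\qquad
B_{\mu,21}^{VA}=-\beta\Pi_\perp D\Pi+\beta\gamma\Pi_\perp DP\,\Gamma^\mu\Pi .
\]
By \Cref{lem:selector_common_projection}, $\Gamma^\mu\Pi=\Gamma^\nu\Pi$ for any stochastic policies $\mu,\nu$. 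Substituting this identity shows $B_{\mu,11}^{VA}=B_{\nu,11}^{VA}$ and $B_{\mu,21}^{VA}=B_{\nu,21}^{VA}$.

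For the second block column, the formulas for $B_{\mu,12}^{VA}$ and $B_{\mu,22}^{VA}$ are simply read off from \eqref{eq:VA_block_coupled_sls}. To get the differences, expand $D(I-\gamma P\Gamma^\mu)=D-\gamma DP\Gamma^\mu$ and subtract the $\nu$-version. The policy-independent terms cancel, namely $-|\Aset|\alpha\Pi D\Pi_\perp$ in the $(1,2)$ block and $\Pi_\perp-\beta\Pi_\perp D\Pi_\perp$ in the $(2,2)$ block. What remains is $|\Aset|\alpha\gamma\Pi DP(\Gamma^\mu-\Gamma^\nu)\Pi_\perp$ and $\beta\gamma\Pi_\perp DP(\Gamma^\mu-\Gamma^\nu)\Pi_\perp$, with signs as stated.

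The claim that these blocks "can depend" on the policy is not an identity; it only says the difference need not vanish. If a justification is wanted, one can note that $(\Gamma^\mu-\Gamma^\nu)\Pi_\perp$ is generally nonzero. For example, take $|\Aset|\ge2$, deterministic $\mu\neq\nu$, and apply the difference to a centered vector $x\in\Asub$ that separates the two chosen actions at some state. The surrounding factors $\Pi D P$ and $\Pi_\perp DP$ can then still leave a nonzero result for generic $P$.

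The main obstacle is only the bookkeeping of signs when expanding the products. The substantive step is invoking \Cref{lem:selector_common_projection}, which is already available, so no real difficulty is expected.
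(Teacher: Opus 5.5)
Your proposal is correct and follows the paper's proof: the paper also notes that the first block column depends on the policy only through $P\Gamma^\mu\Pi$, invokes \Cref{lem:selector_common_projection} to get $\Gamma^\mu\Pi=\Gamma^\nu\Pi$, and obtains the second-column differences by direct subtraction, with the same signs you derived. Your added remark on why the second-column blocks can actually vary goes beyond what the paper argues, but it is optional, since the statement only asserts that they \emph{can} depend on the policy.
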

\begin{proof}
The policy-dependent part of the first block column in \eqref{eq:VA_block_coupled_sls} is $P\Gamma^\mu\Pi$. By~\Cref{lem:selector_common_projection}, $\Gamma^\mu\Pi=\Gamma^\nu\Pi$ for all stochastic policies $\mu,\nu:\Sset\to\Delta_{|\Aset|}$. Therefore
\[
\begin{aligned}
    B_{\mu,11}^{VA}
    &=\Pi-|\Aset|\alpha\Pi D(I-\gamma P\Gamma^\mu)\Pi
      =\Pi-|\Aset|\alpha\Pi D(I-\gamma P\Gamma^\nu)\Pi
      =B_{\nu,11}^{VA},\\
    B_{\mu,21}^{VA}
    &=-\beta\Pi_\perp D(I-\gamma P\Gamma^\mu)\Pi
      =-\beta\Pi_\perp D(I-\gamma P\Gamma^\nu)\Pi
      =B_{\nu,21}^{VA}.
\end{aligned}
\]
For the second block column, direct subtraction gives
\[
\begin{aligned}
    B_{\mu,12}^{VA}-B_{\nu,12}^{VA}
    &=-|\Aset|\alpha\Pi D(I-\gamma P\Gamma^\mu)\Pi_\perp
      +|\Aset|\alpha\Pi D(I-\gamma P\Gamma^\nu)\Pi_\perp\\
    &=|\Aset|\alpha\gamma\Pi DP(\Gamma^\mu-\Gamma^\nu)\Pi_\perp,
\end{aligned}
\]
and
\[
\begin{aligned}
    B_{\mu,22}^{VA}-B_{\nu,22}^{VA}
    &=\Pi_\perp-\beta\Pi_\perp D(I-\gamma P\Gamma^\mu)\Pi_\perp
      -\Pi_\perp+\beta\Pi_\perp D(I-\gamma P\Gamma^\nu)\Pi_\perp\\
    &=\beta\gamma\Pi_\perp DP(\Gamma^\mu-\Gamma^\nu)\Pi_\perp.
\end{aligned}
\]
These are the two stated formulas.
\end{proof}
The identities in~\Cref{lem:selector_common_projection,lem:va_block_constant_switching_parts} show that the exact dueling Q-learning recursion should be read as a coupled block SLS with a fixed linear value-input column and switching advantage-driven blocks, rather than as two independent recursions in a general MDP.

The associated switching family for deterministic policies introduced in~\Cref{lem:va_block_exact_sls} is enough for JSR analysis. When $\rho(\mathcal B^{VA}(\alpha,\beta))<1$, the Lyapunov argument in~\Cref{lem:standard_jsr_convergence} gives the following convergence statement on $\Vsub\times\Asub$.
\begin{lemma}\label{lem:va_block_sls_convergence}
Let $Q^\star,V^\star,A^\star\in\R^n$ satisfy $Q^\star=V^\star+A^\star$, $V^\star:=\Pi Q^\star$, and $A^\star:=\Pi_\perp Q^\star$, and let
\[
    \begin{bmatrix}V_k-V^\star\\ A_k-A^\star\end{bmatrix}\in\Vsub\times\Asub
\]
follow the block SLS \eqref{eq:BVA_switching_family}. Suppose that
\[
    \rho(\mathcal B^{VA}(\alpha,\beta))<1,
\]
where $\mathcal B^{VA}(\alpha,\beta)$ is introduced in~\Cref{lem:va_block_exact_sls}. Then the block vector converges to zero, and hence $Q_k-Q^\star\to0$. More precisely, for every $\varepsilon>0$ such that
\[
    \beta_\varepsilon:=\rho(\mathcal B^{VA}(\alpha,\beta))+\varepsilon<1,
\]
the JSR Lyapunov norm $p_\varepsilon$ from~\Cref{lem:common_lyapunov_construction}, applied to the family $\mathcal B^{VA}(\alpha,\beta)$ on $\Vsub\times\Asub$, satisfies
\begin{equation}\label{eq:va_block_sls_pepsilon_decay}
    p_\varepsilon\left(\begin{bmatrix}V_k-V^\star\\ A_k-A^\star\end{bmatrix}\right)
    \le \beta_\varepsilon^k
    p_\varepsilon\left(\begin{bmatrix}V_0-V^\star\\ A_0-A^\star\end{bmatrix}\right),\qquad k\ge0.
\end{equation}
If $C_\varepsilon$ is the corresponding norm-equivalence constant, then
\begin{equation}\label{eq:va_block_sls_euclidean_decay}
    \left\|\begin{bmatrix}V_k-V^\star\\ A_k-A^\star\end{bmatrix}\right\|_2
    \le \sqrt{C_\varepsilon}\,\beta_\varepsilon^k
    \left\|\begin{bmatrix}V_0-V^\star\\ A_0-A^\star\end{bmatrix}\right\|_2,
    \qquad k\ge0.
\end{equation}
Because $\Pi$ and $\Pi_\perp$ are orthogonal projections,
\[
    \left\|\begin{bmatrix}V_k-V^\star\\ A_k-A^\star\end{bmatrix}\right\|_2^2
    =\norm{V_k-V^\star}_2^2+\norm{A_k-A^\star}_2^2
    =\norm{Q_k-Q^\star}_2^2.
\]
\end{lemma}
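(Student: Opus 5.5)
The plan is to reduce the statement to \Cref{lem:standard_jsr_convergence} applied to the finite family $\mathcal B^{VA}(\alpha,\beta)\subset\R^{2n\times 2n}$. By \Cref{lem:va_block_exact_sls}, the block error $x_k:=\col(V_k-V^\star,A_k-A^\star)$ satisfies $x_{k+1}=B^{VA}_{\mu_k}x_k$ for some stochastic policy $\mu_k$. The only thing to check before invoking \Cref{lem:standard_jsr_convergence} is that $B^{VA}_{\mu_k}\in\co(\mathcal B^{VA}(\alpha,\beta))$.

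For the convex-hull inclusion, I use \Cref{lem:policy_selector_convex_hull}. Its proof gives weights $\lambda_\pi\ge0$ with $\sum_\pi\lambda_\pi=1$ and $\Gamma^{\mu}=\sum_{\pi\in\Theta}\lambda_\pi\Gamma^\pi$. Every block of $B^{VA}_\mu$ in \eqref{eq:VA_block_coupled_sls} is affine in $\Gamma^\mu$: it is a constant term ($\Pi$, $\Pi_\perp$, or $0$) plus a fixed matrix times $P\Gamma^\mu$ times a fixed matrix. Since the weights sum to one, the constant parts are reproduced, and so $B^{VA}_\mu=\sum_\pi\lambda_\pi B^{VA}_\pi$. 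Hence $B^{VA}_{\mu_k}\in\co(\mathcal B^{VA}(\alpha,\beta))$ for every $k$.

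With this in hand, \Cref{lem:standard_jsr_convergence}, applied with $\Hfam=\mathcal B^{VA}(\alpha,\beta)$, directly gives $p_\varepsilon(x_{k+1})\le\beta_\varepsilon p_\varepsilon(x_k)$. Iterating yields \eqref{eq:va_block_sls_pepsilon_decay}, and the norm-equivalence bound of \Cref{lem:common_lyapunov_construction} gives \eqref{eq:va_block_sls_euclidean_decay}. The Lyapunov norm is constructed on all of $\R^{2n}$, and the iterates remain in $\Vsub\times\Asub$, so restricting to this subspace costs nothing.

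For the final identity, write $e_k:=Q_k-Q^\star$. Then $V_k-V^\star=\Pi e_k$ and $A_k-A^\star=\Pi_\perp e_k$, because $V_k=\Pi Q_k$ and $A_k=\Pi_\perp Q_k$ are preserved by the update (the remark after \eqref{eq:tabular_advantage_update}). Since $\Pi$ is symmetric and idempotent by \Cref{lem:mean_projection_matrix_form}, the cross term vanishes:
\[
\norm{e_k}_2^2=\norm{\Pi e_k}_2^2+\norm{\Pi_\perp e_k}_2^2+2e_k^\top\Pi\Pi_\perp e_k,\qquad \Pi\Pi_\perp=\Pi-\Pi^2=0.
\]
This gives the Pythagorean identity, so $Q_k\to Q^\star$ follows from $x_k\to0$.

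The main obstacle is the convex-hull step. It relies on the mode being affine in $\Gamma^\mu$ with weights summing to one, and one should confirm that the constant diagonal terms $\Pi,\Pi_\perp$ combine correctly. They do, since $\sum_\pi\lambda_\pi=1$. Everything else is a direct citation.
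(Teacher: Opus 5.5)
Your proposal is correct and follows the paper's proof essentially step for step. The convex-hull membership you verify inline is exactly the paper's appendix result (\Cref{lem:va_block_stochastic_convex_hull}); after that, both arguments invoke \Cref{lem:standard_jsr_convergence}, the norm-equivalence bound, and the orthogonality of $\Pi$ and $\Pi_\perp$ for the Pythagorean identity.
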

\begin{proof}
Let
\[
    z_k:=\begin{bmatrix}V_k-V^\star\\ A_k-A^\star\end{bmatrix}.
\]
By~\Cref{lem:va_block_exact_sls,lem:va_block_stochastic_convex_hull}, the block recursion has the form
\[
    z_{k+1}=B_{\mu_k}^{VA}z_k,
    \qquad B_{\mu_k}^{VA}\in\co(\mathcal B^{VA}(\alpha,\beta)).
\]
The assumption $\rho(\mathcal B^{VA}(\alpha,\beta))<1$ therefore allows us to apply~\Cref{lem:standard_jsr_convergence} to the family $\mathcal B^{VA}(\alpha,\beta)$ on the product space $\Vsub\times\Asub$. Hence, for every $\varepsilon>0$ with $\beta_\varepsilon=\rho(\mathcal B^{VA}(\alpha,\beta))+\varepsilon<1$,
\[
    p_\varepsilon(z_k)\le \beta_\varepsilon^k p_\varepsilon(z_0),
\]
which is \eqref{eq:va_block_sls_pepsilon_decay}. The norm-equivalence bounds for the same JSR Lyapunov norm give
\[
    \norm{z_k}_2\le p_\varepsilon(z_k)\le \beta_\varepsilon^k p_\varepsilon(z_0)
    \le \sqrt{C_\varepsilon}\,\beta_\varepsilon^k\norm{z_0}_2,
\]
which is \eqref{eq:va_block_sls_euclidean_decay}. Since $\beta_\varepsilon<1$, $z_k\to0$.
Finally, because $V_k-V^\star=\Pi(Q_k-Q^\star)$ and $A_k-A^\star=\Pi_\perp(Q_k-Q^\star)$, and because $\Pi$ and $\Pi_\perp$ are orthogonal projections onto orthogonal subspaces,
\[
    \norm{z_k}_2^2=\norm{\Pi(Q_k-Q^\star)}_2^2+\norm{\Pi_\perp(Q_k-Q^\star)}_2^2
    =\norm{Q_k-Q^\star}_2^2.
\]
Thus $z_k\to0$ implies $Q_k-Q^\star\to0$.
\end{proof}

\subsection{Deterministic dueling Q-learning in Q-space}\label{sec:qspace_mean_updates}
The same deterministic mean recursion can be run directly in Q-space. Adding the $V$- and $A$-increments in \eqref{eq:tabular_value_update}--\eqref{eq:tabular_advantage_update} gives
\begin{equation}\label{eq:tabular_VA_Qspace}
    Q_{k+1}=Q_k+C^{VA}D\bigl(F(Q_k)-Q_k\bigr),
\end{equation}
where
\begin{equation}\label{eq:CVA}
    C^{VA}=|\Aset|\alpha\Pi+\beta\Pi_\perp.
\end{equation}
This is the induced Q-only recursion, and no separate $V$ or $A$ iterate is maintained. The same update is summarized in~\Cref{alg:centered_va_q_only_vector}.
\begin{algorithm}[t]
\caption{Dueling Q-learning: induced Q-only vector form}\label{alg:centered_va_q_only_vector}
\begin{algorithmic}[1]
\STATE Initialize $Q_0\in\R^n$.
\FOR{$k=0,1,\ldots$}
\STATE $Q_{k+1}\leftarrow Q_k+(|\Aset|\alpha\Pi+\beta\Pi_\perp)D\bigl(F(Q_k)-Q_k\bigr)$.
\ENDFOR
\end{algorithmic}
\end{algorithm}
The elementwise form of the induced Q-only recursion is given in~\Cref{alg:centered_va_q_only_elementwise} in Appendix. Note that if the centered components are needed after the Q-update, they are recovered by the decomposition in~\eqref{eq:tab_a_centered}. When the gains satisfy $|\Aset|\alpha=\beta=\eta$, the preconditioner becomes $C^{VA}=\eta(\Pi+\Pi_\perp)=\eta I=C^Q$. In this case, the Q-only recursion in~\Cref{alg:centered_va_q_only_vector} reduces exactly to the standard deterministic Q-learning in~\eqref{eq:standard_tabular_Q}.

It is immediate that~\Cref{alg:centered_va_vector} induces the Q-only update in~\Cref{alg:centered_va_q_only_vector} after the $V$- and $A$-increments are added. Although this implication is clear in the forward direction, the reverse identification requires a separate check. The next lemma checks that the two descriptions generate the same iterates.
\begin{lemma}\label{lem:centered_va_q_only_equivalence}
Suppose $Q_0\in\R^n$, and~\Cref{alg:centered_va_vector} is initialized with $V_0=\Pi Q_0\in\R^n$ and $A_0=\Pi_\perp Q_0\in\R^n$, and~\Cref{alg:centered_va_q_only_vector} is initialized with the same $Q_0$. Then the two algorithms generate the same $Q_k$-sequence. Conversely, if $(Q_k)$ is generated by~\Cref{alg:centered_va_q_only_vector} and we define
\[
    V_k:=\Pi Q_k,
    \qquad
    A_k:=\Pi_\perp Q_k,
\]
then $(V_k,A_k,Q_k)$ satisfies~\Cref{alg:centered_va_vector}. Hence~\Cref{alg:centered_va_vector,alg:centered_va_q_only_vector} are equivalent under the centered identification
\[
    Q\longleftrightarrow(\Pi Q,\Pi_\perp Q).
\]
\end{lemma}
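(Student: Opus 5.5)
The plan is to argue by induction on $k$, with the invariant that the centered pair of~\Cref{alg:centered_va_vector} always equals the centered decomposition of its own reconstructed $Q$. Concretely, the invariant for the forward direction is $V_k=\Pi Q_k$, $A_k=\Pi_\perp Q_k$, and $Q_k^{\mathrm{VA}}=Q_k^{\mathrm{Q}}$. Here $Q_k^{\mathrm{VA}}$ denotes the $Q$-iterate of~\Cref{alg:centered_va_vector} and $Q_k^{\mathrm{Q}}$ that of~\Cref{alg:centered_va_q_only_vector}.

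\emph{Base case.} By the initialization, $V_0+A_0=\Pi Q_0+\Pi_\perp Q_0=Q_0$ by~\eqref{eq:tab_VA_statewise_decomposition}. This is also the initial iterate of the Q-only algorithm.

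\emph{Inductive step.} Assume the invariant at step $k$ and write $Q_k$ for the common iterate. Both algorithms evaluate the same residual $r_k:=D(F(Q_k)-Q_k)$. Adding the two updates~\eqref{eq:tabular_value_update}--\eqref{eq:tabular_advantage_update} gives
\[
    V_{k+1}+A_{k+1}=Q_k+(|\Aset|\alpha\Pi+\beta\Pi_\perp)r_k=Q_k+C^{VA}r_k,
\]
which is exactly the Q-only update~\eqref{eq:tabular_VA_Qspace}. So the two $Q$-iterates agree at step $k+1$.

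We also need to propagate the centering. Using $\Pi^2=\Pi$, $\Pi\Pi_\perp=\Pi-\Pi^2=0$, and $\Pi_\perp^2=\Pi_\perp$ (all from~\Cref{lem:mean_projection_matrix_form}), we get
\[
    \Pi Q_{k+1}=\Pi V_k+\Pi A_k+|\Aset|\alpha\Pi r_k=V_k+|\Aset|\alpha\Pi r_k=V_{k+1}.
\]
This uses $\Pi V_k=V_k$ and $\Pi A_k=\Pi\Pi_\perp Q_k=0$. Symmetrically, $\Pi_\perp Q_{k+1}=A_{k+1}$. This closes the induction.

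\emph{Converse.} Let $(Q_k)$ come from~\Cref{alg:centered_va_q_only_vector}, and set $V_k:=\Pi Q_k$ and $A_k:=\Pi_\perp Q_k$. Then $V_0,A_0$ match the initialization of~\Cref{alg:centered_va_vector}, and $V_k+A_k=Q_k$. Applying $\Pi$ to~\eqref{eq:tabular_VA_Qspace}, with the same projection identities, gives
\[
    V_{k+1}=\Pi Q_{k+1}=V_k+|\Aset|\alpha\Pi r_k,
\]
which is~\eqref{eq:tabular_value_update}. Applying $\Pi_\perp$ gives $A_{k+1}=A_k+\beta\Pi_\perp r_k$, which is~\eqref{eq:tabular_advantage_update}. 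Hence the triple $(V_k,A_k,Q_k)$ satisfies every line of~\Cref{alg:centered_va_vector}. Combining the two directions yields the equivalence under $Q\leftrightarrow(\Pi Q,\Pi_\perp Q)$.

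There is no real obstacle. The only point needing care is the centering invariant, since without it the reconstruction $V_k+A_k$ would not determine $(V_k,A_k)$. The step that carries this is the orthogonality $\Pi\Pi_\perp=\Pi_\perp\Pi=0$, which I would state explicitly.
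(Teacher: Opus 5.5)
Your proof is correct and follows the paper's argument. In the forward direction, adding the $V$- and $A$-increments gives the Q-only update; in the converse, applying $\Pi$ and $\Pi_\perp$ to that update with $\Pi^2=\Pi$ and $\Pi\Pi_\perp=\Pi_\perp\Pi=0$ recovers the two component updates. Your explicit induction with the centering invariant is a harmless addition: the $Q$-sequences already coincide because both satisfy the same recursion from the same $Q_0$, but the invariant also shows that the $(V_k,A_k)$ of the first algorithm equal $(\Pi Q_k,\Pi_\perp Q_k)$.
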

\begin{proof}
Let $\Delta_k=F(Q_k)-Q_k$. If~\Cref{alg:centered_va_vector} is run, then
\[
    Q_{k+1}=V_{k+1}+A_{k+1}=V_k+A_k+|\Aset|\alpha\Pi D\Delta_k+\beta\Pi_\perp D\Delta_k
    =Q_k+(|\Aset|\alpha\Pi+\beta\Pi_\perp)D\Delta_k,
\]
which is exactly~\Cref{alg:centered_va_q_only_vector}.

Conversely, suppose~\Cref{alg:centered_va_q_only_vector} is run and set $V_k=\Pi Q_k$ and $A_k=\Pi_\perp Q_k$. Applying $\Pi$ to the~\Cref{alg:centered_va_q_only_vector} update gives
\[
    V_{k+1}=\Pi Q_{k+1}=\Pi Q_k+\Pi(|\Aset|\alpha\Pi+\beta\Pi_\perp)D\Delta_k=V_k+|\Aset|\alpha\Pi D\Delta_k,
\]
using $\Pi^2=\Pi$ and $\Pi\Pi_\perp=0$. Similarly, applying $\Pi_\perp$ gives
\[
    A_{k+1}=\Pi_\perp Q_{k+1}=\Pi_\perp Q_k+\Pi_\perp(|\Aset|\alpha\Pi+\beta\Pi_\perp)D\Delta_k=A_k+\beta\Pi_\perp D\Delta_k,
\]
using $\Pi_\perp\Pi=0$ and $\Pi_\perp^2=\Pi_\perp$. Finally, $Q_k=V_k+A_k$ follows from $I=\Pi+\Pi_\perp$. Hence the reconstructed variables satisfy~\Cref{alg:centered_va_vector}.
\end{proof}
The modes of the induced Q-space recursion associated with deterministic policies and the corresponding Q-space SLS are summarized next.
\begin{lemma}\label{lem:va_modes}
For a stochastic policy $\mu:\Sset\to\Delta_{|\Aset|}$, define
\begin{equation}\label{eq:tabular_AVA_stochastic}
    A_\mu^{VA}:=I-C^{VA}D(I-\gamma P\Gamma^\mu)=I+C^{VA}D(\gamma P\Gamma^\mu-I)\in\R^{n\times n}.
\end{equation}
For a deterministic policy $\pi\in\Theta$, set
\begin{equation}\label{eq:tabular_AVA}
    A_\pi^{VA}:=I-(|\Aset|\alpha\Pi+\beta\Pi_\perp)D(I-\gamma P\Gamma^\pi).
\end{equation}
The deterministic-policy switching family associated with the centered dueling Q-learning Q-space recursion with gains $(\alpha,\beta)$ is
\[
\mathcal A^{VA}(\alpha,\beta):=\{A_\pi^{VA}:\pi\in\Theta\}\subset\R^{n\times n}.
\]
For the tabular dueling Q-learning recursion \eqref{eq:tabular_VA_Qspace}, there exists a stochastic policy $\mu_k:\Sset\to\Delta_{|\Aset|}$ such that
\begin{equation}\label{eq:tabular_VA_exact_sls_Qspace}
    Q_{k+1}-Q^\star=A_{\mu_k}^{VA}(Q_k-Q^\star),
\end{equation}
where $A_{\mu_k}^{VA}$ is defined in \eqref{eq:tabular_AVA_stochastic}. Moreover, $A_{\mu_k}^{VA}\in\co(\mathcal A^{VA}(\alpha,\beta))$.
\end{lemma}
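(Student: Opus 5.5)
The plan mirrors the argument for standard Q-learning in \Cref{lem:standard_q_modes}, with the scalar gain $\eta$ replaced by the preconditioner $C^{VA}$ from \eqref{eq:CVA}. I would first subtract the fixed-point identity. Since $Q^\star=F(Q^\star)$, we have $F(Q^\star)-Q^\star=0$, and subtracting this zero term from \eqref{eq:tabular_VA_Qspace} gives
\[
    Q_{k+1}-Q^\star=(Q_k-Q^\star)+C^{VA}D\bigl(F(Q_k)-F(Q^\star)-(Q_k-Q^\star)\bigr).
\]

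Next I would apply \Cref{lem:bellman_difference_selector} with $Q=Q_k$ and $\overline Q=Q^\star$. This yields a stochastic policy $\mu_k:=\mu_{Q_k,Q^\star}$ such that $F(Q_k)-F(Q^\star)=\gamma P\Gamma^{\mu_k}(Q_k-Q^\star)$. Substituting and collecting terms gives
\[
    Q_{k+1}-Q^\star=\bigl(I-C^{VA}D(I-\gamma P\Gamma^{\mu_k})\bigr)(Q_k-Q^\star)=A^{VA}_{\mu_k}(Q_k-Q^\star),
\]
which is \eqref{eq:tabular_VA_exact_sls_Qspace}. The second expression in \eqref{eq:tabular_AVA_stochastic} is just the sign rearrangement $-(I-\gamma P\Gamma^\mu)=\gamma P\Gamma^\mu-I$.

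For the convex-hull inclusion I would use the convex weights $\lambda_\pi:=\prod_s\mu_k(\pi(s)\mid s)$ from the proof of \Cref{lem:policy_selector_convex_hull}. That proof shows $\Gamma^{\mu_k}=\sum_{\pi\in\Theta}\lambda_\pi\Gamma^\pi$ with $\sum_\pi\lambda_\pi=1$. The map $\Gamma\mapsto I-C^{VA}D(I-\gamma P\Gamma)$ is affine in $\Gamma$, and the weights sum to one, so the identity and constant terms are reproduced exactly. Hence
\[
    A^{VA}_{\mu_k}=\sum_{\pi\in\Theta}\lambda_\pi\bigl(I-C^{VA}D(I-\gamma P\Gamma^\pi)\bigr)=\sum_{\pi\in\Theta}\lambda_\pi A^{VA}_\pi\in\co(\mathcal A^{VA}(\alpha,\beta)).
\]

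No step is a real obstacle. The only point that needs care is the convex-hull claim: one must note that affinity, together with the weights summing to one, lets the convex combination pass through the $I$ and $C^{VA}D$ terms. The paper's statement of \Cref{lem:policy_selector_convex_hull} gives the decomposition only for $P\Gamma^\mu$, so I would either cite the intermediate identity for $\Gamma^\mu$ from its proof or argue via $P\Gamma^\mu$ directly. The latter also works, because $\mu$ enters $A^{VA}_\mu$ only through $P\Gamma^\mu$.
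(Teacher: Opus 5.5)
Your proposal is correct and matches the paper's proof: subtract $Q^\star=F(Q^\star)$, use \Cref{lem:bellman_difference_selector} to write $F(Q_k)-F(Q^\star)=\gamma P\Gamma^{\mu_k}(Q_k-Q^\star)$, collect terms into $A^{VA}_{\mu_k}$, and get the convex-hull inclusion from \Cref{lem:policy_selector_convex_hull}, since $\mu\mapsto A^{VA}_\mu$ is affine in $P\Gamma^\mu$. Writing out the product weights $\lambda_\pi$ is just an unpacked version of the same affine argument.
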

\begin{proof}
Since $Q^\star=F(Q^\star)$, subtracting \eqref{eq:Qstar_tabular_optimal} from \eqref{eq:tabular_VA_Qspace} gives
\[
    Q_{k+1}-Q^\star=(Q_k-Q^\star)+C^{VA}D\bigl(F(Q_k)-Q_k-F(Q^\star)+Q^\star\bigr)
\]
\[
    =(Q_k-Q^\star)+C^{VA}D\bigl(F(Q_k)-F(Q^\star)-(Q_k-Q^\star)\bigr).
\]
By~\Cref{lem:bellman_difference_selector}, there exists a stochastic policy $\mu_k:\Sset\to\Delta_{|\Aset|}$ such that
\[
    F(Q_k)-F(Q^\star)=\gamma P\Gamma^{\mu_k}(Q_k-Q^\star).
\]
Substitution gives
\[
    Q_{k+1}-Q^\star=(Q_k-Q^\star)+C^{VA}D\bigl(\gamma P\Gamma^{\mu_k}(Q_k-Q^\star)-(Q_k-Q^\star)\bigr)
\]
\[
    =\bigl(I-C^{VA}D(I-\gamma P\Gamma^{\mu_k})\bigr)(Q_k-Q^\star).
\]
This proves \eqref{eq:tabular_VA_exact_sls_Qspace}. The convex-hull inclusion follows directly from~\Cref{lem:policy_selector_convex_hull}, because the map
\[
    M\longmapsto I-C^{VA}D(I-\gamma M)
\]
is affine in $M=P\Gamma^\mu$. Hence $A_{\mu_k}^{VA}\in\co(\mathcal A^{VA}(\alpha,\beta))$.
\end{proof}

The next lemma connects the Q-space modes with the block modes introduced in~\Cref{lem:va_block_exact_sls}.

\begin{lemma}\label{lem:va_block_similarity}
Let $T:\R^n\to\Vsub\times\Asub$ and $T^{-1}:\Vsub\times\Asub\to\R^n$ be defined by
\[
    Tx:=\begin{bmatrix}\Pi x\\ \Pi_\perp x\end{bmatrix},
    \qquad
    T^{-1}\begin{bmatrix}u\\v\end{bmatrix}:=u+v,
    \qquad (u,v)\in\Vsub\times\Asub.
\]
Then $T$ is a linear isomorphism from $\R^n$ onto $\Vsub\times\Asub$. If $\widehat B_\pi^{VA}$ denotes the restriction of $B_\pi^{VA}$ to $\Vsub\times\Asub$, then, for every $\pi\in\Theta$,
\[
    \widehat B_\pi^{VA}=\begin{bmatrix}
    \Pi A_\pi^{VA}\Pi&\Pi A_\pi^{VA}\Pi_\perp\\
    \Pi_\perp A_\pi^{VA}\Pi&\Pi_\perp A_\pi^{VA}\Pi_\perp
    \end{bmatrix}\bigg|_{\Vsub\times\Asub}=T A_\pi^{VA}T^{-1}.
\]
Consequently,
\[
    \rho(\mathcal B^{VA}(\alpha,\beta))=\rho(\mathcal A^{VA}(\alpha,\beta)),
\]
where the JSR of $\mathcal B^{VA}(\alpha,\beta)$ may be computed either as the ambient $2n$-dimensional matrix-family JSR or as the induced JSR on $\Vsub\times\Asub$.
\end{lemma}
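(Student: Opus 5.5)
Three facts have to be established: $T$ is an isomorphism; the restricted block operator equals the compressed $A_\pi^{VA}$ and is conjugate to it via $T$; and the two JSRs coincide. I take them in that order.

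\emph{$T$ is an isomorphism.} By \Cref{lem:mean_projection_matrix_form}, $\Pi$ and $\Pi_\perp=I-\Pi$ are complementary orthogonal projections. Hence $Tx\in\Vsub\times\Asub$ for every $x$, and $T^{-1}Tx=\Pi x+\Pi_\perp x=x$. For $(u,v)\in\Vsub\times\Asub$ we have $\Pi u=u$, $\Pi v=0$, $\Pi_\perp u=0$ and $\Pi_\perp v=v$. Therefore $T(u+v)=(u,v)$, so the two maps are mutually inverse linear bijections. I will also record that $T$ is an isometry, $\norm{Tx}_2=\norm{x}_2$, as noted at the end of \Cref{lem:va_block_sls_convergence}.

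\emph{Block identity and similarity.} Write $M_\pi:=D(I-\gamma P\Gamma^\pi)$, so that $A_\pi^{VA}=I-(|\Aset|\alpha\Pi+\beta\Pi_\perp)M_\pi$. Using $\Pi^2=\Pi$ and $\Pi\Pi_\perp=0$, I compute
\[
\Pi A_\pi^{VA}\Pi=\Pi-|\Aset|\alpha\Pi M_\pi\Pi,\qquad \Pi A_\pi^{VA}\Pi_\perp=-|\Aset|\alpha\Pi M_\pi\Pi_\perp,
\]
and symmetrically for the $\Pi_\perp$ row. This reproduces exactly the four blocks of \eqref{eq:VA_block_coupled_sls}, so the displayed block form holds even before restriction.

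For the similarity, take $(u,v)\in\Vsub\times\Asub$. Then $TA_\pi^{VA}T^{-1}(u,v)=(\Pi A_\pi^{VA}(u+v),\Pi_\perp A_\pi^{VA}(u+v))$. Inserting $u=\Pi u$ and $v=\Pi_\perp v$ turns this into the block matrix applied to $(u,v)$. The same computation shows that $B_\pi^{VA}$ maps $\Vsub\times\Asub$ into itself, since its first block row has range in $\range(\Pi)$ and its second in $\range(\Pi_\perp)$. So the restriction $\widehat B_\pi^{VA}$ is well defined.

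\emph{Equality of JSRs.} I expect this to be the only delicate step, because $B_\pi^{VA}$ acts on $\R^{2n}$ while $\Vsub\times\Asub$ has dimension $n$.

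For the induced JSR, products satisfy $\widehat B_{\pi_k}\cdots\widehat B_{\pi_1}=TA_{\pi_k}^{VA}\cdots A_{\pi_1}^{VA}T^{-1}$. Since $T$ is an isometry, the operator norms agree, so the induced JSR equals $\rho(\mathcal A^{VA}(\alpha,\beta))$ exactly. One could instead use independence of the JSR from the chosen norm.

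For the ambient JSR, I decompose $\R^{2n}=W\oplus W^\perp$ with $W=\Vsub\times\Asub=\range(\diag(\Pi,\Pi_\perp))$. Here I need to check that $B_\pi^{VA}$ annihilates $W^\perp=\range(\diag(\Pi_\perp,\Pi))$. This holds because every block has $\Pi$ on the right in the first column and $\Pi_\perp$ on the right in the second, so $B_\pi^{VA}\diag(\Pi_\perp,\Pi)=0$. In the orthogonal decomposition, every $B_\pi^{VA}$ is therefore block upper-triangular, with diagonal blocks $\widehat B_\pi^{VA}$ and $0$ (it maps $W^\perp$ to $0$ and $W$ into $W$). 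In fact $B_\pi^{VA}=\widehat B_\pi^{VA}P_W$, where $P_W$ is the orthogonal projection onto $W$. Hence any product equals $\widehat B_{\pi_k}\cdots\widehat B_{\pi_1}P_W$, whose norm equals the norm of the restricted product, because $\norm{P_W}=1$ and $P_W$ is the identity on $W$. Consequently the ambient JSR equals the induced JSR, which equals $\rho(\mathcal A^{VA}(\alpha,\beta))$.
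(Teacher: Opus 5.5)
Your proof is correct and follows the same route as the paper: you verify the four projected blocks using $\Pi^2=\Pi$ and $\Pi\Pi_\perp=0$, check that $T$ and $(u,v)\mapsto u+v$ are mutual inverses, conjugate products through $T$, and then compare the ambient and restricted JSRs. The differences are minor. You use that $T$ is an isometry where the paper invokes norm equivalence. Your identity $B_\pi^{VA}=\widehat B_\pi^{VA}P_W$, obtained from $B_\pi^{VA}\diag(\Pi_\perp,\Pi)=0$, gives exact equality of product norms, whereas the paper uses a two-sided bound: restriction gives one direction, and for the other, every product stays in $\Vsub\times\Asub$ after the first factor, which is uniformly bounded.
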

\begin{proof}
We first verify the block formula. Substituting \eqref{eq:tabular_AVA} into the four projected blocks gives
\[
\begin{aligned}
    \Pi A_\pi^{VA}\Pi
    &=\Pi\{I-(|\Aset|\alpha\Pi+\beta\Pi_\perp)D(I-\gamma P\Gamma^\pi)\}\Pi\\
    &=\Pi^2-\bigl(|\Aset|\alpha\Pi^2+\beta\Pi\Pi_\perp\bigr)D(I-\gamma P\Gamma^\pi)\Pi\\
    &=\Pi-|\Aset|\alpha\Pi D(I-\gamma P\Gamma^\pi)\Pi,
\end{aligned}
\]
\[
\begin{aligned}
    \Pi A_\pi^{VA}\Pi_\perp
    &=\Pi\{I-(|\Aset|\alpha\Pi+\beta\Pi_\perp)D(I-\gamma P\Gamma^\pi)\}\Pi_\perp\\
    &=\Pi\Pi_\perp-\bigl(|\Aset|\alpha\Pi^2+\beta\Pi\Pi_\perp\bigr)D(I-\gamma P\Gamma^\pi)\Pi_\perp\\
    &=-|\Aset|\alpha\Pi D(I-\gamma P\Gamma^\pi)\Pi_\perp,
\end{aligned}
\]
\[
\begin{aligned}
    \Pi_\perp A_\pi^{VA}\Pi
    &=\Pi_\perp\{I-(|\Aset|\alpha\Pi+\beta\Pi_\perp)D(I-\gamma P\Gamma^\pi)\}\Pi\\
    &=\Pi_\perp\Pi-\bigl(|\Aset|\alpha\Pi_\perp\Pi+\beta\Pi_\perp^2\bigr)D(I-\gamma P\Gamma^\pi)\Pi\\
    &=-\beta\Pi_\perp D(I-\gamma P\Gamma^\pi)\Pi,
\end{aligned}
\]
and
\[
\begin{aligned}
    \Pi_\perp A_\pi^{VA}\Pi_\perp
    &=\Pi_\perp\{I-(|\Aset|\alpha\Pi+\beta\Pi_\perp)D(I-\gamma P\Gamma^\pi)\}\Pi_\perp\\
    &=\Pi_\perp^2-\bigl(|\Aset|\alpha\Pi_\perp\Pi+\beta\Pi_\perp^2\bigr)D(I-\gamma P\Gamma^\pi)\Pi_\perp\\
    &=\Pi_\perp-\beta\Pi_\perp D(I-\gamma P\Gamma^\pi)\Pi_\perp.
\end{aligned}
\]
These four identities are exactly the four blocks in \eqref{eq:VA_block_coupled_sls} with $\mu=\pi$.

The map $T$ is bijective from $\R^n$ onto $\Vsub\times\Asub$, with inverse $T^{-1}(u,v)=u+v$. Indeed, if $u\in\Vsub$ and $v\in\Asub$, then
\[
    TT^{-1}\begin{bmatrix}u\\v\end{bmatrix}
    =\begin{bmatrix}\Pi(u+v)\\ \Pi_\perp(u+v)\end{bmatrix}
    =\begin{bmatrix}u\\v\end{bmatrix},
\]
and $T^{-1}Tx=(\Pi+\Pi_\perp)x=x$ for every $x\in\R^n$.
For $x\in\R^n$, using $x=(\Pi+\Pi_\perp)x$ and the block expression just proved,
\[
\begin{aligned}
    B_\pi^{VA}Tx
    &=\begin{bmatrix}
        \Pi A_\pi^{VA}\Pi&\Pi A_\pi^{VA}\Pi_\perp\\
        \Pi_\perp A_\pi^{VA}\Pi&\Pi_\perp A_\pi^{VA}\Pi_\perp
      \end{bmatrix}
      \begin{bmatrix}\Pi x\\ \Pi_\perp x\end{bmatrix}\\
    &=\begin{bmatrix}
        \Pi A_\pi^{VA}(\Pi+\Pi_\perp)x\\
        \Pi_\perp A_\pi^{VA}(\Pi+\Pi_\perp)x
      \end{bmatrix}
      =\begin{bmatrix}\Pi A_\pi^{VA}x\\ \Pi_\perp A_\pi^{VA}x\end{bmatrix}
      =TA_\pi^{VA}x.
\end{aligned}
\]
Therefore $\widehat B_\pi^{VA}T=TA_\pi^{VA}$ and, as operators between the isomorphic spaces $\R^n$ and $\Vsub\times\Asub$, $\widehat B_\pi^{VA}=T A_\pi^{VA}T^{-1}$. For every product,
\[
    \widehat B_{\pi_k}^{VA}\cdots \widehat B_{\pi_1}^{VA}
    =T A_{\pi_k}^{VA}\cdots A_{\pi_1}^{VA}T^{-1}.
\]
Taking norms, using finite-dimensional norm equivalence for the fixed isomorphism $T$, and then taking $k$th roots and the limit in the JSR definition gives
\[
    \rho(\{\widehat B_\pi^{VA}:\pi\in\Theta\})=\rho(\mathcal A^{VA}(\alpha,\beta)).
\]
Finally, each ambient matrix $B_\pi^{VA}$ maps $\R^n\times\R^n$ into $\Vsub\times\Asub$ and leaves $\Vsub\times\Asub$ invariant. Hence the ambient JSR of $\mathcal B^{VA}(\alpha,\beta)$ equals the restricted JSR on $\Vsub\times\Asub$: one inequality is obtained by restricting the ambient norm to the invariant subspace, and the reverse follows because after the first factor every product evolves inside $\Vsub\times\Asub$ and the first-factor norm is uniformly bounded over the finite family. This proves the stated equality.
\end{proof}

This similarity result is useful because it confirms that the block AV-space analysis and the Q-space analysis certify the same stability property. Thus no conservatism is introduced merely by choosing one coordinate description over the other; the choice only changes the representation of the modes.

Next, the Q-space switching family introduced in~\Cref{lem:va_modes} gives a direct convergence criterion for the Q-only recursion in~\Cref{alg:centered_va_q_only_vector}.
\begin{proposition}\label{prop:centered_va_q_only_vector_convergence}
Let $Q_k\in\R^n$ be generated by~\Cref{alg:centered_va_q_only_vector}, and suppose that for given step-sizes $\alpha$ and $\beta$, the following inequality holds:
\[
    \rho(\mathcal A^{VA}(\alpha,\beta))<1,
\]
where $\mathcal A^{VA}(\alpha,\beta)$ is introduced in~\Cref{lem:va_modes}. Then $Q_k\to Q^\star$. More precisely, for every $\varepsilon>0$ such that
\[
    \beta_\varepsilon:=\rho(\mathcal A^{VA}(\alpha,\beta))+\varepsilon<1,
\]
the JSR Lyapunov norm $p_\varepsilon$ from~\Cref{lem:common_lyapunov_construction}, applied to the family $\mathcal A^{VA}(\alpha,\beta)$, satisfies
\begin{equation}\label{eq:centered_va_q_only_pepsilon_decay}
    p_\varepsilon(Q_k-Q^\star)\le \beta_\varepsilon^k p_\varepsilon(Q_0-Q^\star),\qquad k\ge0.
\end{equation}
If $C_\varepsilon$ is the corresponding norm-equivalence constant, then
\begin{equation}\label{eq:centered_va_q_only_euclidean_decay}
    \norm{Q_k-Q^\star}_2\le \sqrt{C_\varepsilon}\,\beta_\varepsilon^k\norm{Q_0-Q^\star}_2,
    \qquad k\ge0.
\end{equation}
\end{proposition}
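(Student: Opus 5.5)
The proof will be a direct application of the generic JSR bridge in~\Cref{lem:standard_jsr_convergence} to the Q-space switching family $\mathcal A^{VA}(\alpha,\beta)$. First, I set $x_k:=Q_k-Q^\star$. By~\Cref{lem:va_modes}, for each $k\ge0$ there is a stochastic policy $\mu_k$ such that $x_{k+1}=A^{VA}_{\mu_k}x_k$. The same lemma also gives $A^{VA}_{\mu_k}\in\co(\mathcal A^{VA}(\alpha,\beta))$. This is precisely the recursion form $x_{k+1}=A_kx_k$ with $A_k\in\co(\Hfam)$ required in~\Cref{lem:standard_jsr_convergence}, with $\Hfam=\mathcal A^{VA}(\alpha,\beta)$. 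This family is finite because $\Theta$ is finite, having $|\Aset|^{|\Sset|}$ elements.

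Second, I fix $\varepsilon>0$ with $\beta_\varepsilon=\rho(\mathcal A^{VA}(\alpha,\beta))+\varepsilon<1$. I then build $V_\varepsilon^\infty$ and $p_\varepsilon$ from~\Cref{lem:common_lyapunov_construction} for this family. If $\beta_\varepsilon$ is not strictly positive one can enlarge $\varepsilon$ slightly, although $\rho\ge0$ and $\varepsilon>0$ already give $\beta_\varepsilon\in(0,1)$. \Cref{lem:standard_jsr_convergence} then yields the one-step contraction $p_\varepsilon(x_{k+1})\le\beta_\varepsilon p_\varepsilon(x_k)$. Iterating it gives~\eqref{eq:centered_va_q_only_pepsilon_decay}. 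Combining this with the norm equivalence $\norm{x}_2\le p_\varepsilon(x)\le\sqrt{C_\varepsilon}\norm{x}_2$ gives~\eqref{eq:centered_va_q_only_euclidean_decay}. Since $\beta_\varepsilon<1$, the right-hand side tends to zero, and so $Q_k\to Q^\star$.

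The only point needing care is the convex-hull membership. Here it matters that the policy selected by~\Cref{lem:bellman_difference_selector} is stochastic rather than deterministic, so one must pass from stochastic to deterministic modes. This is handled in~\Cref{lem:va_modes} through~\Cref{lem:policy_selector_convex_hull} and the affinity of $M\mapsto I-C^{VA}D(I-\gamma M)$. Also, the constant $C^{VA}$ is the same for every mode, so convex combinations of $P\Gamma^\pi$ translate into convex combinations of $A^{VA}_\pi$ with the same weights. Apart from this check the argument is routine. Alternatively, one could transfer~\Cref{lem:va_block_sls_convergence} through the similarity in~\Cref{lem:va_block_similarity}, but the direct route avoids the change of coordinates.
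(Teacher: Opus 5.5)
Your proposal is correct and follows the paper's proof essentially verbatim: the paper also takes the exact switched error recursion $Q_{k+1}-Q^\star=A_{\mu_k}^{VA}(Q_k-Q^\star)$ with $A_{\mu_k}^{VA}\in\co(\mathcal A^{VA}(\alpha,\beta))$ from \Cref{lem:va_modes}. It then applies \Cref{lem:standard_jsr_convergence} with $\Hfam=\mathcal A^{VA}(\alpha,\beta)$ to obtain the $p_\varepsilon$-decay, and uses the norm equivalence $\norm{x}_2\le p_\varepsilon(x)\le\sqrt{C_\varepsilon}\norm{x}_2$ for the Euclidean bound, exactly as you do.
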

\begin{proof}
By~\Cref{lem:va_modes}, the error recursion generated by~\Cref{alg:centered_va_q_only_vector} has the exact switched form
\[
    Q_{k+1}-Q^\star=A_{\mu_k}^{VA}(Q_k-Q^\star),
    \qquad A_{\mu_k}^{VA}\in\co(\mathcal A^{VA}(\alpha,\beta)).
\]
Since $\rho(\mathcal A^{VA}(\alpha,\beta))<1$, the implication in~\Cref{lem:standard_jsr_convergence} applies with $\Hfam=\mathcal A^{VA}(\alpha,\beta)$. Therefore, for every $\varepsilon>0$ satisfying $\beta_\varepsilon=\rho(\mathcal A^{VA}(\alpha,\beta))+\varepsilon<1$,
\[
    p_\varepsilon(Q_k-Q^\star)\le \beta_\varepsilon^k p_\varepsilon(Q_0-Q^\star).
\]
This is exactly \eqref{eq:centered_va_q_only_pepsilon_decay}. The norm comparison
\[
    \norm{Q_k-Q^\star}_2\le p_\varepsilon(Q_k-Q^\star),
    \qquad
    p_\varepsilon(Q_0-Q^\star)\le \sqrt{C_\varepsilon}\norm{Q_0-Q^\star}_2
\]
gives \eqref{eq:centered_va_q_only_euclidean_decay}. Since $\beta_\varepsilon<1$, $(Q_k-Q^\star)\to0$, and hence $Q_k\to Q^\star$.
\end{proof}

The equality of JSRs allows a stability certificate to be proved either in Q-space or in VA-space from~\eqref{eq:tab_a_centered}. The following conservative condition gives an explicit gain interval under which all modes in $\mathcal A^{VA}(\alpha,\beta)$ are contractions in a common norm.
\begin{lemma}\label{lem:va_block_conservative_jsr}
Suppose that, with $d_{\min}$ and $d_{\max}$ defined in \eqref{eq:sampling_weight_extrema},
\[
    0<\beta\le\frac{1}{d_{\max}},
\]
and
\begin{equation}\label{eq:va_conservative_alpha_beta_band}
    \frac{\beta}{|\Aset|}\left(1-\frac{d_{\min}(1-\gamma)}{d_{\max}(1+\gamma)}\right)
    <\alpha<
    \frac{\beta}{|\Aset|}\left(1+\frac{d_{\min}(1-\gamma)}{d_{\max}(1+\gamma)}\right).
\end{equation}
Then the deterministic dueling Q-learning family $\mathcal A^{VA}(\alpha,\beta)$ in \eqref{eq:tabular_AVA} is a common $\ell_\infty$-contraction. More precisely,
\begin{equation}\label{eq:va_conservative_common_contraction}
    \sup_{\pi\in\Theta}\norm{A_\pi^{VA}}_\infty
    \le 1-\beta d_{\min}(1-\gamma)+\bigl||\Aset|\alpha-\beta\bigr|d_{\max}(1+\gamma)<1.
\end{equation}
Consequently,
\begin{equation}\label{eq:va_conservative_jsr_less_than_one}
    \rho(\mathcal B^{VA}(\alpha,\beta))=\rho(\mathcal A^{VA}(\alpha,\beta))<1.
\end{equation}
\end{lemma}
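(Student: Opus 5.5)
Write $C^{VA}=|\Aset|\alpha\Pi+\beta\Pi_\perp=\beta I+(|\Aset|\alpha-\beta)\Pi$. This splits each mode as
\[
    A_\pi^{VA}=\bigl(I-\beta D(I-\gamma P\Gamma^\pi)\bigr)-(|\Aset|\alpha-\beta)\Pi D(I-\gamma P\Gamma^\pi),
\]
i.e.\ a standard Q-learning mode $A^Q_\pi$ with step-size $\eta=\beta$, plus a perturbation. By the triangle inequality for the induced $\ell_\infty$ norm, it suffices to bound the two pieces separately, uniformly in $\pi\in\Theta$.

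\textbf{First piece.} Each row indexed by $i=(s,a)$ of $I-\beta D+\beta\gamma DP\Gamma^\pi$ has diagonal entry $1-\beta d_i\ge0$, since $\beta\le 1/d_{\max}$. The remaining entries are $\beta\gamma d_i$ times a row of the row-stochastic matrix $P\Gamma^\pi$ (Lemma~\ref{lem:policy_selector_convex_hull}), so they are nonnegative. The absolute row sum is therefore
\[
    1-\beta d_i+\beta\gamma d_i=1-\beta d_i(1-\gamma)\le 1-\beta d_{\min}(1-\gamma).
\]

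\textbf{Second piece.} Use submultiplicativity,
\[
    \norm{\Pi D(I-\gamma P\Gamma^\pi)}_\infty\le\norm{\Pi}_\infty\,\norm{D}_\infty\,\norm{I-\gamma P\Gamma^\pi}_\infty .
\]
Each row of $\Pi$ consists of $|\Aset|$ entries equal to $1/|\Aset|$, so $\norm{\Pi}_\infty=1$. Next, $\norm{D}_\infty=d_{\max}$. Finally, $\norm{I-\gamma P\Gamma^\pi}_\infty\le 1+\gamma$ because $P\Gamma^\pi$ is row-stochastic. The second piece is thus bounded by $\bigl||\Aset|\alpha-\beta\bigr|\,d_{\max}(1+\gamma)$.

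\textbf{Conclusion.} Adding the two bounds gives the middle expression in \eqref{eq:va_conservative_common_contraction}. The strict inequality $<1$ is equivalent to
\[
    \bigl||\Aset|\alpha-\beta\bigr|<\frac{\beta d_{\min}(1-\gamma)}{d_{\max}(1+\gamma)},
\]
which is exactly the band \eqref{eq:va_conservative_alpha_beta_band} after dividing by $|\Aset|$. Since the induced $\ell_\infty$ norm is submultiplicative, every product of length $k$ has norm at most $c^k$, where $c$ denotes the bound in \eqref{eq:va_conservative_common_contraction}. Taking $k$th roots in Definition~\ref{def:jsr} gives $\rho(\mathcal A^{VA}(\alpha,\beta))\le c<1$. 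Lemma~\ref{lem:va_block_similarity} then yields the equality with $\rho(\mathcal B^{VA}(\alpha,\beta))$.

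The only delicate point is the first piece: we must avoid splitting $I-\beta D$ from the $\gamma$ term with a crude triangle inequality, which would lose the factor $(1-\gamma)$. Nonnegativity of the entries, guaranteed by $\beta d_{\max}\le1$, is what makes the exact row-sum computation valid.
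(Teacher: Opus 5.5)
Your proposal is correct and follows essentially the same route as the paper: the rewrite $C^{VA}=\beta I+(|\Aset|\alpha-\beta)\Pi$, the exact row-sum bound for the entrywise-nonnegative matrix $I-\beta D(I-\gamma P\Gamma^\pi)$ (using $\beta d_{\max}\le 1$), the submultiplicative bound $\bigl||\Aset|\alpha-\beta\bigr|d_{\max}(1+\gamma)$ on the perturbation, then submultiplicativity for the JSR and \Cref{lem:va_block_similarity} for the equality. One harmless wording slip: the diagonal entry of $I-\beta D+\beta\gamma DP\Gamma^\pi$ is $1-\beta d_i+\beta\gamma d_i(P\Gamma^\pi)_{ii}$ rather than $1-\beta d_i$, but you only use entrywise nonnegativity and the row sum, so the argument is unaffected.
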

\begin{proof}
First rewrite the dueling Q-learning preconditioner using $\Pi_\perp=I-\Pi$:
\[
    C^{VA}=|\Aset|\alpha\Pi+\beta(I-\Pi)=\beta I+(|\Aset|\alpha-\beta)\Pi.
\]
Thus, for every deterministic policy $\pi$,
\[
    A_\pi^{VA}=I-C^{VA}D(I-\gamma P\Gamma^\pi)
    =I-\beta D(I-\gamma P\Gamma^\pi)-(|\Aset|\alpha-\beta)\Pi D(I-\gamma P\Gamma^\pi).
\]
We bound the two terms on the right separately in the $\ell_\infty$ matrix norm.

For the first term, write
\[
    I-\beta D(I-\gamma P\Gamma^\pi)=I-\beta D+\beta\gamma D P\Gamma^\pi.
\]
Because $P\Gamma^\pi$ is row-stochastic, it is entrywise nonnegative and each row sums to one. Since $0<\beta d(s,a)\le\beta d_{\max}\le1$, the diagonal part $I-\beta D$ is entrywise nonnegative. Hence the whole matrix is entrywise nonnegative. The row indexed by $(s,a)$ has sum
\[
    1-\beta d(s,a)+\beta\gamma d(s,a)=1-\beta d(s,a)(1-\gamma)\le1-\beta d_{\min}(1-\gamma),
\]
where we use the fact that $P\Gamma^\pi$ is row-stochastic, and it is entrywise nonnegative and each row sums to one.
For an entrywise nonnegative matrix, the $\ell_\infty$ norm is the maximum row sum. Therefore
\[
    \norm{I-\beta D(I-\gamma P\Gamma^\pi)}_\infty\le1-\beta d_{\min}(1-\gamma).
\]
For the second term, we will use submultiplicativity. The projection $\Pi$ averages within each state-action block, so every absolute row sum of $\Pi$ equals one, and hence $\norm{\Pi}_\infty=1$. Also
\[
    \norm{D}_\infty=d_{\max},
    \qquad
    \norm{I-\gamma P\Gamma^\pi}_\infty\le\norm{I}_\infty+\gamma\norm{P\Gamma^\pi}_\infty=1+\gamma.
\]
Consequently,
\[
\begin{aligned}
    \norm{(|\Aset|\alpha-\beta)\Pi D(I-\gamma P\Gamma^\pi)}_\infty
    &=\bigl||\Aset|\alpha-\beta\bigr|\,\norm{\Pi D(I-\gamma P\Gamma^\pi)}_\infty\\
    &\le \bigl||\Aset|\alpha-\beta\bigr|\,\norm{\Pi}_\infty\norm{D}_\infty\norm{I-\gamma P\Gamma^\pi}_\infty\\
    &\le \bigl||\Aset|\alpha-\beta\bigr|\cdot 1\cdot d_{\max}
        \bigl(\norm{I}_\infty+\gamma\norm{P\Gamma^\pi}_\infty\bigr)\\
    &=\bigl||\Aset|\alpha-\beta\bigr|d_{\max}(1+\gamma).
\end{aligned}
\]
Combining the two estimates by the triangle inequality gives, uniformly in $\pi$,
\[
\begin{aligned}
    \norm{A_\pi^{VA}}_\infty
    &=\norm{I-\beta D(I-\gamma P\Gamma^\pi)
        -( |\Aset|\alpha-\beta)\Pi D(I-\gamma P\Gamma^\pi)}_\infty\\
    &\le \norm{I-\beta D(I-\gamma P\Gamma^\pi)}_\infty
        +\norm{(|\Aset|\alpha-\beta)\Pi D(I-\gamma P\Gamma^\pi)}_\infty\\
    &\le 1-\beta d_{\min}(1-\gamma)+\bigl||\Aset|\alpha-\beta\bigr|d_{\max}(1+\gamma).
\end{aligned}
\]
The interval in \eqref{eq:va_conservative_alpha_beta_band} is exactly
\[
\begin{aligned}
    -\frac{\beta d_{\min}(1-\gamma)}{d_{\max}(1+\gamma)}
    &< |\Aset|\alpha-\beta
    <\frac{\beta d_{\min}(1-\gamma)}{d_{\max}(1+\gamma)},\\
    \text{or equivalently}\qquad
    \bigl||\Aset|\alpha-\beta\bigr|
    &<\frac{\beta d_{\min}(1-\gamma)}{d_{\max}(1+\gamma)}.
\end{aligned}
\]
Multiplying the last strict inequality by $d_{\max}(1+\gamma)$ gives
\[
    \bigl||\Aset|\alpha-\beta\bigr|d_{\max}(1+\gamma)<\beta d_{\min}(1-\gamma).
\]
Substituting this into the preceding norm estimate yields, for every $\pi\in\Theta$,
\[
\begin{aligned}
    \norm{A_\pi^{VA}}_\infty
    &\le 1-\beta d_{\min}(1-\gamma)
        +\bigl||\Aset|\alpha-\beta\bigr|d_{\max}(1+\gamma)\\
    &<1-\beta d_{\min}(1-\gamma)+\beta d_{\min}(1-\gamma)=1.
\end{aligned}
\]
Therefore
\[
    \sup_{\pi\in\Theta}\norm{A_\pi^{VA}}_\infty<1.
\]
Let
\[
    q:=\sup_{\pi\in\Theta}\norm{A_\pi^{VA}}_\infty.
\]
Then $q<1$. For any product of length $k\ge1$ with $A_{\pi_i}^{VA}\in\mathcal A^{VA}(\alpha,\beta)$,
\[
\begin{aligned}
    \norm{A_{\pi_k}^{VA}A_{\pi_{k-1}}^{VA}\cdots A_{\pi_1}^{VA}}_\infty
    &\le \prod_{i=1}^k\norm{A_{\pi_i}^{VA}}_\infty
    \le q^k.
\end{aligned}
\]
Taking the supremum over all deterministic-policy products and then the $k$th root gives
\[
    \sup_{\pi_1,\ldots,\pi_k\in\Theta}
    \norm{A_{\pi_k}^{VA}\cdots A_{\pi_1}^{VA}}_\infty^{1/k}
    \le q.
\]
Hence, by the definition of the JSR,
\[
\begin{aligned}
    \rho(\mathcal A^{VA}(\alpha,\beta))
    &=\lim_{k\to\infty}
      \sup_{\pi_1,\ldots,\pi_k\in\Theta}
      \norm{A_{\pi_k}^{VA}\cdots A_{\pi_1}^{VA}}_\infty^{1/k}\\
    &\le q
    =\sup_{\pi\in\Theta}\norm{A_\pi^{VA}}_\infty<1.
\end{aligned}
\]
Finally, the similarity result in~\Cref{lem:va_block_similarity} gives
\[
    \rho(\mathcal B^{VA}(\alpha,\beta))=\rho(\mathcal A^{VA}(\alpha,\beta))<1.
\]
This proves \eqref{eq:va_conservative_common_contraction} and \eqref{eq:va_conservative_jsr_less_than_one}.
\end{proof}

The following simple example makes the $A$-space and $V$-space rate separation explicit. It removes policy-switching complications while preserving the different gains applied to the two subspaces.
\begin{example}\label{ex:one_state}
Let us consider an MDP with one state, $\Sset=\{1\}$, and two actions, $\Aset=\{1,2\}$, with a self-loop transition and uniform sampling $D=\frac12 I_2$. For a deterministic policy $\pi_i$ that selects action $i\in\{1,2\}$, the selector matrices are
\[
    P\Gamma^{\pi_1}=\begin{bmatrix}1&0\\1&0\end{bmatrix},
    \qquad
    P\Gamma^{\pi_2}=\begin{bmatrix}0&1\\0&1\end{bmatrix}.
\]
The common and differential projections are
\[
    \Pi=\frac12\begin{bmatrix}1&1\\1&1\end{bmatrix},
    \qquad
    \Pi_\perp=\frac12\begin{bmatrix}1&-1\\-1&1\end{bmatrix}.
\]
Thus every vector $Q=(Q_1,Q_2)^\top$ has the explicit decomposition
\[
    Q=\Pi Q+\Pi_\perp Q
    =\frac{Q_1+Q_2}{2}\begin{bmatrix}1\\1\end{bmatrix}
    +\frac{Q_1-Q_2}{2}\begin{bmatrix}1\\-1\end{bmatrix}.
\]
For standard Q-learning, the two deterministic-policy modes in \eqref{eq:standard_q_error_sls} are
\[
    A_{\pi_1}^Q=\begin{bmatrix}
    1-\dfrac{\eta}{2}(1-\gamma)&0\\[0.6em]
    \dfrac{\eta\gamma}{2}&1-\dfrac{\eta}{2}
    \end{bmatrix},
    \qquad
    A_{\pi_2}^Q=\begin{bmatrix}
    1-\dfrac{\eta}{2}&\dfrac{\eta\gamma}{2}\\[0.6em]
    0&1-\dfrac{\eta}{2}(1-\gamma)
    \end{bmatrix}.
\]
Use the common/differential coordinate transform
\[
    S:=\begin{bmatrix}1&1\\1&-1\end{bmatrix},
    \qquad
    S^{-1}=\frac12\begin{bmatrix}1&1\\1&-1\end{bmatrix}.
\]
A direct calculation gives
\[
    S^{-1}A_{\pi_1}^QS=
    \begin{bmatrix}
    1-\dfrac{\eta}{2}(1-\gamma)&\dfrac{\eta\gamma}{2}\\[0.8em]
    0&1-\dfrac{\eta}{2}
    \end{bmatrix},
    \qquad
    S^{-1}A_{\pi_2}^QS=
    \begin{bmatrix}
    1-\dfrac{\eta}{2}(1-\gamma)&-\dfrac{\eta\gamma}{2}\\[0.8em]
    0&1-\dfrac{\eta}{2}
    \end{bmatrix}.
\]
Thus the action-differential coordinate contracts by the factor $|1-\eta/2|$. The action-common coordinate contracts by the factor $|1-\eta(1-\gamma)/2|$, up to the upper-right coupling from the differential coordinate. The two transformed modes are simultaneously upper triangular. For any finite family of simultaneously block upper triangular matrices, the JSR is the maximum of the JSRs of the diagonal-block families; here the diagonal blocks are scalar families. Hence
\[
    \rho(\mathcal A^Q)=\max\left\{\left|1-\frac{\eta}{2}(1-\gamma)\right|,\ \left|1-\frac{\eta}{2}\right|\right\}.
\]

For dueling Q-learning, the preconditioner and modes are
\[
    C^{VA}=2\alpha\Pi+\beta\Pi_\perp
    =\begin{bmatrix}
    \alpha+\dfrac{\beta}{2}&\alpha-\dfrac{\beta}{2}\\[0.6em]
    \alpha-\dfrac{\beta}{2}&\alpha+\dfrac{\beta}{2}
    \end{bmatrix},
\]
\[
\begin{aligned}
    A_{\pi_1}^{VA}&=I-\frac12(2\alpha\Pi+\beta\Pi_\perp)
    \left(I-\gamma\begin{bmatrix}1&0\\1&0\end{bmatrix}\right),\\
    A_{\pi_2}^{VA}&=I-\frac12(2\alpha\Pi+\beta\Pi_\perp)
    \left(I-\gamma\begin{bmatrix}0&1\\0&1\end{bmatrix}\right).
\end{aligned}
\]
Using the same transform $S$,
\[
    S^{-1}A_{\pi_1}^{VA}S=
    \begin{bmatrix}
    1-\alpha(1-\gamma)&\alpha\gamma\\[0.8em]
    0&1-\dfrac{\beta}{2}
    \end{bmatrix},
    \qquad
    S^{-1}A_{\pi_2}^{VA}S=
    \begin{bmatrix}
    1-\alpha(1-\gamma)&-\alpha\gamma\\[0.8em]
    0&1-\dfrac{\beta}{2}
    \end{bmatrix}.
\]
By the same simultaneous block upper triangular argument, the resulting JSR is
\[
    \rho(\mathcal A^{VA}(\alpha,\beta))=
    \max\left\{\left|1-\alpha(1-\gamma)\right|,\ \left|1-\frac{\beta}{2}\right|\right\}.
\]
The exact acceleration condition in this one-state model is
\[
    \rho(\mathcal A^{VA}(\alpha,\beta))<\rho(\mathcal A^Q).
\]
Equivalently,
\[
    \frac{1-\rho(\mathcal A^Q)}{1-\gamma}<\alpha<\frac{1+\rho(\mathcal A^Q)}{1-\gamma},
    \qquad
    2\bigl(1-\rho(\mathcal A^Q)\bigr)<\beta<2\bigl(1+\rho(\mathcal A^Q)\bigr).
\]
For the common practical regime $0<\eta\le1$ and $\gamma$ close to one, standard Q-learning is typically bottlenecked by the action-common factor
\[
    1-\frac{\eta}{2}(1-\gamma).
\]
Choosing
\[
    \beta=\eta,
    \qquad
    \alpha=\eta
\]
keeps the action-differential factor at $1-\eta/2$ but changes the action-common factor to
\[
    1-\eta(1-\gamma).
\]
Thus, in this example, dueling Q-learning accelerates the slow action-common component by a factor of about $2$ in time-constant terms, while leaving the action-differential factor unchanged. To illustrate the same rate effect on iterates, we simulate the deterministic one-state mean recursions with reward vector $R=(0,1)^\top$, initialization $Q_0=(0,0)^\top$, $\gamma=0.9$, and $\eta=\alpha=\beta=0.2$. The optimal vector is $Q^\star=(9,10)^\top$. The resulting trajectory comparison is shown in~\Cref{fig:ex41_error}.
\begin{figure}[t]
\centering
\includegraphics[width=0.78\textwidth]{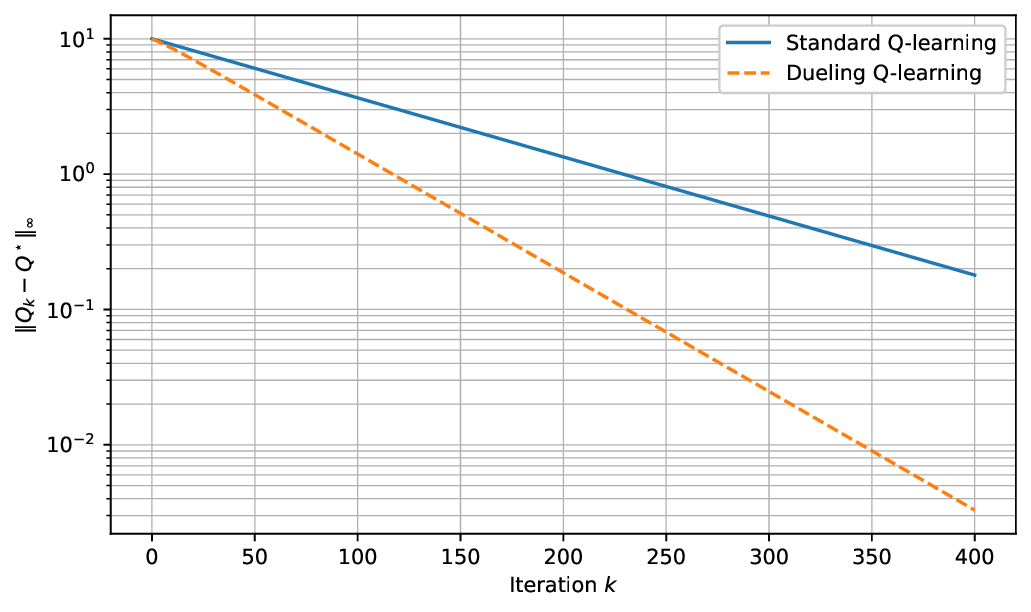}
\caption{Deterministic simulation comparing standard Q-learning and dueling Q-learning.}
\label{fig:ex41_error}
\end{figure}
\end{example}

\section{Stochastic dueling Q-learning}\label{sec:stochastic_dueling_q_learning}
This section introduces and analyzes the stochastic RL version of the deterministic dueling Q-learning recursion studied above. To keep the formulas transparent, we focus on the i.i.d. sampling model. The same conditional-mean and martingale-difference decomposition can be combined with Markovian-observation stochastic-approximation arguments, such as those in \cite{chen2024lyapunov,lee2026lyapunovcertified}, to extend the analysis beyond i.i.d. samples. At each time $k$, a state-action pair $(s_k,a_k)$ is sampled independently with
\[
    \Prob(s_k=s,a_k=a)=d(s,a):=p(s)b(a\mid s),\qquad (s,a)\in\Sset\times\Aset,
\]
where $p$ is a state-sampling distribution and $b$ is a behavior policy. Then $s'_k\sim P(\cdot\mid s_k,a_k)$ and $r_{k+1}:=r(s_k,a_k,s'_k)$ are sampled. Let $\{\mathcal F_k\}_{k\ge0}$ be the natural filtration
\[
    \mathcal F_0:=\sigma(Q_0),\qquad
    \mathcal F_k:=\sigma\bigl(Q_0,\{(s_t,a_t,s'_t,r_{t+1}):0\le t\le k-1\}\bigr),\quad k\ge1.
\]
Then $Q_k$ is $\mathcal F_k$-measurable. For the sampled transition, define
\[
    \delta_k:=r_{k+1}+\gamma\max_{c\in\Aset}Q_k(s'_k,c)-Q_k(s_k,a_k),
    \qquad
    \zeta_k:=(e_{a_k}\otimes e_{s_k})\delta_k.
\]

\subsection{AV-update}\label{sec:stochastic_va_update}

For the dueling Q-learning in AV-space, the sampled update is
\[
\begin{aligned}
    V_{k+1}(s,b)&=V_k(s,b)+\alpha\mathbf 1\{s=s_k\}\delta_k,\\
    A_{k+1}(s,b)&=A_k(s,b)+\beta\mathbf 1\{s=s_k\}
    \left(\mathbf 1\{b=a_k\}-\frac{1}{|\Aset|}\right)\delta_k,
\end{aligned}
\qquad s\in\Sset,\ b\in\Aset.
\]
The corresponding sampled implementation in AV-space is given in~\Cref{alg:centered_va_iid}.
\begin{algorithm}[t]
\caption{Dueling Q-learning in centered components}\label{alg:centered_va_iid}
\begin{algorithmic}[1]
\STATE Initialize $Q_0\in\R^n$, $V_0=\Pi Q_0$, and $A_0=\Pi_\perp Q_0$.
\FOR{$k=0,1,\ldots$}
\STATE Form $Q_k\leftarrow V_k+A_k$.
\STATE Draw $(s_k,a_k)\sim d$, draw $s'_k\sim P(\cdot\mid s_k,a_k)$, and set $r_{k+1}\leftarrow r(s_k,a_k,s'_k)$.
\STATE $\delta_k\leftarrow r_{k+1}+\gamma\max_{b\in\Aset}Q_k(s'_k,b)-Q_k(s_k,a_k)$.
\STATE Set $V_{k+1}\leftarrow V_k$ and $A_{k+1}\leftarrow A_k$.
\FOR{every $b\in\Aset$}
\STATE $V_{k+1}(s_k,b)\leftarrow V_k(s_k,b)+\alpha\delta_k$.
\ENDFOR
\FOR{every $b\in\Aset$}
\STATE $A_{k+1}(s_k,b)\leftarrow A_k(s_k,b)+\beta\bigl(\mathbf 1\{b=a_k\}-1/|\Aset|\bigr)\delta_k$.
\ENDFOR
\STATE $Q_{k+1}\leftarrow V_{k+1}+A_{k+1}$.
\ENDFOR
\end{algorithmic}
\end{algorithm}
\begin{remark}\label{rem:daley_dueling_update}
When $\beta=\alpha$, the sampled increments in~\Cref{alg:centered_va_iid} agree with the tabular dueling update derived in \cite[Section~4.1]{daley2025analysis}. In the notation of this paper, for the sampled pair $(s_k,a_k)$ and temporal-difference (TD) error $\delta_k$, that update adds
\[
    A(s_k,a)\leftarrow A(s_k,a)+\alpha\left(\mathbf 1\{a=a_k\}-\frac{1}{|\Aset|}\right)\delta_k,
    \qquad a\in\Aset,
\]
and
\[
    V(s_k)\leftarrow V(s_k)+\alpha\delta_k.
\]
The remaining difference is the stored advantage variable. Daley et al. maintain an advantage variable $\widetilde A$ that need not be centered and reconstruct
\[
    Q(s,a)=V(s)+\widetilde A(s,a)-\frac{1}{|\Aset|}\sum_{c\in\Aset}\widetilde A(s,c),
\]
so the state-wise mean of $\widetilde A$ does not affect $Q$. In contrast, the update in~\Cref{alg:centered_va_iid} maintains the centered representative $A=\Pi_\perp Q$, satisfies $\sum_{a\in\Aset}A(s,a)=0$, and reconstructs $Q=V+A$. Therefore, with centered initialization and $\beta=\alpha$, both algorithms coincide, while the internal $A$ variables follow different storage conventions.
\end{remark}

Equivalently, the update in the AV-space is
\[
\begin{bmatrix}V_{k+1}\\ A_{k+1}\end{bmatrix}
    =\begin{bmatrix}V_k\\ A_k\end{bmatrix}
    +\begin{bmatrix}|\Aset|\alpha\Pi\\ \beta\Pi_\perp\end{bmatrix}\zeta_k.
\]
Since
\[
\E[\zeta_k\mid\mathcal F_k]=D\bigl(F(Q_k)-Q_k\bigr),
\]
the martingale-difference noise is
\begin{equation}\label{eq:centered_va_iid_noise}
    w_k:=\zeta_k-D\bigl(F(Q_k)-Q_k\bigr),
\end{equation}
and satisfies
\begin{equation}\label{eq:centered_va_iid_mds}
    \E[w_k\mid\mathcal F_k]=0.
\end{equation}
Therefore, the VA-space stochastic form is
\[
\begin{bmatrix}V_{k+1}\\ A_{k+1}\end{bmatrix}
    =\begin{bmatrix}V_k\\ A_k\end{bmatrix}
    +\begin{bmatrix}|\Aset|\alpha\Pi\\ \beta\Pi_\perp\end{bmatrix}
    \{D(F(Q_k)-Q_k)+w_k\}.
\]
The conditional mean of~\Cref{alg:centered_va_iid} is the deterministic dueling Q-learning recursion \eqref{eq:tabular_value_update}--\eqref{eq:tabular_advantage_update}; the statement and proof are deferred to~\Cref{lem:centered_va_iid_conditional_mean} in Appendix.

\subsection{Q-update}\label{sec:stochastic_q_update}

Adding the two component increments in the VA-update gives the sampled Q-space update
\[
Q_{k+1}(s,b)=Q_k(s,b)+\mathbf 1\{s=s_k\}
    \left[\alpha+\beta\left(\mathbf 1\{b=a_k\}-\frac{1}{|\Aset|}\right)\right]\delta_k,
    \qquad s\in\Sset,\ b\in\Aset.
\]
Equivalently, the stochastic dueling Q-learning update can be written in Q-space as
\[
Q_{k+1}=Q_k+C^{VA}\zeta_k.
\]
Using the same martingale-difference noise $w_k$ from \eqref{eq:centered_va_iid_noise}, the Q-space stochastic form is
\begin{equation}\label{eq:centered_va_iid_qspace_noise_form}
    Q_{k+1}=Q_k+C^{VA}\{D(F(Q_k)-Q_k)+w_k\}.
\end{equation}
Subtracting $Q^\star=F(Q^\star)$ and using the measurable selection in~\Cref{lem:bellman_difference_selector}, there exists an $\mathcal F_k$-measurable stochastic policy $\mu_k:\Sset\to\Delta_{|\Aset|}$ such that
\[
    Q_{k+1}-Q^\star=A_{\mu_k}^{VA}(Q_k-Q^\star)+C^{VA}w_k,
    \qquad A_{\mu_k}^{VA}\in\co(\mathcal A^{VA}(\alpha,\beta)).
\]
Consequently,
\[
\E[Q_{k+1}-Q^\star\mid\mathcal F_k]=A_{\mu_k}^{VA}(Q_k-Q^\star).
\]
The Q-update can be applied directly to $Q_k$ without maintaining the $V$ and $A$ components as in~\Cref{alg:centered_va_q_only_iid}.
\begin{algorithm}[t]
\caption{Dueling Q-learning in Q-space}\label{alg:centered_va_q_only_iid}
\begin{algorithmic}[1]
\STATE Initialize $Q_0\in\R^n$.
\FOR{$k=0,1,\ldots$}
\STATE Draw $(s_k,a_k)\sim d$, draw $s'_k\sim P(\cdot\mid s_k,a_k)$, and set $r_{k+1}\leftarrow r(s_k,a_k,s'_k)$.
\STATE $\delta_k\leftarrow r_{k+1}+\gamma\max_{c\in\Aset}Q_k(s'_k,c)-Q_k(s_k,a_k)$.
\STATE Set $Q_{k+1}(s,a)\leftarrow Q_k(s,a)$ for every $(s,a)\in\Sset\times\Aset$.
\FOR{every $b\in\Aset$}
\STATE $Q_{k+1}(s_k,b)\leftarrow Q_k(s_k,b)+\left[\alpha+\beta\bigl(\mathbf 1\{b=a_k\}-1/|\Aset|\bigr)\right]\delta_k$.
\ENDFOR
\ENDFOR
\end{algorithmic}
\end{algorithm}

\subsubsection{Finite-time error bound analysis}
We next give a finite-time convergence estimate for the sampled recursion in~\Cref{alg:centered_va_iid}. The analysis of~\Cref{alg:centered_va_q_only_iid} is analogous and is therefore omitted. Since~\Cref{alg:centered_va_iid} uses constant gains, the conclusion is convergence to a first-moment neighborhood of $Q^\star$; the size of this neighborhood goes to zero when the common scalar gain goes to zero. Fix reference gains $\bar\alpha>0$ and $\bar\beta>0$. For $0<\alpha\le\bar\alpha$, we set
\begin{equation}\label{eq:centered_va_iid_fixed_ratio}
    \beta=\frac{\bar\beta}{\bar\alpha}\alpha,
    \qquad
    C^{VA}=\alpha\left(|\Aset|\Pi+\frac{\bar\beta}{\bar\alpha}\Pi_\perp\right).
\end{equation}
For the finite-time estimates below, assume that the reference gains satisfy
\begin{equation}\label{eq:centered_va_iid_jsr_condition}
    \rho(\mathcal A^{VA}(\bar\alpha,\bar\beta))<1,
\end{equation}
where \eqref{eq:tabular_AVA} is formed with $\alpha=\bar\alpha$ and $\beta=\bar\beta$. This keeps the two gains in a fixed ratio and lets the scalar $\alpha$ play the role of the small gain in the stochastic approximation.

\begin{lemma}\label{lem:centered_va_iid_error_system}
Let $Q_k\in\R^n$ be generated by~\Cref{alg:centered_va_iid} with the gains in \eqref{eq:centered_va_iid_fixed_ratio}. Define
\[
    \xi_k:=\alpha^{-1}C^{VA}w_k,
\]
where $w_k$ is defined in \eqref{eq:centered_va_iid_noise}. Then, for each $k\ge0$, there exists an $\mathcal F_k$-measurable stochastic policy $\mu_k:\Sset\to\Delta_{|\Aset|}$ such that
\begin{equation}\label{eq:centered_va_iid_error_system}
    Q_{k+1}-Q^\star=A_{\mu_k}^{VA}(Q_k-Q^\star)+\alpha\xi_k,
    \qquad \E[\xi_k\mid\mathcal F_k]=0.
\end{equation}
\end{lemma}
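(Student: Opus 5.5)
The plan is to reduce the statement to three facts already established in the paper. These are the Q-space form \eqref{eq:centered_va_iid_qspace_noise_form} of the sampled recursion, the measurable Bellman-difference selector of \Cref{lem:bellman_difference_selector}, and the conditional-mean identity $\E[\zeta_k\mid\mathcal F_k]=D(F(Q_k)-Q_k)$.

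\textbf{Step 1: pass to the Q-space recursion.} By construction $Q_k=V_k+A_k$. The increments of \Cref{alg:centered_va_iid} are
\[
|\Aset|\alpha\Pi\zeta_k \quad\text{for } V, \qquad \beta\Pi_\perp\zeta_k \quad\text{for } A.
\]
Adding them gives $Q_{k+1}=Q_k+C^{VA}\zeta_k$, with $C^{VA}$ as in \eqref{eq:centered_va_iid_fixed_ratio}. To check the $V$-increment coordinatewise: $\Pi(e_{a_k}\otimes e_{s_k})$ has entries $1/|\Aset|$ on the state block $s_k$, so $|\Aset|\alpha\Pi\zeta_k$ adds $\alpha\delta_k$ to every $(s_k,b)$. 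Similarly, $\Pi_\perp(e_{a_k}\otimes e_{s_k})$ has entries $\mathbf 1\{b=a_k\}-1/|\Aset|$ on that block.

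\textbf{Step 2: subtract the fixed point and linearize.} Substituting $\zeta_k=D(F(Q_k)-Q_k)+w_k$ and subtracting $Q^\star=F(Q^\star)$ gives
\[
Q_{k+1}-Q^\star=(Q_k-Q^\star)+C^{VA}D\bigl(F(Q_k)-F(Q^\star)-(Q_k-Q^\star)\bigr)+C^{VA}w_k .
\]
Apply \Cref{lem:bellman_difference_selector} with $Q=Q_k$ and $\overline Q=Q^\star$. Since $Q_k$ is $\mathcal F_k$-measurable and $Q^\star$ is deterministic, the selector $\mu_k$ can be taken $\mathcal F_k$-measurable, and
\[
F(Q_k)-F(Q^\star)=\gamma P\Gamma^{\mu_k}(Q_k-Q^\star).
\]
Substituting yields $A^{VA}_{\mu_k}(Q_k-Q^\star)+C^{VA}w_k$, exactly as in the proof of \Cref{lem:va_modes}. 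Because $\alpha>0$, we have $C^{VA}w_k=\alpha\xi_k$, which gives \eqref{eq:centered_va_iid_error_system}.

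\textbf{Step 3: the martingale property of $\xi_k$.} The only substantive point is the zero conditional mean, and I expect this to be the main (if mild) obstacle. The matrix $\alpha^{-1}C^{VA}=|\Aset|\Pi+(\bar\beta/\bar\alpha)\Pi_\perp$ is deterministic, so $\E[\xi_k\mid\mathcal F_k]=\alpha^{-1}C^{VA}\E[w_k\mid\mathcal F_k]$. By \eqref{eq:centered_va_iid_mds} this reduces to verifying $\E[\zeta_k\mid\mathcal F_k]=D(F(Q_k)-Q_k)$. The verification runs as follows.
\begin{itemize}
\item Under i.i.d. sampling, $(s_k,a_k,s'_k)$ is independent of $\mathcal F_k$, with joint law $d(s,a)P(s'\mid s,a)$.
\item $Q_k$ is $\mathcal F_k$-measurable.
\item Hence, by the freezing lemma, the $(s,a)$ entry of $\E[\zeta_k\mid\mathcal F_k]$ equals
\[
d(s,a)\Bigl(R(s,a)+\gamma\sum_{s'}P(s'\mid s,a)\max_c Q_k(s',c)-Q_k(s,a)\Bigr)=d(s,a)\bigl(F(Q_k)-Q_k\bigr)(s,a).
\]
\end{itemize}
Integrability holds because $\delta_k$ is bounded by $R_{\max}+(1+\gamma)\norm{Q_k}_\infty$. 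The finite-step recursion keeps $Q_k$ integrable by induction, since each step is Lipschitz in $Q_k$ with a bounded reward term. This is the content of \Cref{lem:centered_va_iid_conditional_mean} deferred to the Appendix, so I would cite it rather than redo it.
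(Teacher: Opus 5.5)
Your proposal is correct and follows the paper's own proof. Both rewrite the sampled update in the Q-space noise form \eqref{eq:centered_va_iid_qspace_noise_form}, subtract $Q^\star=F(Q^\star)$, use the measurable selector of \Cref{lem:bellman_difference_selector} to produce $A^{VA}_{\mu_k}$, and get $\E[\xi_k\mid\mathcal F_k]=0$ from \eqref{eq:centered_va_iid_mds} because $\alpha^{-1}C^{VA}$ is deterministic; your Steps 1 and 3 just re-derive the Q-space form and the conditional-mean identity, which the paper takes as already established.
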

\begin{proof}
By \eqref{eq:centered_va_iid_qspace_noise_form} and \eqref{eq:centered_va_iid_fixed_ratio},
\[
    Q_{k+1}=Q_k+C^{VA}D(F(Q_k)-Q_k)+\alpha\xi_k.
\]
Subtract $Q^\star=F(Q^\star)$ and apply the measurable selection in~\Cref{lem:bellman_difference_selector}. This gives an $\mathcal F_k$-measurable stochastic policy $\mu_k:\Sset\to\Delta_{|\Aset|}$ such that
\[
    F(Q_k)-F(Q^\star)=\gamma P\Gamma^{\mu_k}(Q_k-Q^\star).
\]
Therefore
\[
    Q_{k+1}-Q^\star=(Q_k-Q^\star)+C^{VA}D\{F(Q_k)-F(Q^\star)-(Q_k-Q^\star)\}+\alpha\xi_k
\]
\[
    =\bigl(I-C^{VA}D(I-\gamma P\Gamma^{\mu_k})\bigr)(Q_k-Q^\star)+\alpha\xi_k.
\]
The matrix in the last equation is exactly $A_{\mu_k}^{VA}$ from \eqref{eq:tabular_AVA_stochastic}. The martingale identity follows from \eqref{eq:centered_va_iid_mds}.
\end{proof}

We now derive the finite-time error bound. The estimate separates the deterministic contraction from the martingale term and makes explicit the $O(\sqrt{\alpha})$ first-moment neighborhood induced by constant gains.

\begin{theorem}\label{thm:centered_va_iid_small_alpha_error}
Suppose that the JSR condition in \eqref{eq:centered_va_iid_jsr_condition} holds, and let $c$ and $K$ be the constants from~\Cref{lem:centered_va_iid_product_bound} in Appendix. Let $Q^\star\in\R^n$ be the optimal fixed point in \eqref{eq:Qstar_tabular_optimal}. Let $L$, $\sigma_0$, $\sigma_1$, $C_0$, $C_1$, and $\alpha_0$ be defined as in~\Cref{lem:centered_va_iid_mode_difference,lem:centered_va_iid_noise_growth,lem:centered_va_iid_ms_bound} in Appendix. Then, for every initial vector $Q_0\in\R^n$, every $0<\alpha\le\alpha_0$ with $\beta=(\bar\beta/\bar\alpha)\alpha$, and every $k\ge0$, the stochastic dueling Q-learning recursion in~\Cref{alg:centered_va_iid} satisfies
\begin{equation}\label{eq:centered_va_iid_small_alpha_error}
\begin{aligned}
    \E[\norm{Q_k-Q^\star}_2]
    &\le K\left(1-\frac{c\alpha}{2}\right)^k\norm{Q_0-Q^\star}_2\\
    &\quad +\alpha\gamma L K^2 k\left(1-\frac{c\alpha}{2}\right)^{k-1}\norm{Q_0-Q^\star}_2\\
    &\quad +\left(1+\frac{2\gamma K L}{c}\right)K\sqrt{\frac{2\alpha}{c}}
    \left(\sigma_0^2+\sigma_1^2\bigl(2C_0\norm{Q_0-Q^\star}_2^2+2C_1\alpha\sigma_0^2\bigr)\right)^{1/2},
\end{aligned}
\end{equation}
with the convention that $k(1-c\alpha/2)^{k-1}=0$ when $k=0$. In particular, for fixed $Q_0$, the last term in \eqref{eq:centered_va_iid_small_alpha_error} is $O(\sqrt{\alpha})$ as $\alpha\downarrow0$.
\end{theorem}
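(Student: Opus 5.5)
We must guess the appendix lemmas. From the shape of the bound, the natural guess is the following. Lemma~\ref{lem:centered_va_iid_product_bound} is a product bound for a fixed comparison matrix, likely $A^{VA}_{\pi^\star}$ with $\pi^\star$ a greedy policy for $Q^\star$: $\norm{(A^{VA}_{\pi^\star})^j}_2\le K(1-c\alpha/2)^j$, or a uniform bound for products from $\co(\mathcal A^{VA}(\alpha,\beta))$. Lemma~\ref{lem:centered_va_iid_mode_difference} bounds mode differences, $\norm{A^{VA}_\mu-A^{VA}_\nu}_2\le\alpha\gamma L$; this holds because $A_\mu^{VA}-A_\nu^{VA}=\gamma C^{VA}DP(\Gamma^\mu-\Gamma^\nu)$ and $C^{VA}$ is proportional to $\alpha$. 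Lemma~\ref{lem:centered_va_iid_noise_growth} gives noise growth, $\E[\norm{\xi_k}_2^2\mid\mathcal F_k]\le\sigma_0^2+\sigma_1^2\norm{Q_k-Q^\star}_2^2$. Lemma~\ref{lem:centered_va_iid_ms_bound} gives a uniform second-moment bound, $\E\norm{Q_k-Q^\star}_2^2\le 2C_0\norm{Q_0-Q^\star}_2^2+2C_1\alpha\sigma_0^2$ for $\alpha\le\alpha_0$.

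The plan is to write $e_k:=Q_k-Q^\star$ and fix $\bar A:=A^{VA}_{\pi^\star}$. Rewriting \eqref{eq:centered_va_iid_error_system} gives
\[
e_{k+1}=\bar A e_k+(A^{VA}_{\mu_k}-\bar A)e_k+\alpha\xi_k.
\]
Unrolling this recursion yields
\[
e_k=\bar A^k e_0+\sum_{j=0}^{k-1}\bar A^{k-1-j}(A^{VA}_{\mu_j}-\bar A)e_j+\alpha\sum_{j=0}^{k-1}\bar A^{k-1-j}\xi_j.
\]
The three terms of \eqref{eq:centered_va_iid_small_alpha_error} should correspond to these three pieces.

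\textbf{First term.} The product bound applied to $\bar A^k e_0$ directly gives $K(1-c\alpha/2)^k\norm{e_0}_2$.

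\textbf{Noise term.} Set $M_k:=\alpha\sum_{j<k}\bar A^{k-1-j}\xi_j$. The increments are martingale differences, so the cross terms vanish and
\[
\E\norm{M_k}_2^2=\alpha^2\sum_{j<k}\E\norm{\bar A^{k-1-j}\xi_j}_2^2\le \alpha^2K^2\sum_{j}(1-c\alpha/2)^{2j}\bigl(\sigma_0^2+\sigma_1^2\E\norm{e_j}_2^2\bigr).
\]
The geometric sum is at most $1/(1-(1-c\alpha/2)^2)\le 2/(c\alpha)$ for small $\alpha$. Inserting the uniform second-moment bound and applying Jensen gives
\[
\E\norm{M_k}_2\le K\sqrt{2\alpha/c}\,\bigl(\sigma_0^2+\sigma_1^2(2C_0\norm{e_0}_2^2+2C_1\alpha\sigma_0^2)\bigr)^{1/2}.
\]
This is the last factor of \eqref{eq:centered_va_iid_small_alpha_error} with coefficient $1$.

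\textbf{Switching term.} This is the main obstacle. The mode difference is $O(\alpha)$, so the sum is $\alpha\gamma L K\sum_j(1-c\alpha/2)^{k-1-j}\norm{e_j}_2$, and it must be controlled without circularity. I would first try a comparison or induction argument. Suppose we have the hypothesis $\E\norm{e_j}_2\le K(1-c\alpha/2)^j\norm{e_0}_2+\Sigma$ for $j<k$, where $\Sigma$ is the noise bound. Substituting it produces $\alpha\gamma LK^2k(1-c\alpha/2)^{k-1}\norm{e_0}_2$ from the transient part. From the constant part it produces $\alpha\gamma LK\Sigma\cdot 2/(c\alpha)=(2\gamma KL/c)\Sigma$. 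That explains the factor $(1+2\gamma KL/c)$.

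To avoid the recursion blowing up, I would not iterate the hypothesis. Instead, in the switching sum I would bound $\E\norm{e_j}_2$ by the first and noise terms only, using the representation in which $e_j$ is propagated by the actual products. This relies on the product bound holding uniformly over products of $\co(\mathcal A^{VA})$ modes, namely $\norm{e_j}\le K(1-c\alpha/2)^j\norm{e_0}+\norm{\tilde M_j}$ with $\tilde M_j$ the noise propagated by the actual switching products. Those products are adapted but not fixed, so the martingale orthogonality must be checked.

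If that fails, the fallback is a Grönwall-type summation. I would verify that the constants close precisely in the form stated, namely a single factor $K$ in the cross term and a power $k-1$, which is consistent with one substitution rather than iteration.
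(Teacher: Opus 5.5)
Your first term and your noise term are correct. With the fixed matrix $\bar A$ the cross terms vanish, and $1-(1-c\alpha/2)^2\ge c\alpha/2$ gives exactly $K\sqrt{2\alpha/c}\,\bigl(\sigma_0^2+\sigma_1^2(2C_0\norm{Q_0-Q^\star}_2^2+2C_1\alpha\sigma_0^2)\bigr)^{1/2}$. The genuine gap is the switching term, and neither of your two routes closes it. In your unrolling, the forcing $(A^{VA}_{\mu_j}-\bar A)e_j$ contains the unknown error itself.

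\textbf{Primary route.} You write $e_j=\Phi_{j,0}e_0+\alpha\sum_{t<j}\Phi_{j,t+1}\xi_t$ with the actual products $\Phi_{j,t+1}=A^{VA}_{\mu_{j-1}}\cdots A^{VA}_{\mu_{t+1}}$. This fails exactly where you flag it. The policies $\mu_{t+1},\ldots,\mu_{j-1}$ are functions of $Q_{t+1},\ldots,Q_{j-1}$, hence of the sample at time $t$ and later samples. So $\Phi_{j,s+1}^\top\Phi_{j,t+1}$ is not $\mathcal F_t$-measurable, and the cross terms do not vanish. Without orthogonality, the triangle inequality only gives $\E\norm{\alpha\sum_t\Phi_{j,t+1}\xi_t}_2\le\alpha K\sum_t(1-c\alpha)^{j-1-t}\E\norm{\xi_t}_2=O(1)$, which destroys the $O(\sqrt\alpha)$ neighborhood. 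Had this step worked, it would already give $\E\norm{e_k}_2\le K(1-c\alpha/2)^k\norm{e_0}_2+\Sigma$ outright. So the plan assumes the crux rather than proving it.

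\textbf{Fallback.} The Gr\"onwall route also fails. Set $\beta_\alpha:=1-c\alpha/2$. From $a_k\le K\beta_\alpha^k\norm{e_0}_2+\Sigma+\alpha\gamma LK\sum_{j<k}\beta_\alpha^{k-1-j}a_j$ you only get the per-step factor $\beta_\alpha+\alpha\gamma LK=1-\alpha(c/2-\gamma LK)$. This is a contraction only if $2\gamma LK<c$, and the JSR hypothesis does not imply that.

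The missing idea is to swap which recursion is fixed and which one switches. Fix a policy $\bar\mu$ and let $y_{k+1}=A^{VA}_{\bar\mu}y_k+\alpha\xi_k$ with $y_0=Q_0-Q^\star$. Split $y_k=\bar y_k+\tilde y_k$ into its deterministic and martingale parts; these are exactly your first and noise terms. Then study the gap $e_k:=Q_k-Q^\star-y_k$, which satisfies $e_0=0$ and $e_{k+1}=A^{VA}_{\mu_k}e_k+(A^{VA}_{\mu_k}-A^{VA}_{\bar\mu})y_k$. This arrangement removes all three difficulties:
\begin{itemize}
\item The random switching now enters only through products from $\co(\mathcal A^{VA}(\alpha,\beta))$. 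The uniform product bound controls these pathwise, with no measurability requirement.
\item The noise is accumulated only through the fixed matrix $A^{VA}_{\bar\mu}$, so orthogonality holds.
\item The forcing is the already-bounded $y_k$ rather than $e_k$, so there is no circularity.
\end{itemize}
Splitting $e_k=u_k+v_k$ according to $\bar y_k$ and $\tilde y_k$ gives $\norm{u_k}_2\le\alpha\gamma LK^2k\beta_\alpha^{k-1}\norm{Q_0-Q^\star}_2$. It also gives $\E\norm{v_k}_2\le\frac{\alpha\gamma LK}{1-\beta_\alpha}\sup_t\E\norm{\tilde y_t}_2\le\frac{2\gamma KL}{c}\Sigma$. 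The decomposition $Q_k-Q^\star=\bar y_k+\tilde y_k+u_k+v_k$ then yields the stated bound with its exact constants.
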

\begin{proof}
By~\Cref{lem:centered_va_iid_error_system},
\begin{equation}\label{eq:main_iid_error_recursion}
    Q_{k+1}-Q^\star=A_{\mu_k}^{VA}(Q_k-Q^\star)+\alpha\xi_k,
    \qquad \E[\xi_k\mid\mathcal F_k]=0.
\end{equation}
Every stochastic-policy mode $A_{\mu_k}^{VA}$ belongs to $\co(\mathcal A^{VA}(\alpha,\beta))$. Set
\begin{equation}\label{eq:main_iid_beta_def}
    \beta_\alpha:=1-\frac{c}{2}\alpha.
\end{equation}
Since $1-c\alpha<\beta_\alpha<1$, the product bound in~\Cref{lem:centered_va_iid_product_bound} in Appendix implies that every product of length $\ell$ from the convexified family satisfies
\begin{equation}\label{eq:main_iid_product_beta_bound}
    \norm{M_{\ell-1}\cdots M_0}_2\le K(1-c\alpha)^\ell\le K\beta_\alpha^\ell,
    \qquad \ell\ge0.
\end{equation}
Fix a stochastic policy $\bar\mu:\Sset\to\Delta_{|\Aset|}$. Introduce the fixed-policy reference filter
\begin{equation}\label{eq:main_iid_reference_filter}
    y_{k+1}=A_{\bar\mu}^{VA}y_k+\alpha\xi_k,
    \qquad y_0=Q_0-Q^\star.
\end{equation}
Write $y_k=\bar y_k+\tilde y_k$, where
\begin{equation}\label{eq:main_iid_reference_split}
\begin{aligned}
    \bar y_{k+1}&=A_{\bar\mu}^{VA}\bar y_k,
    &\bar y_0&=Q_0-Q^\star,\\
    \tilde y_{k+1}&=A_{\bar\mu}^{VA}\tilde y_k+\alpha\xi_k,
    &\tilde y_0&=0.
\end{aligned}
\end{equation}
The product bound in \eqref{eq:main_iid_product_beta_bound} gives
\begin{equation}\label{eq:main_iid_bary_bound}
\begin{aligned}
    \bar y_k
    &=(A_{\bar\mu}^{VA})^k(Q_0-Q^\star),\\
    \norm{\bar y_k}_2
    &\le \norm{(A_{\bar\mu}^{VA})^k}_2\norm{Q_0-Q^\star}_2
    \le K\beta_\alpha^k\norm{Q_0-Q^\star}_2.
\end{aligned}
\end{equation}
Let
\begin{equation}\label{eq:main_iid_gap_def}
    e_k:=Q_k-Q^\star-y_k.
\end{equation}
Then $e_0=0$. Subtracting \eqref{eq:main_iid_reference_filter} from \eqref{eq:main_iid_error_recursion} and using \eqref{eq:main_iid_gap_def} gives
\begin{equation}\label{eq:main_iid_gap_recursion}
\begin{aligned}
    e_{k+1}
    &=(Q_{k+1}-Q^\star)-y_{k+1}\\
    &=A_{\mu_k}^{VA}(Q_k-Q^\star)+\alpha\xi_k-\bigl(A_{\bar\mu}^{VA}y_k+\alpha\xi_k\bigr)\\
    &=A_{\mu_k}^{VA}(Q_k-Q^\star)-A_{\bar\mu}^{VA}y_k\\
    &=A_{\mu_k}^{VA}e_k+(A_{\mu_k}^{VA}-A_{\bar\mu}^{VA})y_k.
\end{aligned}
\end{equation}
Split $e_k=u_k+v_k$ according to $y_k=\bar y_k+\tilde y_k$ by setting
\begin{equation}\label{eq:main_iid_uv_recursions}
\begin{aligned}
    u_{k+1}&=A_{\mu_k}^{VA}u_k+(A_{\mu_k}^{VA}-A_{\bar\mu}^{VA})\bar y_k,
    &u_0&=0,\\
    v_{k+1}&=A_{\mu_k}^{VA}v_k+(A_{\mu_k}^{VA}-A_{\bar\mu}^{VA})\tilde y_k,
    &v_0&=0.
\end{aligned}
\end{equation}
For $k=0$, the sums below are empty. For $k\ge1$, unrolling the first recursion in \eqref{eq:main_iid_uv_recursions} gives
\begin{equation}\label{eq:main_iid_u_unroll}
    u_k=\sum_{t=0}^{k-1}A_{\mu_{k-1}}^{VA}\cdots A_{\mu_{t+1}}^{VA}(A_{\mu_t}^{VA}-A_{\bar\mu}^{VA})\bar y_t,
\end{equation}
where the product is the identity when $t=k-1$. The unrolled formula \eqref{eq:main_iid_u_unroll}, the product bound \eqref{eq:main_iid_product_beta_bound}, the mode-difference bound in~\Cref{lem:centered_va_iid_mode_difference} in Appendix, and the bound on $\bar y_t$ in \eqref{eq:main_iid_bary_bound} imply, for each summand,
\begin{equation}\label{eq:main_iid_u_summand_bound}
\begin{aligned}
&\norm{A_{\mu_{k-1}}^{VA}\cdots A_{\mu_{t+1}}^{VA}(A_{\mu_t}^{VA}-A_{\bar\mu}^{VA})\bar y_t}_2\\
&\quad\le \norm{A_{\mu_{k-1}}^{VA}\cdots A_{\mu_{t+1}}^{VA}}_2
      \norm{A_{\mu_t}^{VA}-A_{\bar\mu}^{VA}}_2\norm{\bar y_t}_2\\
&\quad\le K\beta_\alpha^{k-1-t}\,\alpha\gamma L\,\norm{\bar y_t}_2\\
&\quad\le K\beta_\alpha^{k-1-t}\,\alpha\gamma L\,K\beta_\alpha^t\norm{Q_0-Q^\star}_2.
\end{aligned}
\end{equation}
Summing \eqref{eq:main_iid_u_summand_bound} over $t=0,\ldots,k-1$ yields
\begin{equation}\label{eq:main_iid_u_bound}
\begin{aligned}
    \norm{u_k}_2
    &\le \sum_{t=0}^{k-1}K\beta_\alpha^{k-1-t}\alpha\gamma L K\beta_\alpha^t\norm{Q_0-Q^\star}_2\\
    &=\alpha\gamma LK^2\sum_{t=0}^{k-1}\beta_\alpha^{k-1}\norm{Q_0-Q^\star}_2\\
    &=\alpha\gamma LK^2 k\beta_\alpha^{k-1}\norm{Q_0-Q^\star}_2.
\end{aligned}
\end{equation}
We now bound the martingale part $\tilde y_k$. From \eqref{eq:main_iid_reference_split}, its explicit form is
\begin{equation}\label{eq:main_iid_tildey_unroll}
    \tilde y_k=\alpha\sum_{t=0}^{k-1}(A_{\bar\mu}^{VA})^{k-1-t}\xi_t.
\end{equation}
Expanding the squared norm in \eqref{eq:main_iid_tildey_unroll} gives
\begin{equation}\label{eq:main_iid_tildey_square_expand}
    \E[\norm{\tilde y_k}_2^2]
    =\alpha^2\sum_{s=0}^{k-1}\sum_{t=0}^{k-1}
    \E\!\bigl[\xi_s^\top\{(A_{\bar\mu}^{VA})^{k-1-s}\}^\top(A_{\bar\mu}^{VA})^{k-1-t}\xi_t\bigr].
\end{equation}
If $s<t$, then the factor multiplying $\xi_t$ in \eqref{eq:main_iid_tildey_square_expand} is $\mathcal F_t$-measurable, and therefore
\begin{equation}\label{eq:main_iid_tildey_cross_zero}
\begin{aligned}
&\E\!\bigl[\xi_s^\top\{(A_{\bar\mu}^{VA})^{k-1-s}\}^\top(A_{\bar\mu}^{VA})^{k-1-t}\xi_t\bigr]\\
&\quad=\E\!\left[\xi_s^\top\{(A_{\bar\mu}^{VA})^{k-1-s}\}^\top(A_{\bar\mu}^{VA})^{k-1-t}\E[\xi_t\mid\mathcal F_t]\right]=0.
\end{aligned}
\end{equation}
The case $t<s$ is identical after conditioning on $\mathcal F_s$. Thus \eqref{eq:main_iid_tildey_square_expand} reduces to its diagonal part:
\begin{equation}\label{eq:main_iid_tildey_diagonal}
\begin{aligned}
    \E[\norm{\tilde y_k}_2^2]
    &=\alpha^2\sum_{t=0}^{k-1}
      \E\!\left[\xi_t^\top\{(A_{\bar\mu}^{VA})^{k-1-t}\}^\top
      (A_{\bar\mu}^{VA})^{k-1-t}\xi_t\right]\\
    &=\alpha^2\sum_{t=0}^{k-1}
      \E\!\left[\norm{(A_{\bar\mu}^{VA})^{k-1-t}\xi_t}_2^2\right].
\end{aligned}
\end{equation}
Let
\begin{equation}\label{eq:main_iid_Malpha_def}
    M_\alpha:=2C_0\norm{Q_0-Q^\star}_2^2+2C_1\alpha\sigma_0^2.
\end{equation}
By~\Cref{lem:centered_va_iid_noise_growth,lem:centered_va_iid_ms_bound} in Appendix, for every $0\le t\le k$,
\begin{equation}\label{eq:main_iid_noise_second_moment_uniform}
    \E[\norm{\xi_t}_2^2]=\E\bigl[\E[\norm{\xi_t}_2^2\mid\mathcal F_t]\bigr]
    \le\sigma_0^2+\sigma_1^2\sup_{0\le j\le t}\E[\norm{Q_j-Q^\star}_2^2]
    \le\sigma_0^2+\sigma_1^2M_\alpha.
\end{equation}
Consequently, \eqref{eq:main_iid_tildey_diagonal}, \eqref{eq:main_iid_product_beta_bound}, and \eqref{eq:main_iid_noise_second_moment_uniform} imply
\begin{equation}\label{eq:main_iid_tildey_second_bound}
\begin{aligned}
    \E[\norm{\tilde y_k}_2^2]
    &\le \alpha^2\sum_{t=0}^{k-1}
      \norm{(A_{\bar\mu}^{VA})^{k-1-t}}_2^2\E[\norm{\xi_t}_2^2]\\
    &\le \alpha^2 K^2\sum_{t=0}^{k-1}\beta_\alpha^{2(k-1-t)}(\sigma_0^2+\sigma_1^2M_\alpha)\\
    &=\alpha^2 K^2(\sigma_0^2+\sigma_1^2M_\alpha)
      \sum_{j=0}^{k-1}\beta_\alpha^{2j}\\
    &\le \frac{\alpha^2K^2}{1-\beta_\alpha^2}(\sigma_0^2+\sigma_1^2M_\alpha).
\end{aligned}
\end{equation}
Jensen's inequality applied to \eqref{eq:main_iid_tildey_second_bound} gives
\begin{equation}\label{eq:main_iid_tildey_first_bound}
\begin{aligned}
    \E[\norm{\tilde y_k}_2]
    &\le \left(\E[\norm{\tilde y_k}_2^2]\right)^{1/2}\\
    &\le \frac{\alpha K}{\sqrt{1-\beta_\alpha^2}}(\sigma_0^2+\sigma_1^2M_\alpha)^{1/2}.
\end{aligned}
\end{equation}
The remaining switching correction is obtained by unrolling the second recursion in \eqref{eq:main_iid_uv_recursions}:
\begin{equation}\label{eq:main_iid_v_unroll}
    v_k=\sum_{t=0}^{k-1}A_{\mu_{k-1}}^{VA}\cdots A_{\mu_{t+1}}^{VA}(A_{\mu_t}^{VA}-A_{\bar\mu}^{VA})\tilde y_t.
\end{equation}
Using \eqref{eq:main_iid_v_unroll}, \eqref{eq:main_iid_product_beta_bound}, the mode-difference bound in~\Cref{lem:centered_va_iid_mode_difference} in Appendix, and \eqref{eq:main_iid_tildey_first_bound}, we obtain
\begin{equation}\label{eq:main_iid_v_first_bound}
\begin{aligned}
    \E[\norm{v_k}_2]
    &\le \sum_{t=0}^{k-1}
      \E\!\left[\norm{A_{\mu_{k-1}}^{VA}\cdots A_{\mu_{t+1}}^{VA}(A_{\mu_t}^{VA}-A_{\bar\mu}^{VA})\tilde y_t}_2\right]\\
    &\le \alpha\gamma LK\sum_{t=0}^{k-1}\beta_\alpha^{k-1-t}\E[\norm{\tilde y_t}_2]\\
    &\le \alpha\gamma LK\sum_{t=0}^{k-1}\beta_\alpha^{k-1-t}
      \frac{\alpha K}{\sqrt{1-\beta_\alpha^2}}(\sigma_0^2+\sigma_1^2M_\alpha)^{1/2}\\
    &\le \frac{\alpha\gamma LK}{1-\beta_\alpha}
      \frac{\alpha K}{\sqrt{1-\beta_\alpha^2}}(\sigma_0^2+\sigma_1^2M_\alpha)^{1/2}.
\end{aligned}
\end{equation}
From \eqref{eq:main_iid_gap_def}, \eqref{eq:main_iid_reference_split}, and $e_k=u_k+v_k$, we have
\begin{equation}\label{eq:main_iid_total_decomp}
    Q_k-Q^\star=\bar y_k+\tilde y_k+u_k+v_k.
\end{equation}
The decomposition \eqref{eq:main_iid_total_decomp} and the bounds in \eqref{eq:main_iid_bary_bound}, \eqref{eq:main_iid_u_bound}, \eqref{eq:main_iid_tildey_first_bound}, and \eqref{eq:main_iid_v_first_bound} give
\begin{equation}\label{eq:main_iid_pre_final_bound}
\begin{aligned}
    \E[\norm{Q_k-Q^\star}_2]
    &\le \norm{\bar y_k}_2+\E[\norm{\tilde y_k}_2]+\norm{u_k}_2+\E[\norm{v_k}_2]\\
    &\le K\beta_\alpha^k\norm{Q_0-Q^\star}_2+
    \alpha\gamma LK^2 k\beta_\alpha^{k-1}\norm{Q_0-Q^\star}_2\\
    &\quad +\frac{\alpha K}{\sqrt{1-\beta_\alpha^2}}(\sigma_0^2+\sigma_1^2M_\alpha)^{1/2}\\
    &\quad +\frac{\alpha\gamma LK}{1-\beta_\alpha}
      \frac{\alpha K}{\sqrt{1-\beta_\alpha^2}}(\sigma_0^2+\sigma_1^2M_\alpha)^{1/2}\\
    &= K\beta_\alpha^k\norm{Q_0-Q^\star}_2+
    \alpha\gamma LK^2 k\beta_\alpha^{k-1}\norm{Q_0-Q^\star}_2\\
    &\quad +\left(1+\frac{\alpha\gamma LK}{1-\beta_\alpha}\right)
    \frac{\alpha K}{\sqrt{1-\beta_\alpha^2}}(\sigma_0^2+\sigma_1^2M_\alpha)^{1/2}.
\end{aligned}
\end{equation}
By \eqref{eq:main_iid_beta_def},
\begin{equation}\label{eq:main_iid_beta_denominator_bounds}
    1-\beta_\alpha=\frac{c\alpha}{2},
    \qquad
    1-\beta_\alpha^2=(1-\beta_\alpha)(1+\beta_\alpha)\ge\frac{c\alpha}{2}.
\end{equation}
Therefore \eqref{eq:main_iid_beta_denominator_bounds} implies
\begin{equation}\label{eq:main_iid_prefactor_bounds}
    1+\frac{\alpha\gamma LK}{1-\beta_\alpha}=1+\frac{2\gamma KL}{c},
    \qquad
    \frac{\alpha K}{\sqrt{1-\beta_\alpha^2}}\le K\sqrt{\frac{2\alpha}{c}}.
\end{equation}
Substituting \eqref{eq:main_iid_prefactor_bounds} into \eqref{eq:main_iid_pre_final_bound} gives
\begin{equation}\label{eq:main_iid_after_substitution}
\begin{aligned}
    \E[\norm{Q_k-Q^\star}_2]
    &\le K\beta_\alpha^k\norm{Q_0-Q^\star}_2+
    \alpha\gamma LK^2 k\beta_\alpha^{k-1}\norm{Q_0-Q^\star}_2\\
    &\quad +\left(1+\frac{2\gamma KL}{c}\right)K\sqrt{\frac{2\alpha}{c}}
    (\sigma_0^2+\sigma_1^2M_\alpha)^{1/2}.
\end{aligned}
\end{equation}
Using \eqref{eq:main_iid_beta_def} and \eqref{eq:main_iid_Malpha_def} in \eqref{eq:main_iid_after_substitution} gives \eqref{eq:centered_va_iid_small_alpha_error}. For fixed $Q_0$,
\begin{equation}\label{eq:main_iid_Malpha_uniform}
    M_\alpha\le 2C_0\norm{Q_0-Q^\star}_2^2+2C_1\bar\alpha\sigma_0^2,
    \qquad 0<\alpha\le\bar\alpha.
\end{equation}
Hence \eqref{eq:main_iid_Malpha_uniform} shows that $(\sigma_0^2+\sigma_1^2M_\alpha)^{1/2}$ is bounded independently of $\alpha$, and the last term in \eqref{eq:centered_va_iid_small_alpha_error} is $O(\sqrt\alpha)$ as $\alpha\downarrow0$.
\end{proof}
By~\Cref{thm:centered_va_iid_small_alpha_error}, the sampled stochastic dueling Q-learning iterates in~\Cref{alg:centered_va_iid} track $Q^\star$ up to a first-moment neighborhood whose radius is of order $\sqrt\alpha$.

\subsubsection{Example}\label{sec:two_state_two_action_stochastic_illustration}
We next give a simple instance of dueling Q-learning in~\Cref{alg:centered_va_iid}. Let $\Sset=\{1,2\}$, $\Aset=\{1,2\}$, and $\gamma=0.9$, and use the uniform i.i.d. sampling distribution $d(s,a)=1/4$ for every $(s,a)\in\Sset\times\Aset$. The transition probabilities are $P(1\mid1,1)=0.95$, $P(2\mid1,1)=0.05$, $P(1\mid1,2)=0.05$, $P(2\mid1,2)=0.95$, $P(1\mid2,1)=0.05$, $P(2\mid2,1)=0.95$, $P(1\mid2,2)=0.95$, and $P(2\mid2,2)=0.05$.
The one-step reward is defined for each sampled next state as $r(1,1,1)=r(1,1,2)=0.0$, $r(1,2,1)=r(1,2,2)=0.2$, $r(2,1,1)=r(2,1,2)=1.0$, and $r(2,2,1)=r(2,2,2)=0.9$.
Solving the Bellman optimality equation gives $Q^\star(1,1)=7.992$, $Q^\star(1,2)=8.840$, $Q^\star(2,1)=9.640$, and $Q^\star(2,2)=8.892$, with $\pi^\star(1)=2$ and $\pi^\star(2)=1$.
Both methods are initialized at zero and run for $3000$ iterations. Standard Q-learning uses the constant step-size $\eta=0.2$. The stochastic recursion in~\Cref{alg:centered_va_iid} uses the constant step-sizes $\alpha=0.2$ and $\beta=0.2$.
Since $|\Aset|=2$, the effective common-component gain in \eqref{eq:CVA} is $|\Aset|\alpha=0.4$, while the differential gain is $\beta=0.2$. Thus this setting deliberately gives the dueling recursion a larger state-wise action-common gain than standard Q-learning, while keeping the action-differential gain equal to the Q-learning step-size. For a fair sampling comparison, the same sampled state-action and next-state sequence is used for both methods within each seed. The plotted quantity is $\norm{Q_k-Q^\star}_\infty$.
\begin{figure}[t]
\centering
\includegraphics[width=0.78\textwidth]{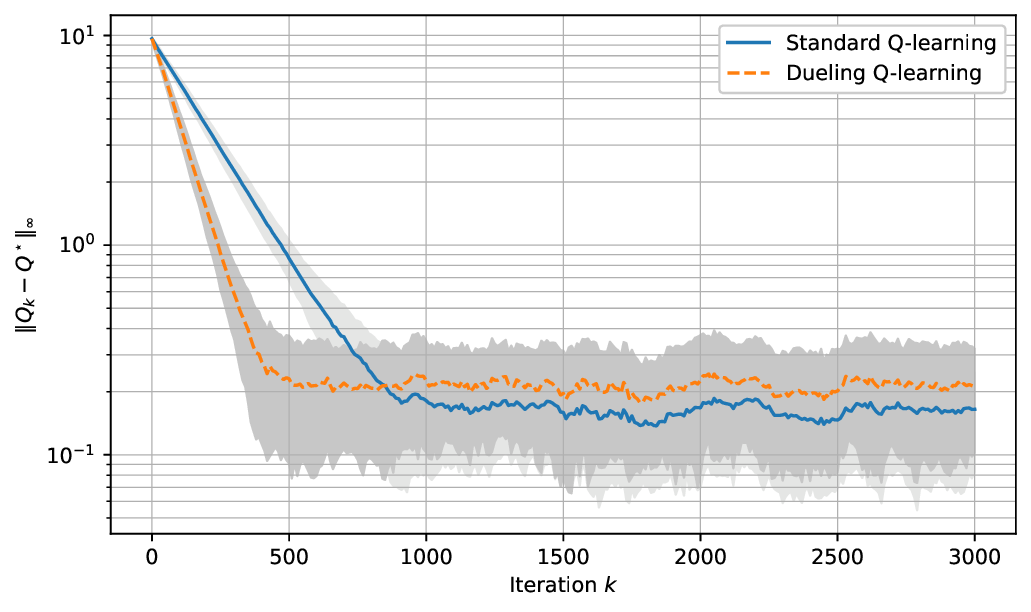}
\caption{Stochastic comparison between standard Q-learning and~\Cref{alg:centered_va_iid}. Standard Q-learning uses step-size $\eta=0.2$, and dueling Q-learning uses step-sizes $\alpha=0.2$ and $\beta=0.2$. The mean curves are computed over $100$ seeds, and the shaded bands mark one sample standard deviation from the corresponding mean.}
\label{fig:ex52_stochastic}
\end{figure}
The results in~\Cref{fig:ex52_stochastic} show that the dueling recursion reduces the error more quickly during the initial transient, while its constant-gain stochastic fluctuations are larger later in the run.

\clearpage
\appendix
\section*{Appendix}
\addcontentsline{toc}{section}{Appendix}
\section{Auxiliary switching-system and block-operator facts}\label{app:auxiliary_switching}

\begin{lemma}\label{lem:convex-hull-jsr}
For a finite matrix family $\Hfam=\{A_1,\ldots,A_M\}\subset\R^{m\times m}$, one has
\[
    \rho(\co(\Hfam))=\rho(\Hfam),
\]
where the left-hand side is computed over all convex combinations of matrices in $\Hfam$.
\end{lemma}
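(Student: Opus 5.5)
The plan is to prove the two inequalities $\rho(\Hfam)\le\rho(\co(\Hfam))$ and $\rho(\co(\Hfam))\le\rho(\Hfam)$ separately. The first is immediate: $\Hfam\subset\co(\Hfam)$, so for each length $k$ the supremum defining $\rho(\co(\Hfam))$ runs over a larger set of products. Before using \Cref{def:jsr} we note that $\co(\Hfam)$ is bounded (indeed compact), because $\Hfam$ is finite. Hence the JSR of $\co(\Hfam)$ is well defined, and by submultiplicativity (Fekete's lemma) the limit in \Cref{def:jsr} exists and equals the infimum over $k$. We may use any submultiplicative norm, since the JSR does not depend on that choice.

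For the reverse inequality, we fix $k$ and a product $B_k\cdots B_1$ with $B_j=\sum_{i}\lambda^{(j)}_iA_i\in\co(\Hfam)$. Expanding multilinearly gives
\[
    B_k\cdots B_1=\sum_{i_1,\ldots,i_k}\lambda^{(k)}_{i_k}\cdots\lambda^{(1)}_{i_1}\,A_{i_k}\cdots A_{i_1}.
\]
The weights $\prod_j\lambda^{(j)}_{i_j}$ are nonnegative and sum to $\prod_j\sum_i\lambda^{(j)}_i=1$. So the product is a convex combination of length-$k$ products of matrices from $\Hfam$. The triangle inequality and homogeneity of the norm then give
\[
    \norm{B_k\cdots B_1}\le\max_{i_1,\ldots,i_k}\norm{A_{i_k}\cdots A_{i_1}}.
\]
Taking the supremum over the $B_j$, then the $k$th root, and then letting $k\to\infty$ yields $\rho(\co(\Hfam))\le\rho(\Hfam)$.

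As an alternative route, when $\rho(\Hfam)>0$ one could apply \Cref{lem:common_lyapunov_construction} to the scaled family $\Hfam/(\rho(\Hfam)+\varepsilon)$. This produces a norm $p_\varepsilon$ with $p_\varepsilon(A_ix)\le(\rho(\Hfam)+\varepsilon)p_\varepsilon(x)$. Convexity then extends this bound to $\co(\Hfam)$, exactly as in the proof of \Cref{lem:standard_jsr_convergence}. The multilinear expansion avoids the degenerate case $\rho(\Hfam)=0$, so we prefer it.

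The argument has no serious obstacle. The only delicate step is the bookkeeping that the expansion weights form a probability vector, which is the same product-of-simplices computation used in \Cref{lem:policy_selector_convex_hull}. We will also check that it is legitimate to apply the limit definition to the infinite compact family $\co(\Hfam)$.
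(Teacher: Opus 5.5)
Your proof is correct. The paper does not prove this lemma; it only cites the convex-hull invariance of the JSR from \cite[Chapter~1]{jungers2009joint}. Your multilinear expansion is the standard self-contained argument behind that citation.

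Combined with $\Hfam\subset\co(\Hfam)$, your expansion shows something slightly stronger. For every fixed length $k$, the supremum of $\norm{B_k\cdots B_1}$ over $B_j\in\co(\Hfam)$ equals the maximum of $\norm{A_{i_k}\cdots A_{i_1}}$ over $A_{i_j}\in\Hfam$. So the two JSRs agree term by term, with no separate discussion of the limit for the infinite family $\co(\Hfam)$.

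One small correction to your aside: the Lyapunov route via \Cref{lem:common_lyapunov_construction} does not break down when $\rho(\Hfam)=0$. You scale by $\rho(\Hfam)+\varepsilon>0$ in every case. That route is simply a longer path to the same conclusion, using the same convexity step as the proof of \Cref{lem:standard_jsr_convergence}.
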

\begin{proof}
This is the convex-hull invariance property of the JSR; see, for example, \cite[Chapter~1]{jungers2009joint}. We omit the proof.
\end{proof}

\begin{lemma}\label{lem:similarity-jsr}
Let $S\in\R^{m\times m}$ be nonsingular. For every bounded matrix family $\Hfam\subset\R^{m\times m}$,
\[
    \rho\bigl(\{S^{-1}AS:A\in\Hfam\}\bigr)=\rho(\Hfam).
\]
\end{lemma}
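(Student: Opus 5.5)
The plan is a direct comparison of products of the conjugated family with products of the original family, followed by the $k$th-root limit in \Cref{def:jsr}. First, observe that the conjugated products telescope. For any $A_1,\ldots,A_k\in\Hfam$,
\[
    (S^{-1}A_kS)(S^{-1}A_{k-1}S)\cdots(S^{-1}A_1S)=S^{-1}A_k\cdots A_1S,
\]
because each interior factor $SS^{-1}$ equals $I$. Hence the set of length-$k$ products of $\{S^{-1}AS:A\in\Hfam\}$ is exactly $\{S^{-1}\Pi S:\Pi\text{ a length-}k\text{ product from }\Hfam\}$.

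Second, fix any submultiplicative norm. By submultiplicativity, every such product satisfies
\[
    \norm{S^{-1}\Pi S}\le \kappa\norm{\Pi},\qquad \kappa:=\norm{S^{-1}}\,\norm{S},
\]
and $\kappa\ge \norm{S^{-1}S}=\norm{I}>0$. Taking the supremum over length-$k$ products, then $k$th roots, gives
\[
    \sup\norm{S^{-1}A_k\cdots A_1S}^{1/k}\le \kappa^{1/k}\sup\norm{A_k\cdots A_1}^{1/k}.
\]

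Third, pass to the limit. Since $\kappa^{1/k}\to1$, we get $\rho(\{S^{-1}AS\})\le\rho(\Hfam)$. For the reverse inequality, apply the same argument with $S$ replaced by $S^{-1}$ to the family $\{S^{-1}AS\}$; conjugating back by $S^{-1}$ recovers $\Hfam$.

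The main point requiring care is that \Cref{def:jsr} is stated as a limit, so one should confirm that the limit exists for both families. The standard route is Fekete's lemma: the sequence $a_k=\log\sup\norm{A_k\cdots A_1}$ is subadditive by submultiplicativity, and boundedness of $\Hfam$ (and of its conjugate) keeps the suprema finite. If some supremum is zero the family is nilpotent in products, and both JSRs are zero. If one prefers to avoid existence issues altogether, the same inequality chain can be run with $\limsup$ in place of $\lim$ and gives the identical conclusion.
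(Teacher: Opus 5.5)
Your argument is correct and is essentially the standard one the paper has in mind: the paper omits the proof and only cites norm equivalence in the JSR definition (via Jungers). That reference is exactly your telescoping identity combined with $\|S^{-1}XS\|\le\|S^{-1}\|\,\|S\|\,\|X\|$ and $\kappa^{1/k}\to 1$, applied in both directions, and your remark on existence of the limit (Fekete's lemma, or running the chain with $\limsup$) correctly handles the only technical point the paper leaves implicit.
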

\begin{proof}
This standard property follows from norm equivalence in the definition of the JSR; see, for example, \cite[Chapter~1]{jungers2009joint}. We omit the proof.
\end{proof}

\begin{lemma}\label{lem:va_block_stochastic_convex_hull}
For every stochastic policy $\mu:\Sset\to\Delta_{|\Aset|}$, the block operator introduced in~\Cref{lem:va_block_exact_sls} satisfies
\[
    B_\mu^{VA}\in\co(\mathcal B^{VA}(\alpha,\beta)).
\]
\end{lemma}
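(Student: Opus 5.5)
The plan is to reuse the convex decomposition of $\Gamma^\mu$ from the proof of \Cref{lem:policy_selector_convex_hull}. That lemma's argument shows that, with the product weights
\[
    \lambda_\pi:=\prod_{s\in\Sset}\mu(\pi(s)\mid s),\qquad \pi\in\Theta,
\]
one has $\lambda_\pi\ge0$, $\sum_{\pi\in\Theta}\lambda_\pi=1$, and $\Gamma^\mu=\sum_{\pi\in\Theta}\lambda_\pi\Gamma^\pi$. I will use exactly these weights for the block operators.

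The key observation is that $B_\mu^{VA}$ in \eqref{eq:VA_block_coupled_sls} is an affine function of $\Gamma^\mu$, with coefficients that do not depend on $\mu$. Each of the four blocks has the form
\[
    X - cYD(I-\gamma P\Gamma^\mu)Z,
\]
where $X\in\{\Pi,0,\Pi_\perp\}$, the scalar $c\in\{|\Aset|\alpha,\beta\}$, and $Y,Z\in\{\Pi,\Pi_\perp\}$. This block can be rewritten as $X-cYDZ+c\gamma\,YDP\,\Gamma^\mu Z$, which is affine in $\Gamma^\mu$. So I can write $B_\mu^{VA}=G_0+\mathcal L(\Gamma^\mu)$, where $G_0$ is a fixed $2n\times 2n$ matrix and $\mathcal L$ is linear.

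Using $\sum_\pi\lambda_\pi=1$ to distribute the constant term, I then get
\[
    B_\mu^{VA}=\sum_{\pi}\lambda_\pi G_0+\mathcal L\Bigl(\sum_\pi\lambda_\pi\Gamma^\pi\Bigr)=\sum_{\pi\in\Theta}\lambda_\pi\bigl(G_0+\mathcal L(\Gamma^\pi)\bigr)=\sum_{\pi\in\Theta}\lambda_\pi B_\pi^{VA}.
\]
This exhibits $B_\mu^{VA}$ as an element of $\co(\mathcal B^{VA}(\alpha,\beta))$, with weights that are nonnegative and sum to one.

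There is no real obstacle here. The only care needed is to check that the policy enters each block linearly through $\Gamma^\mu$, with no products of two selector matrices, and that the constant parts are redistributed correctly using $\sum_\pi\lambda_\pi=1$. \Cref{lem:va_block_constant_switching_parts} could simplify the bookkeeping for the first block column, since it is policy-independent, but it is not needed for the argument.
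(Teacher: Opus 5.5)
Your proposal is correct and is essentially the paper's proof. Both write the stochastic selector as a convex combination of deterministic selectors and then use that $B_\mu^{VA}$ is affine in the selector to pass the combination through each block; you do this at the level of $\Gamma^\mu$ with explicit product weights, and the paper does it at the level of $P\Gamma^\mu$.
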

\begin{proof}
Any stochastic policy selector can be written as a convex combination of deterministic policy selectors. Thus there are weights $\lambda_\pi\ge0$, $\sum_{\pi\in\Theta}\lambda_\pi=1$, such that
\[
    P\Gamma^\mu=\sum_{\pi\in\Theta}\lambda_\pi P\Gamma^\pi.
\]
The formula \eqref{eq:VA_block_coupled_sls} defining $B_\mu^{VA}$ is affine in $P\Gamma^\mu$. Substituting the preceding convex decomposition into each block gives
\[
    B_\mu^{VA}=\sum_{\pi\in\Theta}\lambda_\pi B_\pi^{VA},
\]
which proves the claim.
\end{proof}

\section{Elementwise updates and Q-space identities}\label{app:elementwise_updates}

\begin{algorithm}[t]
\caption{Deterministic dueling Q-learning: elementwise form}\label{alg:centered_va_elementwise}
\begin{algorithmic}[1]
\STATE Initialize $Q_0\in\R^n$ and, for every $s\in\Sset$ and $a\in\Aset$, set
\[
V_0(s,a)\leftarrow \frac{1}{|\Aset|}\sum_{c\in\Aset}Q_0(s,c),
\qquad
A_0(s,a)\leftarrow Q_0(s,a)-V_0(s,a).
\]
\FOR{$k=0,1,\ldots$}
\STATE Form $Q_k\leftarrow V_k+A_k$.
\STATE For every $(s,a)\in\Sset\times\Aset$, compute the Bellman residual $F(Q_k)(s,a)-Q_k(s,a)$.
\STATE For every $s\in\Sset$ and $a\in\Aset$, set
\[
V_{k+1}(s,a)\leftarrow V_k(s,a)+\alpha\sum_{b\in\Aset}d(s,b)\bigl(F(Q_k)(s,b)-Q_k(s,b)\bigr).
\]
\STATE For every $s\in\Sset$ and $a\in\Aset$, set
\[
\begin{aligned}
A_{k+1}(s,a)\leftarrow A_k(s,a)&+\beta\Biggl[d(s,a)\bigl(F(Q_k)(s,a)-Q_k(s,a)\bigr)\\
&\qquad\qquad -\frac{1}{|\Aset|}\sum_{b\in\Aset}d(s,b)\bigl(F(Q_k)(s,b)-Q_k(s,b)\bigr)\Biggr].
\end{aligned}
\]
\STATE $Q_{k+1}\leftarrow V_{k+1}+A_{k+1}$.
\ENDFOR
\end{algorithmic}
\end{algorithm}

\begin{proposition}[Induced Q-space recursion]\label{prop:centered_va_update_identity}
Suppose that $Q_k,V_k,A_k\in\R^n$, $Q_k=V_k+A_k$, and that $V_k$ and $A_k$ follow \eqref{eq:tabular_value_update}--\eqref{eq:tabular_advantage_update}. Then the induced Q-space recursion is \eqref{eq:tabular_VA_Qspace} with the preconditioner \eqref{eq:CVA}.
\end{proposition}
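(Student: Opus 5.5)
The plan is to add the two component updates and regroup. Throughout, write $\Delta_k:=F(Q_k)-Q_k$ for the Bellman residual evaluated at the reconstructed iterate $Q_k=V_k+A_k$. The only ingredients needed are linearity of matrix multiplication and the hypothesis $Q_k=V_k+A_k$. No projection identity from \Cref{lem:mean_projection_matrix_form} is required for the forward direction, because we never decompose $\Delta_k$; we only factor out the common vector $D\Delta_k$.

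First, sum \eqref{eq:tabular_value_update} and \eqref{eq:tabular_advantage_update} term by term:
\[
    V_{k+1}+A_{k+1}=V_k+A_k+|\Aset|\alpha\Pi D\Delta_k+\beta\Pi_\perp D\Delta_k .
\]
Second, define $Q_{k+1}:=V_{k+1}+A_{k+1}$, which is consistent with the reconstruction step in \Cref{alg:centered_va_vector}. Use $Q_k=V_k+A_k$ on the right-hand side, and factor $D\Delta_k$ from the right by distributivity:
\[
    Q_{k+1}=Q_k+(|\Aset|\alpha\Pi+\beta\Pi_\perp)D\bigl(F(Q_k)-Q_k\bigr)=Q_k+C^{VA}D\bigl(F(Q_k)-Q_k\bigr).
\]
This is exactly \eqref{eq:tabular_VA_Qspace} with $C^{VA}$ as in \eqref{eq:CVA}.

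To make the statement self-contained, I would also remark that the representation $Q_{k+1}=V_{k+1}+A_{k+1}$ remains the centered one when $V_k=\Pi Q_k$ and $A_k=\Pi_\perp Q_k$. This holds because $\Pi^2=\Pi$ and $\Pi_\perp\Pi=0$ give $\Pi V_{k+1}=V_{k+1}$ and $\Pi A_{k+1}=0$. This is the same computation already carried out in \Cref{lem:centered_va_q_only_equivalence}, so it can simply be cited.

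There is no real obstacle. The only point requiring care is that the residual in both updates is evaluated at the same $Q_k$, which is what permits $D\Delta_k$ to be factored out.
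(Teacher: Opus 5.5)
Your proposal is correct and matches the paper's proof: sum the two component updates, use $Q_k=V_k+A_k$, and factor out the common vector $D\bigl(F(Q_k)-Q_k\bigr)$ to obtain $C^{VA}=|\Aset|\alpha\Pi+\beta\Pi_\perp$. The paper additionally gives a coordinate-by-coordinate check of $C^{VA}z$. Your centeredness remark is also correct, though the statement does not require it.
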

\begin{proof}
Starting from the definition of the induced Q-iterate, $Q_{k+1}=V_{k+1}+A_{k+1}$, we substitute the two updates \eqref{eq:tabular_value_update}--\eqref{eq:tabular_advantage_update} to obtain
\[
    Q_{k+1}=V_k+A_k+|\Aset|\alpha\Pi D\bigl(F(Q_k)-Q_k\bigr)+\beta\Pi_\perp D\bigl(F(Q_k)-Q_k\bigr)
\]
\[
    =Q_k+(|\Aset|\alpha\Pi+\beta\Pi_\perp)D\bigl(F(Q_k)-Q_k\bigr),
\]
where the second equality uses $Q_k=V_k+A_k$. This is exactly \eqref{eq:tabular_VA_Qspace} with the preconditioner \eqref{eq:CVA}.

The same identity can be verified coordinate by coordinate. For any $z\in\R^n$, the projection definitions give
\[
    (\Pi z)(s,a)=\frac{1}{|\Aset|}\sum_{b\in\Aset}z(s,b),
    \qquad
    (\Pi_\perp z)(s,a)=z(s,a)-\frac{1}{|\Aset|}\sum_{b\in\Aset}z(s,b).
\]
Therefore
\[
\begin{aligned}
    (C^{VA}z)(s,a)
    &=|\Aset|\alpha(\Pi z)(s,a)+\beta(\Pi_\perp z)(s,a)\\
    &=\alpha\sum_{b\in\Aset}z(s,b)+\beta\left(z(s,a)-\frac{1}{|\Aset|}\sum_{b\in\Aset}z(s,b)\right).
\end{aligned}
\]
Taking $z=D(F(Q_k)-Q_k)$ gives the sum of the value increment and the centered advantage increment in \eqref{eq:tabular_value_update}--\eqref{eq:tabular_advantage_update}. Hence the coordinate and vector forms are the same recursion.
\end{proof}

\begin{algorithm}[t]
\caption{Dueling Q-learning: elementwise form of the induced Q-only recursion}\label{alg:centered_va_q_only_elementwise}
\begin{algorithmic}[1]
\STATE Initialize $Q_0\in\R^n$.
\FOR{$k=0,1,\ldots$}
\STATE For every $(s,a)\in\Sset\times\Aset$, compute the Bellman residual $F(Q_k)(s,a)-Q_k(s,a)$.
\STATE For every $s\in\Sset$ and $a\in\Aset$, set
\[
\begin{aligned}
Q_{k+1}(s,a)\leftarrow Q_k(s,a)
&+\alpha\sum_{b\in\Aset}d(s,b)\bigl(F(Q_k)(s,b)-Q_k(s,b)\bigr)\\
&+\beta\Biggl[d(s,a)\bigl(F(Q_k)(s,a)-Q_k(s,a)\bigr)\\
&\qquad\qquad -\frac{1}{|\Aset|}\sum_{b\in\Aset}d(s,b)\bigl(F(Q_k)(s,b)-Q_k(s,b)\bigr)\Biggr].
\end{aligned}
\]
\ENDFOR
\end{algorithmic}
\end{algorithm}

\section{Conditional-mean identities}\label{app:conditional_mean_identities}

\begin{lemma}\label{lem:centered_va_iid_conditional_mean}
Under the i.i.d. sampling model, the conditional mean recursion of~\Cref{alg:centered_va_iid} is exactly the deterministic dueling Q-learning recursion \eqref{eq:tabular_value_update}--\eqref{eq:tabular_advantage_update}. Equivalently, for every $s\in\Sset$ and $a\in\Aset$,
\begin{equation}\label{eq:centered_iid_mean_value}
    \E[V_{k+1}(s,a)-V_k(s,a)\mid\mathcal F_k]
    =\alpha\sum_{b\in\Aset}d(s,b)\bigl(F(Q_k)(s,b)-Q_k(s,b)\bigr)
\end{equation}
\begin{equation}\label{eq:centered_iid_mean_advantage}
\begin{aligned}
    \E[A_{k+1}(s,a)-A_k(s,a)\mid\mathcal F_k]
    &=\beta\Biggl[d(s,a)\bigl(F(Q_k)(s,a)-Q_k(s,a)\bigr)\\
    &\qquad -\frac{1}{|\Aset|}\sum_{b\in\Aset}d(s,b)\bigl(F(Q_k)(s,b)-Q_k(s,b)\bigr)\Biggr].
\end{aligned}
\end{equation}
\end{lemma}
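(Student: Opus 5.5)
The plan is to compute the conditional expectation of the sampled increment $\zeta_k=(e_{a_k}\otimes e_{s_k})\delta_k$ given $\mathcal F_k$, and then push it through the two linear maps $|\Aset|\alpha\Pi$ and $\beta\Pi_\perp$. From the AV-space form of the sampled update,
\[
    V_{k+1}-V_k=|\Aset|\alpha\Pi\zeta_k,\qquad A_{k+1}-A_k=\beta\Pi_\perp\zeta_k.
\]
Since $\Pi$ and $\Pi_\perp$ are deterministic, it therefore suffices to show
\[
    \E[\zeta_k\mid\mathcal F_k]=D\bigl(F(Q_k)-Q_k\bigr).
\]
Before doing so, I will check that these vector forms match the coordinate updates in~\Cref{alg:centered_va_iid}. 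The $(s,b)$ entry of $|\Aset|\Pi(e_{a_k}\otimes e_{s_k})$ is $\mathbf 1\{s=s_k\}$. The $(s,b)$ entry of $\Pi_\perp(e_{a_k}\otimes e_{s_k})$ is $\mathbf 1\{s=s_k\}(\mathbf 1\{b=a_k\}-1/|\Aset|)$. Both follow from the matrix form in~\Cref{lem:mean_projection_matrix_form}.

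For the conditional mean, I will use that $Q_k$ is $\mathcal F_k$-measurable and that $(s_k,a_k,s'_k)$ is independent of $\mathcal F_k$, with $(s_k,a_k)\sim d$ and $s'_k\sim P(\cdot\mid s_k,a_k)$. For a fixed coordinate $(s,a)$,
\[
    \E[\zeta_k(s,a)\mid\mathcal F_k]=d(s,a)\,\E\bigl[r(s,a,s')+\gamma\max_c Q_k(s',c)-Q_k(s,a)\bigr],
\]
where the expectation on the right is over $s'\sim P(\cdot\mid s,a)$ with $Q_k$ held fixed. Formally this step uses the freezing/independence lemma for conditional expectations. The right-hand side equals
\[
    d(s,a)\Bigl(R(s,a)+\gamma\sum_{s'}P(s'\mid s,a)V_{Q_k}(s')-Q_k(s,a)\Bigr)=d(s,a)\bigl(F(Q_k)(s,a)-Q_k(s,a)\bigr),
\]
which is the $(s,a)$ entry of $D(F(Q_k)-Q_k)$. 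Integrability holds because $R_{\max}<\infty$ and $Q_k$ is finite-valued. If one wants $\E\|Q_k\|<\infty$, it can be obtained inductively, though conditional expectations of nonnegative or a.s.-finite quantities suffice here.

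It then remains to translate into coordinates. Applying $|\Aset|\alpha\Pi$ to $z:=D(F(Q_k)-Q_k)$ gives $\alpha\sum_b d(s,b)(F(Q_k)(s,b)-Q_k(s,b))$ in every coordinate $(s,a)$, which is~\eqref{eq:centered_iid_mean_value}. Applying $\beta\Pi_\perp$ gives $\beta\bigl[z(s,a)-|\Aset|^{-1}\sum_b z(s,b)\bigr]$, which is~\eqref{eq:centered_iid_mean_advantage}. These are exactly the coordinate identities already derived in the proof of~\Cref{prop:centered_va_update_identity}, so this matches~\eqref{eq:tabular_value_update}--\eqref{eq:tabular_advantage_update}.

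The only delicate point is the measure-theoretic justification for treating $Q_k$ as fixed inside the conditional expectation, that is, independence of the fresh sample from $\mathcal F_k$. This follows directly from the i.i.d. sampling model and the definition of the natural filtration. Everything else is bookkeeping with the Kronecker ordering.
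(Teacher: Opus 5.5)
Your proposal is correct and follows essentially the paper's argument: the key step in both is $\E[\mathbf 1\{s_k=s,a_k=a\}\delta_k\mid\mathcal F_k]=d(s,a)\bigl(F(Q_k)(s,a)-Q_k(s,a)\bigr)$, i.e., $\E[\zeta_k\mid\mathcal F_k]=D(F(Q_k)-Q_k)$. The paper applies this directly to the coordinate updates of \Cref{alg:centered_va_iid}, summing over the sampled action for the value part, whereas you push the vector identity through the fixed linear maps $|\Aset|\alpha\Pi$ and $\beta\Pi_\perp$, which is the same computation written in matrix form.
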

\begin{proof}
Condition on $\mathcal F_k$ and define
\[
    \Delta_k(s,a):=F(Q_k)(s,a)-Q_k(s,a).
\]
Conditioned on $\mathcal F_k$, if the sampled pair is $(s_k,a_k)=(s,a)$, then only the next state and reward remain random. Hence
\[
\begin{aligned}
\E[\delta_k\mid\mathcal F_k,s_k=s,a_k=a]
&=\E\left[r_{k+1}+\gamma\max_{c\in\Aset}Q_k(s'_k,c)-Q_k(s,a)\mid\mathcal F_k,s_k=s,a_k=a\right]\\
&=R(s,a)+\gamma\sum_{s'\in\Sset}P(s'\mid s,a)\max_{c\in\Aset}Q_k(s',c)-Q_k(s,a)\\
&=F(Q_k)(s,a)-Q_k(s,a)=\Delta_k(s,a).
\end{aligned}
\]
Therefore
\[
    \E[\mathbf 1\{s_k=s,a_k=a\}\delta_k\mid\mathcal F_k]=d(s,a)\Delta_k(s,a).
\]
Summing this identity over the sampled action gives
\[
    \E[\mathbf 1\{s_k=s\}\delta_k\mid\mathcal F_k]=\sum_{b\in\Aset}d(s,b)\Delta_k(s,b).
\]
The value update in~\Cref{alg:centered_va_iid} adds $\alpha\delta_k$ to every action copy of the sampled state. Thus, for fixed $(s,a)$,
\[
    \E[V_{k+1}(s,a)-V_k(s,a)\mid\mathcal F_k]
    =\alpha\E[\mathbf 1\{s_k=s\}\delta_k\mid\mathcal F_k]
    =\alpha\sum_{b\in\Aset}d(s,b)\Delta_k(s,b),
\]
which is \eqref{eq:centered_iid_mean_value}. The advantage update adds $\beta(\mathbf 1\{a=a_k\}-1/|\Aset|)\delta_k$ when $s_k=s$. Hence
\[
\begin{aligned}
\E[A_{k+1}(s,a)-A_k(s,a)\mid\mathcal F_k]
&=\beta\E\left[\mathbf 1\{s_k=s\}\left(\mathbf 1\{a_k=a\}-\frac{1}{|\Aset|}\right)\delta_k\mid\mathcal F_k\right]\\
&=\beta\left[d(s,a)\Delta_k(s,a)-\frac{1}{|\Aset|}\sum_{b\in\Aset}d(s,b)\Delta_k(s,b)\right],
\end{aligned}
\]
which is \eqref{eq:centered_iid_mean_advantage}. These are exactly the coordinate forms of \eqref{eq:tabular_value_update}--\eqref{eq:tabular_advantage_update}.
\end{proof}

\section{Stochastic bounds}\label{app:stochastic_bounds}

The stochastic bounds in this appendix use the fixed-ratio gains introduced in
\eqref{eq:centered_va_iid_fixed_ratio}. Thus the relative scaling of the value
and advantage gains is held fixed while the scalar gain $\alpha$ varies:
\[
    \beta=\frac{\bar\beta}{\bar\alpha}\alpha,
    \qquad
    C^{VA}=|\Aset|\alpha\Pi+\beta\Pi_\perp
    =\alpha\left(|\Aset|\Pi+\frac{\bar\beta}{\bar\alpha}\Pi_\perp\right).
\]
Consequently,
\[
    \frac{C^{VA}}{\alpha}=|\Aset|\Pi+\frac{\bar\beta}{\bar\alpha}\Pi_\perp
\]
is independent of $\alpha$. The reference gains $(\bar\alpha,\bar\beta)$ define
the endpoint family $\mathcal A^{VA}(\bar\alpha,\bar\beta)$, and the modes for
smaller $\alpha$ are obtained by interpolating between the identity and the
corresponding reference modes.

\begin{lemma}\label{lem:centered_va_iid_fixed_ratio_interpolation}
Suppose that the JSR condition~\eqref{eq:centered_va_iid_jsr_condition} holds. Then, there exist a norm $p$ on $\R^n$ and a constant $c>0$, independent of the step-size $\alpha$, with $c\bar\alpha<1$, such that for every $0<\alpha\le\bar\alpha$,
\begin{equation}\label{eq:centered_va_iid_fixed_ratio_interpolation}
    p(Mx)\le(1-c\alpha)p(x),
    \qquad \forall x\in\R^n,
    \quad \forall M\in\co(\mathcal A^{VA}(\alpha,\beta)),
\end{equation}
whenever $\mathcal A^{VA}(\alpha,\beta)$ is formed with the fixed-ratio gains in \eqref{eq:centered_va_iid_fixed_ratio}.
\end{lemma}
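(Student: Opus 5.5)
The plan is to use the interpolation structure noted just before the statement: with fixed-ratio gains, every mode is an affine interpolation between the identity and the corresponding reference mode. Write $G:=|\Aset|\Pi+(\bar\beta/\bar\alpha)\Pi_\perp$, so that $C^{VA}=\alpha G$ and $C^{VA}(\bar\alpha,\bar\beta)=\bar\alpha G$. For any stochastic policy $\mu$, set $\theta:=\alpha/\bar\alpha\in(0,1]$. Then
\[
    A_\mu^{VA}(\alpha,\beta)=I-\alpha GD(I-\gamma P\Gamma^\mu)=(1-\theta)I+\theta A_\mu^{VA}(\bar\alpha,\bar\beta).
\]
This identity is linear in $\theta$. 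It is also affine in $P\Gamma^\mu$, so by \Cref{lem:policy_selector_convex_hull} every $M\in\co(\mathcal A^{VA}(\alpha,\beta))$ has the form $M=(1-\theta)I+\theta\bar M$ with $\bar M\in\co(\mathcal A^{VA}(\bar\alpha,\bar\beta))$. To see this, take the same convex weights on the deterministic policies and use $\sum\lambda_\pi=1$ to pull out the identity term.

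Next I would fix the norm. Because \eqref{eq:centered_va_iid_jsr_condition} gives $\bar\rho:=\rho(\mathcal A^{VA}(\bar\alpha,\bar\beta))<1$, choose $\varepsilon>0$ with $\beta_\varepsilon:=\bar\rho+\varepsilon<1$. Let $p:=p_\varepsilon$ be the JSR Lyapunov norm of \Cref{lem:common_lyapunov_construction}, applied to the reference family $\mathcal A^{VA}(\bar\alpha,\bar\beta)$. By the convexity argument in the proof of \Cref{lem:standard_jsr_convergence}, $p(\bar Mx)\le\beta_\varepsilon p(x)$ for every $\bar M$ in the convex hull. The triangle inequality then gives
\[
    p(Mx)\le(1-\theta)p(x)+\theta\beta_\varepsilon p(x)=\bigl(1-\theta(1-\beta_\varepsilon)\bigr)p(x)=\Bigl(1-\frac{1-\beta_\varepsilon}{\bar\alpha}\alpha\Bigr)p(x).
\]
So I set $c:=(1-\beta_\varepsilon)/\bar\alpha$. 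This $c$ depends only on the reference family and $\varepsilon$, not on $\alpha$.

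Finally I would check $c\bar\alpha=1-\beta_\varepsilon<1$. This needs $\beta_\varepsilon>0$, which can be arranged by choosing $\varepsilon>0$; the case $\bar\rho=0$ is covered the same way. Lemma \ref{lem:common_lyapunov_construction} requires $\beta_\varepsilon\in(0,1)$, and any small positive $\varepsilon$ meets this.

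The only real point needing care is the convex-hull bookkeeping in the first step. One must confirm that an element of $\co(\mathcal A^{VA}(\alpha,\beta))$ maps to an element of the reference convex hull under the same weights. This follows because the map $(\alpha,M)\mapsto I-\alpha GD(I-\gamma M)$ is affine in $M$ for fixed $\alpha$. Everything after that is a one-line triangle-inequality estimate.
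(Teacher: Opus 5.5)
Your proposal is correct and follows essentially the same route as the paper. The paper writes each fixed-ratio mode as $(1-\alpha/\bar\alpha)I+(\alpha/\bar\alpha)\bar A_\pi^{VA}$, takes the JSR Lyapunov norm of the reference family with a rate $\theta\in(\rho(\mathcal A^{VA}(\bar\alpha,\bar\beta)),1)$ (your $\beta_\varepsilon$), extends the contraction to the convex hull by convexity of the norm, and concludes with $c=(1-\theta)/\bar\alpha$, exactly as you do.
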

\begin{proof}
Write $\bar A_\pi^{VA}$ for the mode in \eqref{eq:tabular_AVA} formed with $\alpha=\bar\alpha$ and $\beta=\bar\beta$; explicitly,
\begin{equation}\label{eq:centered_va_iid_reference_mode}
    \bar A_\pi^{VA}
    :=I-\bigl(|\Aset|\bar\alpha\Pi+\bar\beta\Pi_\perp\bigr)D(I-\gamma P\Gamma^\pi).
\end{equation}
Choose $\theta$ such that
\[
    \rho(\mathcal A^{VA}(\bar\alpha,\bar\beta))<\theta<1.
\]
By~\Cref{lem:common_lyapunov_construction}, there is a norm $p$ such that
\[
    p(\bar A_\pi^{VA}x)\le\theta p(x),\qquad x\in\R^n,
    \quad \pi\in\Theta.
\]
By convexity of the norm, the same bound holds for every $\bar M\in\co(\{\bar A_\pi^{VA}:\pi\in\Theta\})$:
\[
    p(\bar Mx)\le\theta p(x),\qquad x\in\R^n.
\]
For the gains in \eqref{eq:centered_va_iid_fixed_ratio}, the corresponding mode is
\[
\begin{aligned}
    A_\pi^{VA}
    &=I-\bigl(|\Aset|\alpha\Pi+\beta\Pi_\perp\bigr)D(I-\gamma P\Gamma^\pi)\\
    &=I-\alpha\left(|\Aset|\Pi+\frac{\bar\beta}{\bar\alpha}\Pi_\perp\right)D(I-\gamma P\Gamma^\pi)\\
    &=I-\frac{\alpha}{\bar\alpha}\bigl(|\Aset|\bar\alpha\Pi+\bar\beta\Pi_\perp\bigr)D(I-\gamma P\Gamma^\pi)\\
    &=I-\frac{\alpha}{\bar\alpha}(I-\bar A_\pi^{VA})\\
    &=\left(1-\frac{\alpha}{\bar\alpha}\right)I+\frac{\alpha}{\bar\alpha}\bar A_\pi^{VA}.
\end{aligned}
\]
Thus, if $M\in\co(\mathcal A^{VA}(\alpha,\beta))$ is formed with the same fixed-ratio gains, then there exists $\bar M\in\co(\{\bar A_\pi^{VA}:\pi\in\Theta\})$ such that
\[
    M=\left(1-\frac{\alpha}{\bar\alpha}\right)I+\frac{\alpha}{\bar\alpha}\bar M.
\]
Therefore, for every $x\in\R^n$,
\[
    p(Mx)\le\left(1-\frac{\alpha}{\bar\alpha}\right)p(x)+\frac{\alpha}{\bar\alpha}p(\bar Mx)
    \le\left(1-\frac{1-\theta}{\bar\alpha}\alpha\right)p(x).
\]
Set $c=(1-\theta)/\bar\alpha$. Since $\theta\in(0,1)$, one has $c\bar\alpha<1$, and the preceding inequality proves \eqref{eq:centered_va_iid_fixed_ratio_interpolation}.
\end{proof}

\begin{lemma}\label{lem:centered_va_iid_product_bound}
Suppose that \eqref{eq:centered_va_iid_jsr_condition} holds. Then there exist constants $c>0$ and $K<\infty$, independent of $\alpha$, with $c\bar\alpha<1$, such that for every $0<\alpha\le\bar\alpha$, every $\ell\ge0$, and every $M_0,\ldots,M_{\ell-1}\in\co(\mathcal A^{VA}(\alpha,\beta))$ corresponding to the gains in \eqref{eq:centered_va_iid_fixed_ratio},
\begin{equation}\label{eq:centered_va_iid_product_bound}
    \norm{M_{\ell-1}\cdots M_0}_2\le K(1-c\alpha)^\ell.
\end{equation}
\end{lemma}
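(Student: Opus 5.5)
The plan is to derive the product bound directly from the interpolation estimate in~\Cref{lem:centered_va_iid_fixed_ratio_interpolation} combined with finite-dimensional norm equivalence. Under the JSR condition~\eqref{eq:centered_va_iid_jsr_condition}, that lemma supplies a norm $p$ on $\R^n$ and a constant $c>0$, both independent of $\alpha$ and with $c\bar\alpha<1$, such that $p(Mx)\le(1-c\alpha)p(x)$ for every $M\in\co(\mathcal A^{VA}(\alpha,\beta))$, every $x$, and every $0<\alpha\le\bar\alpha$. We keep exactly this $c$ as the constant in~\eqref{eq:centered_va_iid_product_bound}.

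Next, fix an arbitrary word $M_0,\ldots,M_{\ell-1}$ from the convexified family and iterate the one-step contraction. This gives
\[
p(M_{\ell-1}\cdots M_0x)\le(1-c\alpha)^\ell p(x)
\]
for all $x$. The case $\ell=0$ is trivial, since the empty product is $I$. The factor $1-c\alpha$ is nonnegative because $c\alpha\le c\bar\alpha<1$, so raising it to the power $\ell$ causes no sign problem.

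To pass to the Euclidean norm, we use the fact that $p$ is a norm on $\R^n$. Hence there are constants $0<a\le b<\infty$ with $a\norm{x}_2\le p(x)\le b\norm{x}_2$. The norm $p$ is the JSR Lyapunov norm from~\Cref{lem:common_lyapunov_construction} built for the fixed reference family $\mathcal A^{VA}(\bar\alpha,\bar\beta)$ with a fixed $\theta$. Concretely, one may take $a=1$ and $b=\sqrt{C_\varepsilon}$, and these constants do not depend on $\alpha$. It then follows that
\[
\norm{M_{\ell-1}\cdots M_0x}_2\le a^{-1}p(M_{\ell-1}\cdots M_0x)\le a^{-1}b(1-c\alpha)^\ell\norm{x}_2.
\]
Taking the supremum over unit vectors $x$ yields~\eqref{eq:centered_va_iid_product_bound} with $K:=b/a$.

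The only point that needs care is the uniformity of $K$ and $c$ in $\alpha$. This holds because $p$ is constructed once from the reference modes, and the fixed-ratio structure writes every mode as $(1-\alpha/\bar\alpha)I+(\alpha/\bar\alpha)\bar A_\pi^{VA}$. Both facts are already established in~\Cref{lem:centered_va_iid_fixed_ratio_interpolation}, so no step in this argument is a genuine obstacle.
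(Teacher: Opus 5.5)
Your proposal is correct and follows essentially the same route as the paper: iterate the one-step contraction $p(Mx)\le(1-c\alpha)p(x)$ from \Cref{lem:centered_va_iid_fixed_ratio_interpolation}, then pass to the Euclidean norm via $\alpha$-independent norm-equivalence constants ($K=b/a$ in your notation, $K=ab$ in the paper's). Your explicit choice $a=1$, $b=\sqrt{C_\varepsilon}$ from \Cref{lem:common_lyapunov_construction} is a valid concrete instance of that step.
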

\begin{proof}
Let $p$ and $c$ be the norm and constant from~\Cref{lem:centered_va_iid_fixed_ratio_interpolation}. Fix $0<\alpha\le\bar\alpha$, $\ell\ge0$, and matrices
\[
    M_0,\ldots,M_{\ell-1}\in\co(\mathcal A^{VA}(\alpha,\beta))
\]
corresponding to the gains in \eqref{eq:centered_va_iid_fixed_ratio}. For $\ell=0$, the product is the identity. For $\ell\ge1$, repeated application of \eqref{eq:centered_va_iid_fixed_ratio_interpolation} gives
\[
    p(M_{\ell-1}\cdots M_0x)\le(1-c\alpha)^\ell p(x),\qquad x\in\R^n.
\]
All norms on the finite-dimensional space are equivalent, so there are constants $a,b<\infty$, independent of $\alpha$, such that
\[
    \norm{x}_2\le ap(x),
    \qquad p(x)\le b\norm{x}_2,
    \qquad x\in\R^n.
\]
Consequently,
\[
    \norm{M_{\ell-1}\cdots M_0x}_2\le ap(M_{\ell-1}\cdots M_0x)\le a(1-c\alpha)^\ell p(x)
    \le ab(1-c\alpha)^\ell\norm{x}_2.
\]
Taking the supremum over $\norm{x}_2=1$ and setting $K:=ab$ proves \eqref{eq:centered_va_iid_product_bound}.
\end{proof}

\begin{lemma}\label{lem:centered_va_iid_mode_difference}
Let
\begin{equation}\label{eq:centered_va_iid_L_def}
    L:=\sup_{\mu,\nu}\left\|\frac{C^{VA}}{\alpha}DP(\Gamma^\mu-\Gamma^\nu)\right\|_2,
\end{equation}
where the supremum is over all stochastic policies $\mu,\nu:\Sset\to\Delta_{|\Aset|}$. Then $L<\infty$ and, for any stochastic policies $\mu,\nu:\Sset\to\Delta_{|\Aset|}$,
\begin{equation}\label{eq:centered_va_iid_mode_difference}
    \norm{A_\mu^{VA}-A_\nu^{VA}}_2\le\alpha\gamma L.
\end{equation}
\end{lemma}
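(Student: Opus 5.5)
The argument has two parts: show that the supremum defining $L$ is finite, and then identify $A_\mu^{VA}-A_\nu^{VA}$ with $\alpha\gamma$ times one of the matrices inside that supremum.

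\emph{Finiteness of $L$.} Under the fixed-ratio gains \eqref{eq:centered_va_iid_fixed_ratio}, the matrix $C^{VA}/\alpha=|\Aset|\Pi+(\bar\beta/\bar\alpha)\Pi_\perp$ does not depend on $\alpha$, so $L$ is a well-defined constant. By submultiplicativity,
\[
\left\|\frac{C^{VA}}{\alpha}DP(\Gamma^\mu-\Gamma^\nu)\right\|_2\le \left\|\frac{C^{VA}}{\alpha}\right\|_2\norm{D}_2\norm{P(\Gamma^\mu-\Gamma^\nu)}_2 .
\]
The first factor satisfies $\norm{|\Aset|\Pi+(\bar\beta/\bar\alpha)\Pi_\perp}_2=\max\{|\Aset|,\bar\beta/\bar\alpha\}$, since $\Pi$ and $\Pi_\perp$ are complementary orthogonal projections (\Cref{lem:mean_projection_matrix_form}). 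The second is $\norm{D}_2=d_{\max}$. For the third, \Cref{lem:policy_selector_convex_hull} shows that $P\Gamma^\mu$ and $P\Gamma^\nu$ are row-stochastic $n\times n$ matrices. Hence $\norm{P\Gamma^\mu-P\Gamma^\nu}_\infty\le 2$, and with $\norm{\cdot}_2\le\sqrt{n}\norm{\cdot}_\infty$ we get $\norm{P(\Gamma^\mu-\Gamma^\nu)}_2\le 2\sqrt n$. Alternatively, one can note that the set of stochastic-policy selectors is compact and the map is continuous. Either way the supremum is finite:
\[
L\le 2\sqrt n\,d_{\max}\max\{|\Aset|,\bar\beta/\bar\alpha\}<\infty .
\]

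\emph{The difference bound.} From \eqref{eq:tabular_AVA_stochastic}, the identity terms cancel, as do the terms $-C^{VA}D$, leaving
\[
A_\mu^{VA}-A_\nu^{VA}=\gamma C^{VA}DP(\Gamma^\mu-\Gamma^\nu)=\alpha\gamma\,\frac{C^{VA}}{\alpha}DP(\Gamma^\mu-\Gamma^\nu).
\]
Taking the spectral norm and applying the definition \eqref{eq:centered_va_iid_L_def} of $L$ as a supremum yields \eqref{eq:centered_va_iid_mode_difference} directly.

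\emph{Main point to check.} There is no real obstacle. The one thing to verify is that $L$ is independent of $\alpha$, which is exactly the fixed-ratio structure. This independence is what allows the bound to be used uniformly over $0<\alpha\le\bar\alpha$ in \Cref{thm:centered_va_iid_small_alpha_error}.
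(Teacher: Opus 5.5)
Your proposal is correct and follows the paper's proof: the paper also cancels the identity and $-C^{VA}D$ terms to get $A_\mu^{VA}-A_\nu^{VA}=\alpha\gamma\,\frac{C^{VA}}{\alpha}DP(\Gamma^\mu-\Gamma^\nu)$, which yields the bound immediately from the definition of $L$. The only difference is how finiteness of $L$ is shown: the paper uses compactness of the stochastic-policy set and continuity of the map, while your submultiplicative estimate gives the explicit constant $L\le 2\sqrt{n}\,d_{\max}\max\{|\Aset|,\bar\beta/\bar\alpha\}$, and that estimate is valid.
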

\begin{proof}
By \eqref{eq:tabular_AVA_stochastic} and \eqref{eq:centered_va_iid_fixed_ratio}, the two modes differ only through the Bellman policy selector:
\[
    A_\mu^{VA}-A_\nu^{VA}=\alpha\gamma\frac{C^{VA}}{\alpha}DP(\Gamma^\mu-\Gamma^\nu).
\]
This gives \eqref{eq:centered_va_iid_mode_difference}. The supremum in \eqref{eq:centered_va_iid_L_def} is finite because the stochastic-policy set is a finite product of probability simplices and is compact, while the norm above is continuous in $(\mu,\nu)$.
\end{proof}

\begin{lemma}\label{lem:centered_va_iid_noise_growth}
There exist finite constants $\sigma_0,\sigma_1\ge0$, depending only on the MDP, $C^{VA}/\alpha$, and $Q^\star\in\R^n$, such that the noise term $\xi_k\in\R^n$ in~\Cref{lem:centered_va_iid_error_system} satisfies
\begin{equation}\label{eq:centered_va_iid_noise_growth}
    \E[\norm{\xi_k}_2^2\mid\mathcal F_k]
    \le \sigma_0^2+\sigma_1^2\norm{Q_k-Q^\star}_2^2,
    \qquad k\ge0.
\end{equation}
\end{lemma}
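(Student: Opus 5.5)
The plan is to bound $\xi_k=\alpha^{-1}C^{VA}w_k$ by first controlling $w_k$ and then applying the fixed matrix $C^{VA}/\alpha=|\Aset|\Pi+(\bar\beta/\bar\alpha)\Pi_\perp$. This matrix does not depend on $\alpha$, so $\norm{\xi_k}_2\le\kappa\norm{w_k}_2$ with $\kappa:=\norm{|\Aset|\Pi+(\bar\beta/\bar\alpha)\Pi_\perp}_2=\max\{|\Aset|,\bar\beta/\bar\alpha\}$. The last equality holds because $\Pi$ and $\Pi_\perp$ are complementary orthogonal projections by~\Cref{lem:mean_projection_matrix_form}. It then suffices to prove $\E[\norm{w_k}_2^2\mid\mathcal F_k]\le s_0^2+s_1^2\norm{Q_k-Q^\star}_2^2$ and to set $\sigma_i:=\kappa s_i$.

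For $w_k=\zeta_k-\E[\zeta_k\mid\mathcal F_k]$, the conditional variance bound gives
\[
\E[\norm{w_k}_2^2\mid\mathcal F_k]\le\E[\norm{\zeta_k}_2^2\mid\mathcal F_k]=\E[\delta_k^2\mid\mathcal F_k],
\]
since $\norm{e_{a_k}\otimes e_{s_k}}_2=1$. Next I center $\delta_k$ at $Q^\star$. Write
\[
\delta_k=\delta_k^\star+\gamma\bigl(\max_cQ_k(s'_k,c)-\max_cQ^\star(s'_k,c)\bigr)-(Q_k-Q^\star)(s_k,a_k),
\]
where $\delta_k^\star:=r_{k+1}+\gamma\max_cQ^\star(s'_k,c)-Q^\star(s_k,a_k)$. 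The max difference is bounded in absolute value by $\norm{Q_k-Q^\star}_\infty\le\norm{Q_k-Q^\star}_2$. The same elementary inequality is used in~\Cref{lem:bellman_difference_selector}. Hence
\[
|\delta_k|\le|\delta_k^\star|+(1+\gamma)\norm{Q_k-Q^\star}_2.
\]
Moreover $|\delta_k^\star|\le R_{\max}+(1+\gamma)\norm{Q^\star}_\infty$ deterministically, using \eqref{eq:reward_bound}.

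Squaring with $(x+y)^2\le2x^2+2y^2$ gives
\[
\E[\delta_k^2\mid\mathcal F_k]\le2\bigl(R_{\max}+(1+\gamma)\norm{Q^\star}_\infty\bigr)^2+2(1+\gamma)^2\norm{Q_k-Q^\star}_2^2.
\]
This yields the explicit constants $\sigma_0^2=2\kappa^2(R_{\max}+(1+\gamma)\norm{Q^\star}_\infty)^2$ and $\sigma_1^2=2\kappa^2(1+\gamma)^2$. They depend only on the MDP, $C^{VA}/\alpha$ and $Q^\star$, as claimed.

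The only delicate point is measurability in the conditioning step. The quantity $Q_k$ is $\mathcal F_k$-measurable, and the randomness of $(s_k,a_k,s'_k,r_{k+1})$ is independent of $\mathcal F_k$ under the i.i.d. model. The pathwise bound above therefore passes directly through the conditional expectation. No other step is an obstacle; the bound is pathwise, so no moment assumptions beyond finiteness of the MDP are needed.
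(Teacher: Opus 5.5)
Your proof is correct and follows essentially the paper's route: a pathwise bound on $|\delta_k|=\norm{\zeta_k}_2$, a transfer to $w_k$, and then multiplication by the $\alpha$-independent matrix $C^{VA}/\alpha$. The differences are refinements that tighten the constants (factor $2$ instead of the paper's $8$):
\begin{itemize}
\item you use the conditional-variance inequality $\E[\norm{w_k}_2^2\mid\mathcal F_k]\le\E[\norm{\zeta_k}_2^2\mid\mathcal F_k]$ instead of the triangle inequality plus Jensen;
\item you center $\delta_k$ at $Q^\star$ before bounding, rather than bounding by $\norm{Q_k}_2$ and then splitting;
\item you give $\norm{C^{VA}/\alpha}_2=\max\{|\Aset|,\bar\beta/\bar\alpha\}$ explicitly, which is an equality for $|\Aset|\ge2$ and only an upper bound when $|\Aset|=1$, but that is all you need.
\end{itemize}
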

\begin{proof}
Using the reward bound in \eqref{eq:reward_bound}, for every realization of the sample at time $k$,
\[
    |\delta_k|\le R_{\max}+\gamma\max_{c\in\Aset}|Q_k(s'_k,c)|+|Q_k(s_k,a_k)|
    \le R_{\max}+(1+\gamma)\norm{Q_k}_2.
\]
Since $\zeta_k=(e_{a_k}\otimes e_{s_k})\delta_k$, the same bound holds for $\norm{\zeta_k}_2$. Also, $D(F(Q_k)-Q_k)=\E[\zeta_k\mid\mathcal F_k]$, so Jensen's inequality gives
\[
    \norm{D(F(Q_k)-Q_k)}_2\le R_{\max}+(1+\gamma)\norm{Q_k}_2.
\]
Using \eqref{eq:centered_va_iid_noise},
\[
\begin{aligned}
    \norm{w_k}_2
    &=\norm{\zeta_k-D(F(Q_k)-Q_k)}_2\\
    &\le \norm{\zeta_k}_2+\norm{D(F(Q_k)-Q_k)}_2\\
    &\le2\{R_{\max}+(1+\gamma)\norm{Q_k}_2\}.
\end{aligned}
\]
Moreover, $\norm{Q_k}_2\le\norm{Q^\star}_2+\norm{Q_k-Q^\star}_2$. Since $\xi_k=\alpha^{-1}C^{VA}w_k$,
\[
\begin{aligned}
    \norm{\xi_k}_2
    &\le \left\|\frac{C^{VA}}{\alpha}\right\|_2\norm{w_k}_2\\
    &\le 2\left\|\frac{C^{VA}}{\alpha}\right\|_2\{R_{\max}+(1+\gamma)\norm{Q_k}_2\}\\
    &\le 2\left\|\frac{C^{VA}}{\alpha}\right\|_2\{R_{\max}+(1+\gamma)\norm{Q^\star}_2\}
    +2\left\|\frac{C^{VA}}{\alpha}\right\|_2(1+\gamma)\norm{Q_k-Q^\star}_2.
\end{aligned}
\]
Squaring the preceding inequality and using $(u+v)^2\le2u^2+2v^2$ proves \eqref{eq:centered_va_iid_noise_growth}; for example, one may take
\[
    \sigma_0^2=8\left\|\frac{C^{VA}}{\alpha}\right\|_2^2\{R_{\max}+(1+\gamma)\norm{Q^\star}_2\}^2,
    \qquad
    \sigma_1^2=8\left\|\frac{C^{VA}}{\alpha}\right\|_2^2(1+\gamma)^2.
\]
Taking the conditional expectation preserves this deterministic upper bound because $Q_k-Q^\star$ is $\mathcal F_k$-measurable.
\end{proof}

\begin{lemma}\label{lem:geometric_kq_bound}
For every $q\in[0,1)$ and every integer $k\ge1$,
\begin{equation}\label{eq:geometric_kq_bound}
    kq^{k-1}\le\frac{1}{1-q}.
\end{equation}
\end{lemma}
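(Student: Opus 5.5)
The plan is to compare $kq^{k-1}$ with the finite geometric sum $\sum_{j=0}^{k-1}q^j$. Since $q\in[0,1)$, the map $j\mapsto q^j$ is nonincreasing. Every term in that sum therefore satisfies $q^j\ge q^{k-1}$ for $0\le j\le k-1$. Summing these $k$ inequalities gives
\[
    kq^{k-1}\le\sum_{j=0}^{k-1}q^j .
\]
It then remains to bound the finite geometric sum by the full series:
\[
    \sum_{j=0}^{k-1}q^j=\frac{1-q^k}{1-q}\le\frac{1}{1-q},
\]
where the inequality uses $q^k\ge0$. Chaining the two displays proves \eqref{eq:geometric_kq_bound}.

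A few edge cases need a check. When $q=0$ we use the convention $0^0=1$. For $k=1$ this gives $1\le1$, and for $k\ge2$ the left-hand side is $0$, so the argument is consistent. When $k=1$ in general the claim reads $1\le1/(1-q)$, which is immediate.

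There is no real obstacle here. The only point needing care is the direction of monotonicity, $q^j\ge q^{k-1}$ for $j\le k-1$, which holds precisely because $0\le q<1$.

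This lemma will presumably be used with $q=\beta_\alpha=1-c\alpha/2$ to turn the transient term $\alpha\gamma LK^2k\beta_\alpha^{k-1}$ in \Cref{thm:centered_va_iid_small_alpha_error} into the uniform bound $2\gamma LK^2/c$.
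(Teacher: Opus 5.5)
Your proof is correct and is the paper's argument: the termwise comparison $q^j\ge q^{k-1}$ for $0\le j\le k-1$, followed by bounding the geometric sum by $1/(1-q)$. The only difference is cosmetic: you use the closed form $(1-q^k)/(1-q)$ where the paper uses the infinite series. Your guess about where the lemma is applied is slightly off, though this does not affect the proof: the paper invokes it with $q=1-c\alpha$ in \Cref{lem:centered_va_iid_ms_bound}, not in the theorem.
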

\begin{proof}
Since $0\le q<1$, one has $q^j\ge q^{k-1}$ for $j=0,1,\ldots,k-1$. Hence
\[
    kq^{k-1}\le\sum_{j=0}^{k-1}q^j\le\sum_{j=0}^{\infty}q^j=\frac{1}{1-q}.
\]
\end{proof}

\begin{lemma}\label{lem:centered_va_iid_ms_bound}
Suppose that the fixed-ratio gains in \eqref{eq:centered_va_iid_fixed_ratio} are used and that the JSR condition \eqref{eq:centered_va_iid_jsr_condition} holds. By~\Cref{lem:centered_va_iid_product_bound}, choose constants $c>0$ and $K<\infty$, independent of $\alpha$, such that $c\bar\alpha<1$ and, for every $0<\alpha\le\bar\alpha$, every $\ell\ge0$, and every $M_0,\ldots,M_{\ell-1}\in\co(\mathcal A^{VA}(\alpha,\beta))$ corresponding to these gains,
\[
    \norm{M_{\ell-1}\cdots M_0}_2\le K(1-c\alpha)^\ell.
\]
Let $L$, $\sigma_0$, and $\sigma_1$ be the constants from~\Cref{lem:centered_va_iid_mode_difference,lem:centered_va_iid_noise_growth}. Define
\[
    C_0:=1+4K^2+4\left(\frac{\gamma L K^2}{c}\right)^2
\]
and
\[
    C_1:=4\left(\frac{K^2}{c}+\frac{\gamma^2L^2K^4}{c^3}\right).
\]
Set
\[
    \alpha_0:=\min\left\{\bar\alpha,\frac{1}{2C_1\sigma_1^2}\right\},
\]
where the second term is interpreted as $+\infty$ when $\sigma_1=0$. Then, for every initial vector $Q_0\in\R^n$, every $0<\alpha\le\alpha_0$, and every $k\ge0$, the recursion in \eqref{eq:centered_va_iid_error_system} satisfies
\begin{equation}\label{eq:centered_va_iid_ms_bound}
    \sup_{0\le t\le k}\E[\norm{Q_t-Q^\star}_2^2]
    \le 2C_0\norm{Q_0-Q^\star}_2^2+2C_1\alpha\sigma_0^2.
\end{equation}
\end{lemma}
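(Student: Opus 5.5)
The plan is to reuse the four-term decomposition from the proof of~\Cref{thm:centered_va_iid_small_alpha_error}, but to estimate second moments instead of first moments. The resulting inequality is self-referential, and I would close it with the smallness condition $\alpha\le\alpha_0$.

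\textbf{Step 1 (finiteness).} Fix $k$ and set $S_k:=\sup_{0\le t\le k}\E[\norm{Q_t-Q^\star}_2^2]$. I first check that $S_k<\infty$ by induction on $t$. By~\Cref{lem:centered_va_iid_error_system}, $\norm{Q_{t+1}-Q^\star}_2\le \norm{A^{VA}_{\mu_t}}_2\norm{Q_t-Q^\star}_2+\alpha\norm{\xi_t}_2$. The matrices $A^{VA}_{\mu}$ are uniformly bounded because the stochastic-policy set is compact. \Cref{lem:centered_va_iid_noise_growth} bounds the second moment of $\xi_t$ affinely in $\E\norm{Q_t-Q^\star}_2^2$. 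Hence every finite-horizon second moment is finite, and the absorption in Step 3 is legitimate.

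\textbf{Step 2 (decomposition and term-by-term bounds).} For each $t\le k$, write $Q_t-Q^\star=\bar y_t+\tilde y_t+u_t+v_t$ exactly as in~\eqref{eq:main_iid_total_decomp}, using the reference filter~\eqref{eq:main_iid_reference_filter} and the splits~\eqref{eq:main_iid_reference_split},~\eqref{eq:main_iid_uv_recursions}. I use the rate $q:=1-c\alpha$ directly from~\Cref{lem:centered_va_iid_product_bound}, not $\beta_\alpha$. Then $(a+b+c+d)^2\le4(a^2+b^2+c^2+d^2)$ gives four terms:
\begin{itemize}
\item \emph{Deterministic filter.} $\norm{\bar y_t}_2^2\le K^2\norm{Q_0-Q^\star}_2^2$.
\item \emph{Switching correction of $\bar y$.} Unrolling as in~\eqref{eq:main_iid_u_bound} with~\Cref{lem:centered_va_iid_mode_difference} gives $\norm{u_t}_2\le\alpha\gamma LK^2\,t q^{t-1}\norm{Q_0-Q^\star}_2$. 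By~\Cref{lem:geometric_kq_bound}, $tq^{t-1}\le 1/(c\alpha)$, so $\norm{u_t}_2^2\le(\gamma LK^2/c)^2\norm{Q_0-Q^\star}_2^2$.
\item \emph{Martingale part.} Orthogonality of the martingale increments, as in~\eqref{eq:main_iid_tildey_diagonal}, together with $1-q^2\ge c\alpha$ and $\E\norm{\xi_j}_2^2\le\sigma_0^2+\sigma_1^2S_k$, gives
\[
    \E\norm{\tilde y_t}_2^2\le\frac{\alpha K^2}{c}\,(\sigma_0^2+\sigma_1^2S_k).
\]
\item \emph{Switching correction of $\tilde y$.} For $v_t$, I apply Cauchy--Schwarz to the weighted sum $\sum_s q^{t-1-s}\norm{\tilde y_s}_2$, using $\sum_s q^{t-1-s}\le 1/(c\alpha)$ twice. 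This yields
\[
    \E\norm{v_t}_2^2\le(\alpha\gamma LK)^2\frac{1}{(c\alpha)^2}\,\frac{\alpha K^2}{c}(\sigma_0^2+\sigma_1^2S_k)=\frac{\alpha\gamma^2L^2K^4}{c^3}(\sigma_0^2+\sigma_1^2S_k).
\]
\end{itemize}

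\textbf{Step 3 (closing the recursion).} Summing the four bounds with the factor $4$, and using the extra $1$ in $C_0$ to cover $t=0$ trivially, gives for every $t\le k$
\[
    \E\norm{Q_t-Q^\star}_2^2\le C_0\norm{Q_0-Q^\star}_2^2+C_1\alpha\sigma_0^2+C_1\alpha\sigma_1^2S_k.
\]
Taking the supremum over $t\le k$ yields the same inequality with $S_k$ on the left. Since $\alpha\le\alpha_0$, we have $C_1\alpha\sigma_1^2\le1/2$, so the last term can be absorbed, because $S_k<\infty$. This gives $S_k\le 2C_0\norm{Q_0-Q^\star}_2^2+2C_1\alpha\sigma_0^2$, which is~\eqref{eq:centered_va_iid_ms_bound}.

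\textbf{Main obstacle.} The delicate step is the $v_t$ term. It mixes random switching matrices $A^{VA}_{\mu_s}$ with the martingale filter, so orthogonality of increments cannot be used there. I would handle it by pathwise product bounds plus Cauchy--Schwarz, which costs the $1/c^3$ factor. One also has to make sure the noise moments are bounded by the running supremum $S_k$ and not by a bound on $Q_t$ alone, since that is what makes the absorption argument valid.
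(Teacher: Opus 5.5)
Your proposal is correct and follows the paper's proof essentially step for step. It uses the same fixed-policy reference filter and four-term split $\bar y_t+\tilde y_t+u_t+v_t$, and the same bounds: the geometric lemma for $u_t$, orthogonality of the martingale increments for $\tilde y_t$, and weighted Cauchy--Schwarz for $v_t$. These produce exactly $C_0$ and $C_1$, and you close with the same absorption using $C_1\alpha\sigma_1^2\le 1/2$.

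The differences are minor. You bound the noise by the running supremum $S_k$ rather than by $M_{k-1}$, which is equivalent because $M_{k-1}\le M_k$. Your Step~1 also checks $S_k<\infty$ explicitly, which the paper's absorption step uses without comment.
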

\begin{proof}
Fix a stochastic policy $\bar\mu:\Sset\to\Delta_{|\Aset|}$. Let
\[
    y_{k+1}=A_{\bar\mu}^{VA}y_k+\alpha\xi_k,
    \qquad y_0=Q_0-Q^\star,
\]
and decompose $y_k=\bar y_k+\tilde y_k$, where
\[
    \bar y_{k+1}=A_{\bar\mu}^{VA}\bar y_k,
    \qquad \bar y_0=Q_0-Q^\star,
\]
\[
    \tilde y_{k+1}=A_{\bar\mu}^{VA}\tilde y_k+\alpha\xi_k,
    \qquad \tilde y_0=0.
\]
Let
\[
    e_k:=Q_k-Q^\star-y_k.
\]
Subtracting the fixed-policy filter from \eqref{eq:centered_va_iid_error_system} gives $e_0=0$ and
\[
\begin{aligned}
    e_{k+1}
    &=(Q_{k+1}-Q^\star)-y_{k+1}\\
    &=A_{\mu_k}^{VA}(Q_k-Q^\star)+\alpha\xi_k-\bigl(A_{\bar\mu}^{VA}y_k+\alpha\xi_k\bigr)\\
    &=A_{\mu_k}^{VA}e_k+(A_{\mu_k}^{VA}-A_{\bar\mu}^{VA})y_k\\
    &=A_{\mu_k}^{VA}e_k+(A_{\mu_k}^{VA}-A_{\bar\mu}^{VA})\bar y_k
      +(A_{\mu_k}^{VA}-A_{\bar\mu}^{VA})\tilde y_k.
\end{aligned}
\]
Accordingly, write $e_k=u_k+v_k$, where the two parts are generated by the two forcing terms:
\[
    u_{k+1}=A_{\mu_k}^{VA}u_k+(A_{\mu_k}^{VA}-A_{\bar\mu}^{VA})\bar y_k,
    \qquad u_0=0,
\]
\[
    v_{k+1}=A_{\mu_k}^{VA}v_k+(A_{\mu_k}^{VA}-A_{\bar\mu}^{VA})\tilde y_k,
    \qquad v_0=0.
\]
Since $e_k=Q_k-Q^\star-y_k$ and $y_k=\bar y_k+\tilde y_k$, adding $y_k$ to both sides yields
\[
\begin{aligned}
    Q_k-Q^\star
    &=y_k+e_k\\
    &=\bar y_k+\tilde y_k+u_k+v_k.
\end{aligned}
\]
Since $A_{\bar\mu}^{VA}\in\co(\mathcal A^{VA}(\alpha,\beta))$, the product bound in \eqref{eq:centered_va_iid_product_bound} gives
\[
\begin{aligned}
    \bar y_k
    &=(A_{\bar\mu}^{VA})^k(Q_0-Q^\star),\\
    \norm{\bar y_k}_2
    &\le K(1-c\alpha)^k\norm{Q_0-Q^\star}_2
    \le K\norm{Q_0-Q^\star}_2.
\end{aligned}
\]
The stochastic part of the fixed-policy filter has the explicit form
\[
    \tilde y_k=\alpha\sum_{t=0}^{k-1}(A_{\bar\mu}^{VA})^{k-1-t}\xi_t.
\]
Expanding the square gives
\[
    \E[\norm{\tilde y_k}_2^2]
    =\alpha^2\sum_{s=0}^{k-1}\sum_{t=0}^{k-1}
    \E\!\bigl[\xi_s^\top\{(A_{\bar\mu}^{VA})^{k-1-s}\}^\top(A_{\bar\mu}^{VA})^{k-1-t}\xi_t\bigr].
\]
If $s<t$, then the factor multiplying $\xi_t$ is $\mathcal F_t$-measurable. Therefore
\[
\begin{aligned}
&\E\!\bigl[\xi_s^\top\{(A_{\bar\mu}^{VA})^{k-1-s}\}^\top(A_{\bar\mu}^{VA})^{k-1-t}\xi_t\bigr]\\
&\quad=\E\!\left[\xi_s^\top\{(A_{\bar\mu}^{VA})^{k-1-s}\}^\top(A_{\bar\mu}^{VA})^{k-1-t}\E[\xi_t\mid\mathcal F_t]\right]=0.
\end{aligned}
\]
The case $t<s$ is identical after conditioning on $\mathcal F_s$. Thus the double sum reduces to its diagonal part:
\[
\begin{aligned}
    \E[\norm{\tilde y_k}_2^2]
    &=\alpha^2\sum_{t=0}^{k-1}
      \E\!\left[\xi_t^\top\{(A_{\bar\mu}^{VA})^{k-1-t}\}^\top
      (A_{\bar\mu}^{VA})^{k-1-t}\xi_t\right]\\
    &=\alpha^2\sum_{t=0}^{k-1}
      \E\!\left[\norm{(A_{\bar\mu}^{VA})^{k-1-t}\xi_t}_2^2\right].
\end{aligned}
\]
With
\[
    M_k:=\sup_{0\le t\le k}\E[\norm{Q_t-Q^\star}_2^2],
\]
Equations \eqref{eq:centered_va_iid_product_bound} and \eqref{eq:centered_va_iid_noise_growth} imply, for $k\ge1$,
\[
\begin{aligned}
    \E[\norm{\tilde y_k}_2^2]
    &=\alpha^2\sum_{t=0}^{k-1}
      \E\!\left[\norm{(A_{\bar\mu}^{VA})^{k-1-t}\xi_t}_2^2\right]\\
    &\le \alpha^2K^2\sum_{t=0}^{k-1}(1-c\alpha)^{2(k-1-t)}\E[\norm{\xi_t}_2^2]\\
    &\le \alpha^2K^2\sum_{t=0}^{k-1}(1-c\alpha)^{2(k-1-t)}\{\sigma_0^2+\sigma_1^2M_{k-1}\}\\
    &\le \frac{\alpha K^2}{c}\{\sigma_0^2+\sigma_1^2M_{k-1}\}.
\end{aligned}
\]
The last step follows by the change of variables $j=k-1-t$ and the geometric bound
\[
    \sum_{t=0}^{k-1}(1-c\alpha)^{2(k-1-t)}
    =\sum_{j=0}^{k-1}(1-c\alpha)^{2j}
    \le \frac{1}{1-(1-c\alpha)^2}\le\frac{1}{c\alpha},
    \qquad 0<c\alpha<1.
\]
For $k\ge1$, unrolling the recursion for $u_k$ gives
\[
    u_k=\sum_{t=0}^{k-1}A_{\mu_{k-1}}^{VA}\cdots A_{\mu_{t+1}}^{VA}(A_{\mu_t}^{VA}-A_{\bar\mu}^{VA})\bar y_t,
\]
where the product is the identity when $t=k-1$. Hence, using \eqref{eq:centered_va_iid_product_bound} and \eqref{eq:centered_va_iid_mode_difference}, each summand satisfies
\[
\begin{aligned}
&\norm{A_{\mu_{k-1}}^{VA}\cdots A_{\mu_{t+1}}^{VA}(A_{\mu_t}^{VA}-A_{\bar\mu}^{VA})\bar y_t}_2\\
&\quad\le \norm{A_{\mu_{k-1}}^{VA}\cdots A_{\mu_{t+1}}^{VA}}_2
      \norm{A_{\mu_t}^{VA}-A_{\bar\mu}^{VA}}_2\norm{\bar y_t}_2\\
&\quad\le K(1-c\alpha)^{k-1-t}\,\alpha\gamma L\,\norm{\bar y_t}_2\\
&\quad\le K(1-c\alpha)^{k-1-t}\,\alpha\gamma L\,K(1-c\alpha)^t\norm{Q_0-Q^\star}_2.
\end{aligned}
\]
Summing over $t=0,\ldots,k-1$ gives
\[
\begin{aligned}
    \norm{u_k}_2
    &\le \sum_{t=0}^{k-1}K(1-c\alpha)^{k-1-t}\alpha\gamma L K(1-c\alpha)^t\norm{Q_0-Q^\star}_2\\
    &=\alpha\gamma LK^2\sum_{t=0}^{k-1}(1-c\alpha)^{k-1}\norm{Q_0-Q^\star}_2\\
    &=\alpha\gamma LK^2 k(1-c\alpha)^{k-1}\norm{Q_0-Q^\star}_2\\
    &\le \alpha\gamma LK^2\frac{1}{c\alpha}\norm{Q_0-Q^\star}_2\\
    &=\frac{\gamma LK^2}{c}\norm{Q_0-Q^\star}_2.
\end{aligned}
\]
The last inequality follows from \Cref{lem:geometric_kq_bound} with $q=1-c\alpha$, because $0<c\alpha<1$. Similarly,
\[
    v_k=\sum_{t=0}^{k-1}A_{\mu_{k-1}}^{VA}\cdots A_{\mu_{t+1}}^{VA}(A_{\mu_t}^{VA}-A_{\bar\mu}^{VA})\tilde y_t,
\]
and each summand satisfies
\[
\begin{aligned}
&\norm{A_{\mu_{k-1}}^{VA}\cdots A_{\mu_{t+1}}^{VA}(A_{\mu_t}^{VA}-A_{\bar\mu}^{VA})\tilde y_t}_2\\
&\quad\le K(1-c\alpha)^{k-1-t}\alpha\gamma L\norm{\tilde y_t}_2.
\end{aligned}
\]
Therefore,
\[
\begin{aligned}
    \norm{v_k}_2
    &\le \sum_{t=0}^{k-1}K(1-c\alpha)^{k-1-t}\alpha\gamma L\norm{\tilde y_t}_2\\
    &=\alpha\gamma LK\sum_{t=0}^{k-1}(1-c\alpha)^{k-1-t}\norm{\tilde y_t}_2.
\end{aligned}
\]
Using the weighted Cauchy inequality,
\[
    \left(\sum_{t=0}^{k-1}q_t\norm{\tilde y_t}_2\right)^2
    \le\left(\sum_{t=0}^{k-1}q_t\right)\left(\sum_{t=0}^{k-1}q_t\norm{\tilde y_t}_2^2\right),
    \qquad q_t:=(1-c\alpha)^{k-1-t},
\]
and the preceding bound on $\E[\norm{\tilde y_t}_2^2]$, we obtain
\[
\begin{aligned}
    \E[\norm{v_k}_2^2]
    &\le \alpha^2\gamma^2L^2K^2\left(\sum_{t=0}^{k-1}q_t\right)\left(\sum_{t=0}^{k-1}q_t\E[\norm{\tilde y_t}_2^2]\right)\\
    &\le \alpha^2\gamma^2L^2K^2\left(\frac{1}{c\alpha}\right)^2\frac{\alpha K^2}{c}\{\sigma_0^2+\sigma_1^2M_{k-1}\}\\
    &=\frac{\alpha\gamma^2L^2K^4}{c^3}\{\sigma_0^2+\sigma_1^2M_{k-1}\}.
\end{aligned}
\]
Using $\norm{z_1+z_2+z_3+z_4}_2^2\le4\sum_{j=1}^4\norm{z_j}_2^2$ with $z_1=\bar y_k$, $z_2=\tilde y_k$, $z_3=u_k$, and $z_4=v_k$, we have, for every $1\le r\le k$,
\[
\begin{aligned}
    \E[\norm{Q_r-Q^\star}_2^2]
    &\le4\norm{\bar y_r}_2^2+4\E[\norm{\tilde y_r}_2^2]
      +4\norm{u_r}_2^2+4\E[\norm{v_r}_2^2]\\
    &\le4K^2\norm{Q_0-Q^\star}_2^2
      +4\left(\frac{\gamma LK^2}{c}\right)^2\norm{Q_0-Q^\star}_2^2\\
    &\quad+4\alpha\left(\frac{K^2}{c}+\frac{\gamma^2L^2K^4}{c^3}\right)
      \{\sigma_0^2+\sigma_1^2M_{k-1}\}.
\end{aligned}
\]
The case $r=0$ contributes $\norm{Q_0-Q^\star}_2^2$. Taking the supremum over $0\le r\le k$ and using the definitions of $C_0$ and $C_1$ gives
\[
\begin{aligned}
    M_k
    &\le \left\{1+4K^2+4\left(\frac{\gamma LK^2}{c}\right)^2\right\}\norm{Q_0-Q^\star}_2^2\\
    &\quad+4\alpha\left(\frac{K^2}{c}+\frac{\gamma^2L^2K^4}{c^3}\right)
      \{\sigma_0^2+\sigma_1^2M_{k-1}\}\\
    &= C_0\norm{Q_0-Q^\star}_2^2+C_1\alpha\{\sigma_0^2+\sigma_1^2M_{k-1}\}.
\end{aligned}
\]
For $0<\alpha\le\alpha_0$, one has $C_1\alpha\sigma_1^2\le1/2$ and $M_{k-1}\le M_k$. Hence
\[
    M_k\le C_0\norm{Q_0-Q^\star}_2^2+C_1\alpha\sigma_0^2+\frac{1}{2}M_k,
\]
which implies \eqref{eq:centered_va_iid_ms_bound}. For $k=0$, the same bound holds because $M_0=\norm{Q_0-Q^\star}_2^2$ and $C_0\ge1$.
\end{proof}


\begin{thebibliography}{19}

\bibitem{rota1960note}
Gian-Carlo Rota and W. Gilbert Strang.
\newblock A note on the joint spectral radius.
\newblock \emph{Indagationes Mathematicae}, 22:379--381, 1960; also in \emph{Proceedings of the Koninklijke Nederlandse Akademie van Wetenschappen, Series A}, 63:379--381, 1960.
\newblock \href{https://doi.org/10.1016/S1385-7258(60)50046-1}{doi:10.1016/S1385-7258(60)50046-1}.

\bibitem{jungers2009joint}
Rapha\"el M. Jungers.
\newblock \emph{The Joint Spectral Radius: Theory and Applications}.
\newblock Lecture Notes in Control and Information Sciences, volume 385. Springer, Berlin and Heidelberg, 2009.
\newblock \href{https://doi.org/10.1007/978-3-540-95980-9}{doi:10.1007/978-3-540-95980-9}.

\bibitem{liberzon2003switching}
Daniel Liberzon.
\newblock \emph{Switching in Systems and Control}.
\newblock Systems \& Control: Foundations \& Applications. Birkh\"auser Boston, Boston, MA, 2003.
\newblock \href{https://doi.org/10.1007/978-1-4612-0017-8}{doi:10.1007/978-1-4612-0017-8}.

\bibitem{lin2009stability}
Hai Lin and Panos J. Antsaklis.
\newblock Stability and stabilizability of switched linear systems: A survey of recent results.
\newblock \emph{IEEE Transactions on Automatic Control}, 54(2):308--322, 2009.
\newblock \href{https://doi.org/10.1109/TAC.2008.2012009}{doi:10.1109/TAC.2008.2012009}.

\bibitem{shorten2007stability}
Robert Shorten, Fabian Wirth, Oliver Mason, Kai Wulff, and Christopher King.
\newblock Stability criteria for switched and hybrid systems.
\newblock \emph{SIAM Review}, 49(4):545--592, 2007.
\newblock \href{https://doi.org/10.1137/05063516X}{doi:10.1137/05063516X}.

\bibitem{hushenzhang2010generating}
Jianghai Hu, Jinglai Shen, and Wei Zhang.
\newblock Generating functions of switched linear systems: Analysis, computation, and stability applications.
\newblock \emph{IEEE Transactions on Automatic Control}, 56(5):1059--1074, 2011.
\newblock \href{https://doi.org/10.1109/TAC.2010.2067590}{doi:10.1109/TAC.2010.2067590}.

\bibitem{watkins1992q}
Christopher J. C. H. Watkins and Peter Dayan.
\newblock Q-learning.
\newblock \emph{Machine Learning}, 8:279--292, 1992.
\newblock \href{https://doi.org/10.1007/BF00992698}{doi:10.1007/BF00992698}.

\bibitem{puterman1994markov}
Martin L. Puterman.
\newblock \emph{Markov Decision Processes: Discrete Stochastic Dynamic Programming}.
\newblock Wiley Series in Probability and Statistics. Wiley, New York, 1994.
\newblock \href{https://doi.org/10.1002/9780470316887}{doi:10.1002/9780470316887}.

\bibitem{bertsekas1996neuro}
Dimitri P. Bertsekas and John N. Tsitsiklis.
\newblock \emph{Neuro-Dynamic Programming}.
\newblock Athena Scientific, Belmont, MA, 1996.

\bibitem{sutton1998reinforcement}
Richard S. Sutton and Andrew G. Barto.
\newblock \emph{Reinforcement Learning: An Introduction}.
\newblock Second edition. MIT Press, Cambridge, MA, 2018.

\bibitem{chen2024lyapunov}
Zaiwei Chen, Siva T. Maguluri, Sanjay Shakkottai, and Karthikeyan Shanmugam.
\newblock A Lyapunov theory for finite-sample guarantees of Markovian stochastic approximation.
\newblock \emph{Operations Research}, 72(4):1352--1367, 2024. Published online October 6, 2023.
\newblock \href{https://doi.org/10.1287/opre.2022.0249}{doi:10.1287/opre.2022.0249}.

\bibitem{lee2026lyapunovcertified}
Donghwan Lee.
\newblock Lyapunov-Certified Direct Switching Theory for Q-Learning.
\newblock arXiv preprint arXiv:2604.19569v4, 2026.
\newblock \href{https://doi.org/10.48550/arXiv.2604.19569}{doi:10.48550/arXiv.2604.19569}.

\bibitem{daley2025analysis}
Brett Daley, Prabhat Nagarajan, Martha White, and Marlos C. Machado.
\newblock An analysis of action-value temporal-difference methods that learn state values.
\newblock \emph{Reinforcement Learning Journal}, 6:2619--2636, 2025. Presented at the Reinforcement Learning Conference, Edmonton, Alberta, Canada, August 5--9, 2025.
\newblock \url{https://rlj.cs.umass.edu/2025/papers/Paper309.html}.

\bibitem{mnih2015humanlevel}
Volodymyr Mnih, Koray Kavukcuoglu, David Silver, Andrei A. Rusu, Joel Veness, Marc G. Bellemare, Alex Graves, Martin Riedmiller, Andreas K. Fidjeland, Georg Ostrovski, Stig Petersen, Charles Beattie, Amir Sadik, Ioannis Antonoglou, Helen King, Dharshan Kumaran, Daan Wierstra, Shane Legg, and Demis Hassabis.
\newblock Human-level control through deep reinforcement learning.
\newblock \emph{Nature}, 518(7540):529--533, 2015.
\newblock \href{https://doi.org/10.1038/nature14236}{doi:10.1038/nature14236}.

\bibitem{wang2016dueling}
Ziyu Wang, Tom Schaul, Matteo Hessel, Hado van Hasselt, Marc Lanctot, and Nando de Freitas.
\newblock Dueling network architectures for deep reinforcement learning.
\newblock In \emph{Proceedings of the 33rd International Conference on Machine Learning}, Proceedings of Machine Learning Research, volume 48, pages 1995--2003. PMLR, 2016.
\newblock \url{https://proceedings.mlr.press/v48/wangf16.html}.


\bibitem{baird1994advantage}
Leemon C. Baird III.
\newblock Reinforcement learning in continuous time: Advantage updating.
\newblock In \emph{Proceedings of 1994 IEEE International Conference on Neural Networks (ICNN'94)}, volume 4, pages 2448--2453. IEEE, 1994.
\newblock \href{https://doi.org/10.1109/ICNN.1994.374604}{doi:10.1109/ICNN.1994.374604}.

\bibitem{wiering2005qvlambda}
Marco A. Wiering and Hado van Hasselt.
\newblock Two novel on-policy reinforcement learning algorithms based on TD($\lambda$)-methods.
\newblock In \emph{Proceedings of the IEEE International Symposium on Approximate Dynamic Programming and Reinforcement Learning (ADPRL)}, pages 280--287, 2007.
\newblock \href{https://doi.org/10.1109/ADPRL.2007.368200}{doi:10.1109/ADPRL.2007.368200}.

\bibitem{wiering2009qvfamily}
Marco A. Wiering and Hado van Hasselt.
\newblock The QV family compared to other reinforcement learning algorithms.
\newblock In \emph{2009 IEEE Symposium on Adaptive Dynamic Programming and Reinforcement Learning}, pages 101--108. IEEE, 2009.
\newblock \href{https://doi.org/10.1109/ADPRL.2009.4927532}{doi:10.1109/ADPRL.2009.4927532}.

\bibitem{tang2023valearning}
Yunhao Tang, R\'emi Munos, Mark Rowland, and Michal Valko.
\newblock VA-learning as a more efficient alternative to Q-learning.
\newblock In \emph{Proceedings of the 40th International Conference on Machine Learning}, Proceedings of Machine Learning Research, volume 202, pages 33739--33757. PMLR, 2023.
\newblock \url{https://proceedings.mlr.press/v202/tang23h.html}.

\end{thebibliography}
\end{document}